\documentclass[lettersize,journal]{IEEEtran}
\usepackage{amsmath,amsfonts}
\usepackage{algorithmic}
\usepackage{algorithm}
\usepackage{array}
\usepackage[caption=false,font=normalsize,labelfont=sf,textfont=sf]{subfig}
\usepackage{textcomp}
\usepackage{stfloats}
\usepackage{url}
\usepackage{verbatim}
\usepackage{graphicx}
\usepackage{cite}

\usepackage{booktabs}       
\usepackage{amsfonts}       
\usepackage{nicefrac}       
\usepackage{microtype}      
\usepackage{xcolor}         
\usepackage{amsmath}
\usepackage{amssymb}
\usepackage{mathtools}
\usepackage{amsthm}
\newtheorem{definition}{Definition}
\newtheorem{lemma}{Lemma}
\newtheorem{theorem}{Theorem}

\usepackage{multirow}
\usepackage[table]{xcolor}
\usepackage{color}
\definecolor{mygreen}{RGB}{146, 199, 113} 
\usepackage{graphicx}
\usepackage{adjustbox}
\usepackage{array}
\usepackage{booktabs}
\usepackage{dsfont}
\usepackage{makecell}
\usepackage{wrapfig}
\usepackage{hyperref}
\usepackage{mathrsfs}
\usepackage{xspace}
\usepackage{pifont}

\def \waveicon {\includegraphics[width=1.6ex]{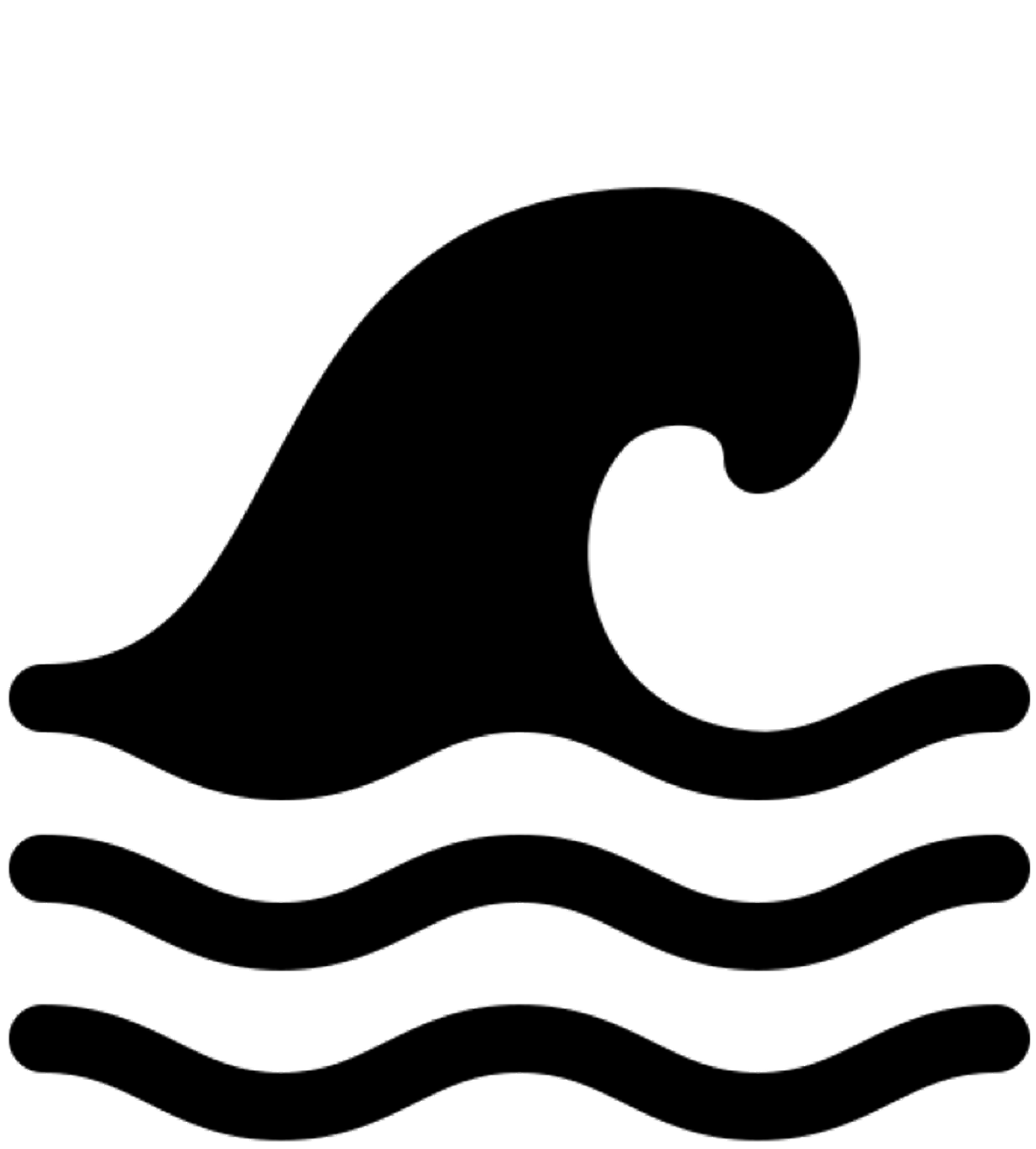}\xspace}

\begin{document}

\title{Constraint-based Pre-training: From Structured Constraints to Scalable Model Initialization}

\author{Fu Feng,\; Yucheng Xie,\; Ruixiao Shi,\; Jing Wang$^*$,\; Xin Geng$^*$,~\IEEEmembership{Senior Member,~IEEE}
\thanks{F. Feng, Y. Xie, R. Shi, J. Wang and X. Geng are with the School of Computer Science and Engineering, Southeast University, Nanjing, China and the Key Laboratory of New Generation Artificial Intelligence Technology and Its Interdisciplinary Applications (Southeast University), Ministry of Education, China 
        (E-mail: \{fufeng, xieyc, eric\_xiao, wangjing91, xgeng\}@seu.edu.cn).}
\thanks{$^*$ Corresponding authors}
}

\markboth{Journal of \LaTeX\ Class Files,~Vol.~14, No.~8, August~2021}%
{Shell \MakeLowercase{\textit{et al.}}: A Sample Article Using IEEEtran.cls for IEEE Journals}


\maketitle

\begin{abstract}
The pre-training and fine-tuning paradigm has become the dominant approach for model adaptation. 
However, conventional pre-training typically yields models at a fixed scale, whereas practical deployment often requires models of varying sizes, exposing its limitations when target model scales differ from those used during pre-training.
To address this, we propose an innovative \textit{constraint-based pre-training paradigm} that imposes structured constraints during pre-training to disentangle size-agnostic knowledge into \textit{reusable weight
templates}, while assigning size-specific adaptation to \textit{lightweight weight scalers}, thereby reformulating variable-sized model initialization as a multi-task adaptation problem.
Within this paradigm, we further introduce WeiT, which employs Kronecker-based constraints to regularize the pre-training process. Specifically, model parameters are represented as compositions of weight templates via concatenation and weighted aggregation, with adaptive connections governed by lightweight weight scalers whose parameters are learned from limited data. This design enables flexible and efficient construction of model weights across diverse downstream scales.
Extensive experiments demonstrate the efficiency and effectiveness of WeiT, achieving state-of-the-art performance in initializing models with varying depths and widths across a broad range of perception and embodied learning tasks, including  \textsc{Image Classification}, \textsc{Image Generation}, and \textsc{Embodied Control}. 
Moreover, its effectiveness generalizes to both Transformer-based and Convolution-based architectures, consistently enabling faster convergence and improved performance even under full training.
\end{abstract} 
\begin{IEEEkeywords}
Constraint-based Pre-training, Scalable Model Initialization, Knowledge Transfer, Weight Templates
\end{IEEEkeywords}

\section{Introduction}
\IEEEPARstart{F}{ine-tuning} pre-trained models has emerged as the dominant paradigm for adapting foundation models to downstream tasks~\cite{ding2023parameter, han2022survey, awais2025foundation}, particularly in data-scarce scenarios, where training modern architectures such as Vision Transformers (ViTs)~\cite{dosovitskiy2020image} \textit{from scratch} is often impractical due to their reliance on large-scale data and substantial computational cost.

However, in practical deployment, models are often \textit{subject to constraints} such as memory usage~\cite{han2015deep}, computational resources~\cite{tan2019efficientnet}, and response time~\cite{zhang2022minivit}, necessitating models of \textit{variable sizes}. 
In contrast, most off-the-shelf pre-trained models are typically available only in a limited set of fixed configurations (e.g., ViT-B with 12 layers~\cite{touvron2021training}). 
Consequently, target model sizes that \textit{fall outside these predefined configurations} still require re-pre-training on large-scale datasets prior to deployment, incurring substantial training costs.

\begin{figure}[t]
  \centering
  \includegraphics[width=\linewidth]{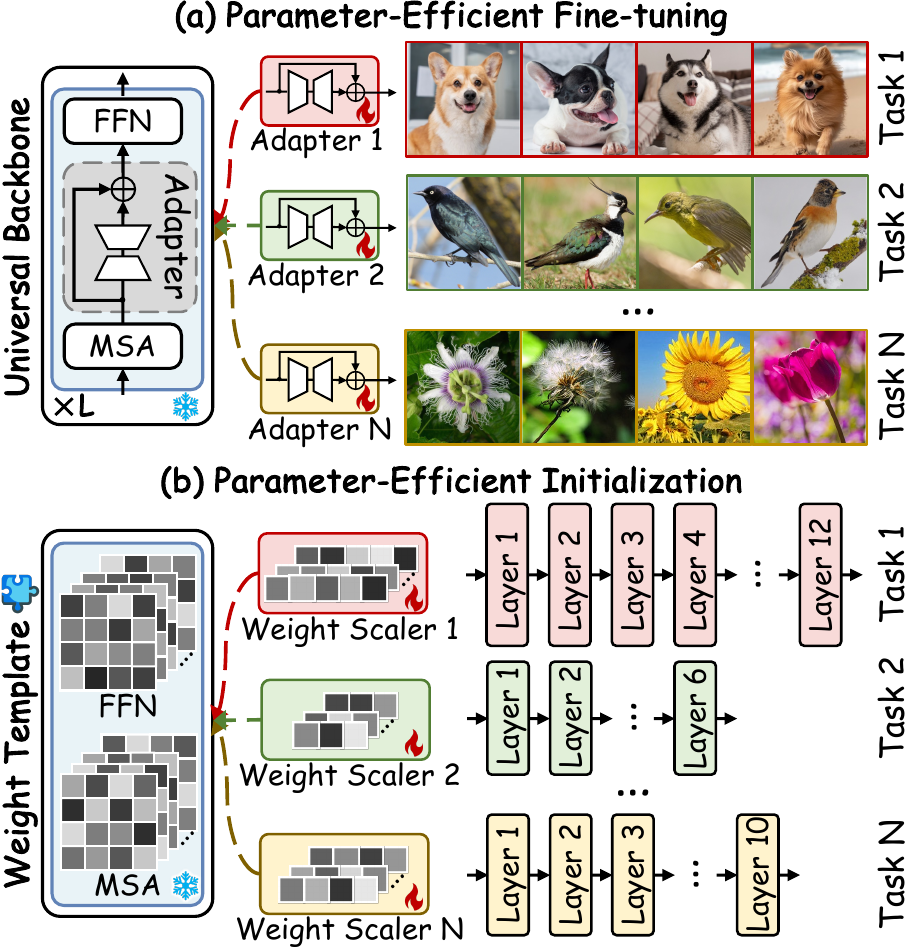}
  \vspace{-0.25in}
  \caption{(a)~Parameter-efficient fine-tuning for multi-task adaptation typically relies on a universal backbone encoding \textbf{\textit{task-agnostic}} knowledge, together with a small number of trainable adapters for \textbf{\textit{task-specific}} adaptation.
  (b)~WeiT reformulates variable-sized model initialization as a multi-task adaptation problem by treating each model size as a distinct task. It enables parameter-efficient initialization by leveraging shared weight templates that encapsulate \textbf{\textit{size-agnostic}} knowledge, together with a small number of trainable weight scalers for \textbf{\textit{size-specific}} adaptation across different model scales.}
  \label{fig:motivation}
  \vspace{-0.15in}
\end{figure}

Recent methods~\cite{wang2023learning, xu2023initializing, shen2026unified} explore scalable model initialization by leveraging pre-trained models for initializing target models with mismatched sizes.
These approaches typically adopt pruning-inspired strategies~\cite{cheng2024survey}, such as layer-wise pruning~\cite{fang2024isomorphic, kim2024bk} or parameter subsampling~\cite{xu2023initializing}, to accommodate smaller models, but may \textit{disrupt the structured knowledge} embedded in pre-trained models. 
Alternatively, distillation-based methods~\cite{wang2021knowledge, gou2021knowledge} enable more flexible knowledge transfer across model sizes, but often require \textit{repeated distillation} for each target configuration, incurring substantial training overhead and limiting their overall efficiency.

This limitation fundamentally stems from the fact that off-the-shelf pre-trained models are not inherently size-aware, as they are \textit{primarily designed for downstream task adaptation}, and do not account for variations in model scale, resulting in learned knowledge being tightly entangled with the specific pre-training configuration~\cite{wang2017growing, elazar2021measuring, wang2021cs}.
Consequently, while such models enable parameter-efficient fine-tuning (e.g., LoRA~\cite{hu2021lora}) for downstream tasks by leveraging a universal backbone~\cite{triantafillou2021learning} that encodes task-agnostic knowledge, as illustrated in Fig.~\ref{fig:motivation}a, they cannot provide scalable initialization for target models.

Thus, a natural question arises: \textit{whether size-agnostic knowledge can be disentangled and encapsulated during pre-training, thereby enabling \textbf{parameter-efficient initialization} for downstream models across scales.}
To this end, we rethink the pre-training process by explicitly incorporating the objective of scalable model initialization, and reformulate variable-sized model initialization as a multi-task adaptation problem (Fig.~\ref{fig:motivation}b) by treating each model size as an independent task.

Following this formulation, we introduce the \textbf{\textit{constraint-based pre-training paradigm}}, a general framework tailored for scalable model initialization.
Unlike conventional pre-training~\cite{ding2023parameter, han2022survey, awais2025foundation}, this paradigm imposes structured constraints~\cite{golub2013matrix, tucker1966some, brewer2003kronecker} on model parameters during optimization, effectively restricting the solution space to suppress size-specific variations and thereby isolating size-agnostic knowledge into compact \textbf{\textit{weight templates}}. 
These templates can then be efficiently reused under the same constraints to initialize downstream models of varying sizes.

Building upon this paradigm, we propose WeiT, which employs Kronecker-based constraints~\cite{brewer2003kronecker} to structure model parameters during pre-training.
Specifically, these constraints represent weight matrices as compositions of \textit{weight templates} via concatenation and weighted aggregation, with adaptive connections rules governed by lightweight \textit{weight scalers}.
Our prior work, WAVE~\cite{feng2024wave} (denoted as WeiT\waveicon), demonstrates the feasibility of learning separate weight templates for multi-head self-attention (MSA) and feed-forward network (FFN) layers.

Here, WeiT extends WeiT\waveicon by constructing \textit{unified weight templates} to enable parameter sharing across heterogeneous components.
Specifically, parameters from all layers and modules are reorganized and concatenated into a single weight matrix $\mathcal{W}$, which is then reconstructed via Kronecker-based constraints (Eq.~\eqref{equ:kro}), with $\mathcal{T}$ serving as the weight templates and $\mathcal{S}$ as lightweight scalers, yielding a compact yet flexible parameter representation.
Moreover, we introduce a \textit{Template Scaling Mechanism}, which applies dimension-wise dropout~\cite{cai2020once} to weight templates during pre-training, thereby enhancing their robustness and adaptability to varying model widths.

Constraint-based pre-training updates model parameters indirectly, with gradients applied to the weight templates and lightweight weight scalers, which are then used to reconstruct the full weight matrix via the Kronecker product.
This enables a \textit{\textbf{once-for-all}} pre-training paradigm, where downstream models are initialized by composing these shared templates in a scaler-driven manner, requiring only a small set of parameters (i.e., weight scalers) to adapt to target model sizes.
Consequently, parameter-efficient initialization is achieved by fixing the weight templates and optimizing only the lightweight scalers, enabling efficient scaling with limited data and negligible computational overhead (e.g., a few hundred gradient steps within a minute-level of wall-clock time).

Our main contributions are as follows:
\begin{itemize}
\item We propose the Constraint-based Pre-training paradigm, a novel  paradigm that explicitly incorporates the objective of scalable model initialization into the pre-training process, approaching variable-sized model initialization from a multi-task adaptation perspective.
\item We propose WeiT, a novel method that leverages Kronecker-based constraints to pre-train shared weight templates, together with lightweight weight scalers for parameter-efficient initialization of target models. 
Building upon WeiT\waveicon, WeiT further enhances template generality and width adaptability, enabling more flexible and efficient initialization across diverse model scales.
\item We introduce a comprehensive benchmark for scalable model initialization across \textsc{Image Classification}, \textsc{Image Generation}, and \textsc{Embodied Control}.
Extensive experiments show that WeiT achieves state-of-the-art performance across diverse tasks and model scales, including both Transformer- and Convolution-based architectures.
\end{itemize}

\section{Related Work}
Model initialization is crucial for convergence speed and final performance of neural networks~\cite{narkhede2022review, arpit2019initialize, huang2020improving}.
Traditional methods typically rely on handcrafted rules to initialize random parameters~\cite{glorot2010understanding, chen2021empirical}, while the emergence of pre-trained models has made fine-tuning the dominant paradigm~\cite{zoph2020rethinking}.
However, the fixed sizes of pre-trained models limit their flexibility, motivating methods that leverage them for scalable initialization of target models, which can be broadly divided into heuristic-driven and knowledge-driven approaches.

\subsubsection{Heuristic-driven Scalable Model Initialization}
\label{sec:heur}
Heuristic-driven approaches transfer parameters based on predefined rules or heuristics; for example, Mimetic Initialization~\cite{trockman2023mimetic} leverages parameter patterns identified from pre-trained models to initialize new ones.
Heur-LG~\cite{wang2022learngene} transfers layers with minimal gradient variation during continual model pre-training. 
Weight Selection~\cite{xu2023initializing} initializes smaller models by sampling parameters from pre-trained models at fixed intervals.
BoT~\cite{shen2026unified} employs Discrete Wavelet Transform (DWT)~\cite{mallat2002theory} and its inverse (IDWT) on weight matrices to enable weight resizing.
Despite their effectiveness, these methods operate directly at the parameter level for model resizing, often introducing structural mismatches that limit parameter flexibility and lead to suboptimal performance or even negative transfer.

\subsubsection{Knowledge-driven Scalable Model Initialization}
\label{sec:know}
Knowledge-driven approaches further refine pre-trained knowledge during cross-scale initialization to preserve, as much as possible, pre-trained model capabilities at the target scale.
Typical strategies include knowledge distillation~\cite{wang2021knowledge, gou2021knowledge}, which aligns outputs or intermediate representations between pre-trained and target models, and pruning-based methods~\cite{cheng2024survey}, which reduce pre-trained model size through structured parameter removal.
Although more effective, these methods incur higher and \textit{repeated computational overhead} for each target model size~\cite{fang2024isomorphic, kim2024bk, xiong2024distilling}, as conventional pre-training tightly couples knowledge to the original scale, making each adaptation inefficient and costly. 
Some approaches introduce simple structural priors, such as layer-wise sharing~\cite{lan2019albert, xia2024transformer, xia2024exploring}, during pre-training to facilitate scalable initialization across depth, but remain limited in adapting to width variations.

Thus, we propose the constraint-based pre-training paradigm, which imposes structured constraints during pre-training to filter and isolate size-agnostic knowledge.
Building on this, WeiT leverages Kronecker-based constraints to learn shared weight templates, enabling efficient and flexible initialization of target models with negligible overhead.

\section{Methodology}
\begin{figure*}[tb]
  \centering
  \includegraphics[width=0.96\linewidth]{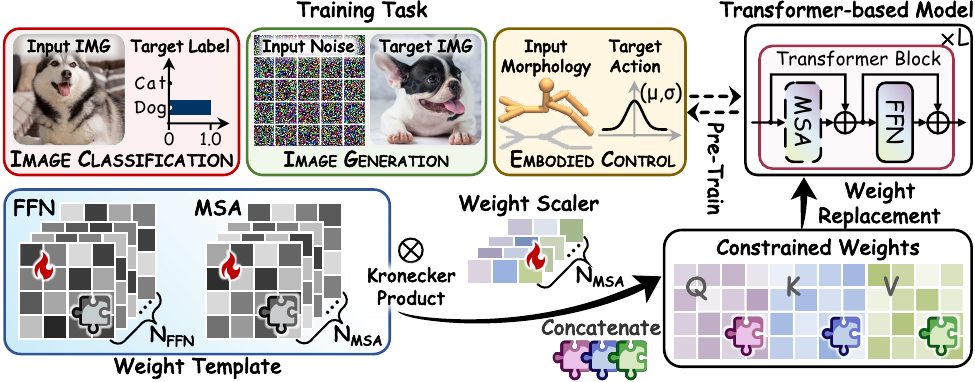}
  \vspace{-0.16in}
  \caption{Overview of the Constraint-based Pre-training Paradigm. Unlike conventional pre-training, it imposes structural constraints (e.g., Kronecker-based constraints) on weight matrices during optimization.
  Specifically, instead of directly updating unconstrained weight matrices in Transformers, we optimize weight templates and reconstruct constrained weight matrices under predefined constraints via lightweight weight scalers, which are then used to replace the corresponding parameters in Transformers (see Algorithm~\ref{alg:algorithm} for details). 
  This indirect optimization process encapsulates size-agnostic knowledge within weight templates, where the imposed structural constraints filter out unstructured variations, thereby enabling effective scaling across model sizes.}
  \label{fig:method-pre}
  \vspace{-0.15in}
\end{figure*}

\subsection{Preliminary}
\subsubsection{Transformer-based Architecture}
A Transformer encoder consists of $L$ stacked layers, each containing a multi-head self-attention (MSA) followed by a feed-forward network (FFN).
In MSA, $H$ attention heads process the input, and their concatenated outputs are projected via the matrix $W_o$:
\begin{equation}
    \text{MSA} = \mathrm{concat}(A_1, \dots, A_H) W_o\,,\quad W_o \in \mathbb{R}^{Hd \times D}.
\label{equ:msa}
\end{equation}
Each attention head $A_i$ computes self-attention from queries $Q_i$, keys $K_i$, and values $V_i \in \mathbb{R}^{N \times d}$ obtained via learnable projections $W_q^i$, $W_k^i$, and $W_v^i \in \mathbb{R}^{D \times d}$:
\begin{equation}
    A_i = \mathrm{softmax}\Big(\frac{Q_i K_i^\top}{\sqrt{d}}\Big)V_i, \quad A_i \in \mathbb{R}^{N \times d},
\end{equation}
where $N$ is the sequence length, $D$ the embedding dimension, and $d$ the head dimension, with $D=Hd$ in standard MSA.
The FFN consists of two linear transformations, $W_{\text{in}} \in \mathbb{R}^{D \times D'}$ and $W_{\text{out}} \in \mathbb{R}^{D' \times D}$, with a GELU~\cite{hendrycks2016gaussian} activation:
\begin{equation}
    \text{FFN}(x) = \mathrm{GELU}(x W_\text{in} + b_1) W_\text{out} + b_2,
\end{equation}
where $b_1$, $b_2$ are bias and $D'$ is the hidden layer dimension, which is typically set to $D'=4D$.

\subsubsection{Problem Formulation}
We consider the problem of initializing target models of varying sizes using knowledge from a single pre-trained model. Let $\mathcal{M}_{\star}$ denote a pre-trained model with parameters $\Theta_{\star}$. Our goal is to efficiently initialize a set of target models $\{\mathcal{M}_t\}_{t=1}^T$ with parameters $\{\Theta_t\}_{t=1}^T$, where each $\mathcal{M}_t$ shares a similar architecture with $\mathcal{M}_{\star}$ but may differ in depth $L$ and width $H$.

Formally, we aim to learn an initialization mapping
\begin{equation}
    f_{\text{init}}: \Theta_{\star} \mapsto \Theta_t, \quad t=1,\dots,T,
\label{equ:f_init}
\end{equation}
such that $\Theta_t$ is derived from $\Theta_{\star}$ as an initialization that inherits transferable knowledge while adapting to the target model scale.
The desired mapping should be both scalable and efficient, enabling variable-sized model initialization with minimal computational overhead.

\subsection{Constraint-based Pre-training Paradigm}
The constraint-based pre-training paradigm formulates scalable model initialization from a multi-task adaptation perspective.
Formally, it realizes $f_{\text{init}}$ in Eq.~\eqref{equ:f_init} by imposing structured constraints $\mathcal{C}$ during the pre-training of $\mathcal{M}_{\star}$, yielding size-agnostic weight templates $\mathcal{T}$ and size-specific weight scalers $\mathcal{S}_\star$, i.e., $(\mathcal{T}, \mathcal{S}_\star) = \mathcal{C}^{-1}(\Theta_{\star})$ (see Fig.~\ref{fig:method-pre}).
The initialization of a target model is then obtained as $\Theta_t = \mathcal{C}(\mathcal{T}, \mathcal{S}_t)$.
Below, we detail the key mechanisms.

\subsubsection{Structural Constraint in Weight Matrix}
\label{sec:wt}
Our prior work, WeiT\waveicon~\cite{feng2024wave}, preliminarily explored imposing constraints on weight matrices via dedicated templates for individual components (i.e., $\mathcal{T}_{\text{MSA}}$ and $\mathcal{T}_{\text{FFN}}$, see Fig.~\ref{fig:method-pre}), which facilitated cross-scale knowledge sharing but overlooked patterns shared across components, limiting template generality.

Thus, WeiT aims to learn unified weight templates shared across all weight matrices (Fig.~\ref{fig:method-init}a). To this end, we first aggregate the primary weight matrices of an $L$-layer Transformer, $\Theta_{\star} = \{W_{q}^{(1\thicksim L)}$, $W_{k}^{(1\thicksim L)}$, $W_{v}^{(1\thicksim L)}$, $W_{o}^{(1\thicksim L)}$, $W_{\text{in}}^{(1\thicksim L)}$, $W_{\text{out}}^{(1\thicksim L)}\}$\footnote{$W_{q}^{(1\thicksim L)}$ denotes $\{W_{q}^{(1)}, \dots, W_{q}^{(L)}\}$ for brevity}, into a unified weight matrix 
\begin{equation}
    \mathcal{W}_\star = \mathrm{concat}(\Theta_\star) \in \mathbb{R}^{L \times P},
\label{eq:agg}
\end{equation}
where each row represents a layer and $P = D \cdot (4Hd + 2D')=12 D \cdot D$.
This aggregation bridges the boundaries between heterogeneous components, enabling unified template learning (see App.~\ref{app:cnn} for generalization to CNNs).

Next, we impose structured constraints on $\mathcal{W}_\star$ for flexible and efficient cross-scale initialization. 
Specifically, WeiT adopts Kronecker-based constraints, which overcome the limitations of prior methods restricted to depth-wise expansion~\cite{lan2019albert, xia2024transformer, xia2024exploring}.
Concretely, the unified weight matrix $\mathcal{W}_\star$ is constrained as
\begin{equation}
    \mathcal{W}_\star \;=\; \mathcal{T} \otimes \mathcal{S}_\star \;=\; \sum_{i=1}^{N} \mathcal{T}_i \otimes \mathcal{S}_{\star, i},
\label{equ:kro}
\end{equation}
where $\otimes$ denotes the Kronecker product. 
Here, $\mathcal{T} = \{\mathcal{T}_i\}_{i=1}^{N}$, where each $\mathcal{T}_i \in \mathbb{R}^{1 \times (r_1\cdot r_2)}$ is a universal \textbf{\textit{Weight Template}} encoding size-agnostic knowledge. Similarly, $\mathcal{S}_\star = \{\mathcal{S}_{\star, i}\}_{i=1}^{N}$, where each $\mathcal{S}_{\star, i} \in \mathbb{R}^{L\times \frac{P}{r_1\cdot r_2}}$ is a lightweight \textbf{\textit{Weight Scaler}} that adaptively composes weight templates via concatenation and weighted aggregation for reconstructing the full weight matrix (see App.~\ref{sec:theoretical_guarantee} for theoretical analysis of expressivity).
Typically, we set $r_1 = r_2 = D$.

In this way, the imposed constraints implicitly encode the objective of effective initialization across diverse model scales into the pre-training process.
To better capture size-agnostic knowledge, we further introduce \textbf{\textit{a low-rank bottleneck}} on weight templates, enforcing $N\cdot r_1 \cdot r_2 \ll L \cdot P$. 
This bottleneck enforces template reuse, promoting maximal sharing of weight templates $\mathcal{T}$ across both depth and width, thereby facilitating extensive cross-scale knowledge sharing while retaining necessary scale-specificity in weight scalers $\mathcal{S}_\star$.

\subsubsection{Template Scaling Mechanism}
WeiT\waveicon provides preliminary support for width expansion by concatenating weight templates along the width dimension. 
However, when initializing narrower models, truncating these templates to match the target width can inadvertently compromise the size-agnostic knowledge encapsulated within them.

Thus, to enhance the flexibility of weight templates for width adaptation, WeiT introduces the \textbf{\textit{Template Scaling Mechanism}}, which applies structured dropout to the weight templates during pre-training (Fig.~\ref{fig:method-init}a), enabling the templates to adapt their effective widths to better accommodate width variations.
Formally, the structured dropout on the weight templates is defined as
\begin{equation}
    \tilde{\mathcal{T}} = M_\mathcal{T} \odot \mathcal{T},
\label{equ:drop}
\end{equation}
where $M_\mathcal{T} \in \{0,1\}^{r_1\times r_2}$ is a structured mask defined by $M_\mathcal{T}(i,j)=1$ for $i \leq r_1'$ and $j \leq r_2'$, and $0$ otherwise, with $(r_1', r_2')$ controlling the effective template width during pre-training. Here, $\odot$ denotes element-wise multiplication.

During pre-training, $(r_1', r_2')$ are randomly sampled at each forward pass, and $\mathcal{W}_\star$ is reconstructed as
\begin{equation}
    \mathcal{W}_\star \;=\; \tilde{\mathcal{T}} \otimes \mathcal{S}_\star \;=\; \sum_{i=1}^{N} \tilde{\mathcal{T}}_i \otimes \mathcal{S}_{\star, i}.
\label{equ:kro_mask}
\end{equation}

By incorporating the Template Scaling Mechanism during pre-training, the weight templates are discouraged from overfitting to a fixed width, encouraging the reorganization of size-agnostic knowledge along the width dimension.

\begin{figure}[tb]
  \centering
  \includegraphics[width=\linewidth]{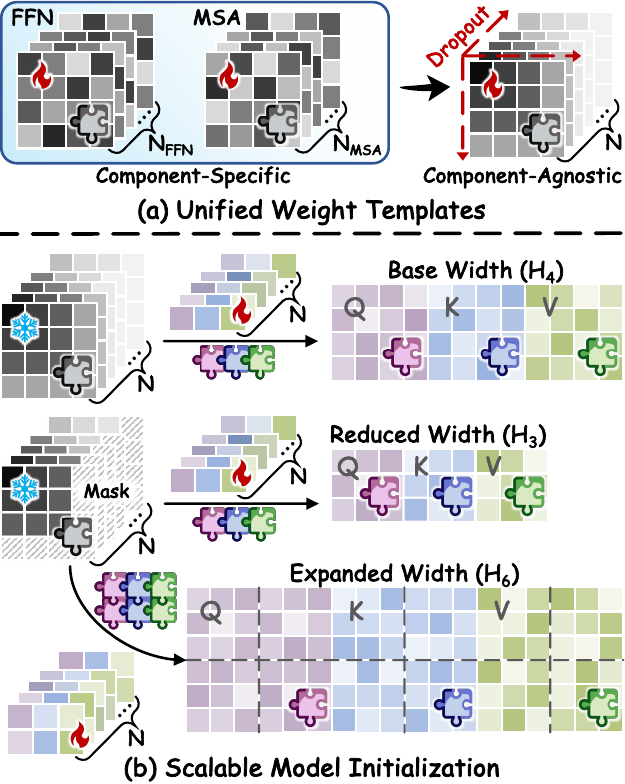}
  \vspace{-0.2in}
  \caption{(a)~WeiT introduces Unified Weight Templates that consolidate component-specific templates from WeiT\waveicon.
  A Template Scaling Mechanism further organizes the knowledge encoded in these templates by selectively activating regions across multiple dimensions via structured dropout.
  (b)~In addition to supporting depth scaling as in WeiT\waveicon, WeiT further enables initialization beyond the base width. 
  Its templates can be adaptively adjusted according to the target width via the Template Scaling Mechanism, thereby enabling initialization of models with both reduced and expanded widths.}
  \label{fig:method-init}
  \vspace{-0.13in}
\end{figure}

\subsubsection{Indirect Parameter Update under Constraints}
\label{sec:know_condense}
During pre-training, model parameters are updated indirectly by optimizing the weight templates $\mathcal{T}$ and scalers $\mathcal{S}_\star$ under the imposed constraints, as detailed in Algorithm~\ref{alg:algorithm} in Appendix.

Specifically, at each iteration, the model parameters $\Theta_\star$ are first reconstructed from the current templates $\mathcal{T}$ and scalers $\mathcal{S}_\star$ according to Eq.~\eqref{equ:kro}. 
Forward and backward propagation are then performed to compute gradients and update $\mathcal{T}$ and $\mathcal{S}_\star$.
Formally, the optimization objective is:
\begin{equation}
    \underset{\mathcal{T}, \mathcal{S}_\star}{\arg\min}\;
    \mathcal{L}(\Theta_\star),
    \quad
    \text{s.t.} \;\Theta_\star = \mathrm{concat}^{-1}(\mathcal{T} \otimes \mathcal{S}_\star),
\label{equ:ob}
\end{equation}
where $\mathrm{concat}^{-1}(\cdot)$ denotes the inverse of Eq.~\eqref{eq:agg}, i.e., the operation that maps the Kronecker-composed matrix back to the original set of model parameters.

The full model parameters are subsequently reassembled from the updated templates and scalers, enabling \textbf{\textit{direct}} updates of the templates and \textbf{\textit{indirect}} updates of the pre-trained model parameters, thereby promoting the encapsulation of size-agnostic knowledge within the weight templates.

For $\mathcal{L}$, we adopt standard task-specific training objectives.
\begin{itemize}
\item \textsc{Image Classification} employs the cross-entropy loss:
\begin{equation}
\mathcal{L}_{\textsc{Cls}} = 
\mathbb{E}_{(x,y)}\left[ - \sum_{c} y_c \log p_{\Theta_\star}(y_c \mid x) \right],
\label{equ:loss_cls}
\end{equation}
where $p_{\Theta_\star}$ denotes the classification network trained to predict class probabilities given input $x$, and $y$ is the corresponding one-hot ground truth label.

\item \textsc{Image Generation} uses a denoising objective:
\begin{equation}
\mathcal{L}_{\textsc{Gen}}=\mathbb{E}_{(z,c,\varepsilon,t)}\left[\|\varepsilon-\varepsilon_{\Theta_\star}(z_t, c, t)\|^{2}_{2}\right],
\label{equ:loss_gen}
\end{equation}
where $\varepsilon_{\Theta_\star}$ denotes the noise prediction network trained to estimate the noise $\varepsilon$ added to the latent variable $z_t$ at timestep $t$ under condition $c$.

\item \textsc{Embodied Control} uses an offline policy distillation objective:
\begin{equation}
\mathcal{L}_{\textsc{Ctl}} =
\mathbb{E}_{(s,a^{\text{ref}})}
\Big[ \sum_{j} (1 - \text{mask}_{j}) \, \log \pi_{\Theta_\star}(a^{\text{ref}}_{j} \mid s) \Big],
\label{equ:loss_ctl}
\end{equation}
where $\pi_{\Theta_\star}$ is the pre-trained policy trained to predict reference actions $a^{\text{ref}}$ given state $s$, and $\text{mask}_{j}$ indicates the action dimensions excluded during distillation.
\end{itemize}

\subsection{Scalable Model Initialization with Weight Templates}
After pre-training, the weight templates $\mathcal{T}$ are frozen to preserve size-agnostic knowledge, while the weight scalers $\mathcal{S}_t$ are instantiated for the target model and efficiently optimized to enable adaptive model initialization.

Specifically, to initialize a target model $\mathcal{M}_t$ with $L_t$ layers and $H_t$ attention heads, let its parameters be $\Theta_t$ and the corresponding aggregated weight matrix be $\mathcal{W}_t \in \mathbb{R}^{L_t \times P_t}$, where $P_t = 12 D_t \cdot D_t$ and $D_t = H_t d$, following the aggregation in Eq.~\eqref{eq:agg}. 
The width of weight templates is then adjusted, if necessary, to $r'_1 \times r'_2$ according to the target model, either by truncation for models with reduced width or by repeated concatenation for models with expanded width (see Fig.~\ref{fig:method-init}b).

The weight scalers are instantiated to match the target model, either via random initialization or by inheriting the pre-trained scalers, yielding $\mathcal{S}_t = \{\mathcal{S}_{t, i}\}_{i=1}^{N}$, where each $\mathcal{S}_{t, i} \in \mathbb{R}^{L_t\times \frac{P_t}{r'_1\cdot r'_2} }$ (Eq.~\eqref{equ:kro}). This defines how target model parameters are reconstructed from the weight templates via Kronecker-based weighted concatenation.

A small subset of data is then used to optimize the weight scalers $\mathcal{S}_t$ via: 
\begin{equation}
    \underset{\mathcal{S}_t}{\arg\min}\;
    \mathcal{L}(\Theta_t),
    \quad
    \text{s.t.} \;\Theta_t = \mathrm{concat}^{-1}(\mathcal{T} \otimes \mathcal{S}_t),
\label{equ:train_S}
\end{equation}
Given the limited parameter count of $\mathcal{S}_t$ (typically a few thousand), convergence is generally reached within a few hundred iterations ($\approx$0.16 epochs, corresponding to only a minute-level of wall-clock time), enabling parameter-efficient model initialization with negligible computational overhead (see App.~\ref{sec:generalization_bounds} for a theoretical generalization analysis).

The model initialization process is completed once the training of $\mathcal{S}_t$ is finalized, after which subsequent training proceeds as usual without imposing additional constraints.

\renewcommand{\arraystretch}{0.88}
\begin{table*}
    \centering
    \setlength{\tabcolsep}{1.5 mm} 
    \caption{\textbf{(Image Classification)} Performance of Variable-sized Model Initialization. Models of different sizes are constructed by varying depth and width, and evaluated using Top-1 Accuracy on ImageNet-1K. ``\textsc{Param.} (M)'' and ``\textsc{FLOPs} (G)'' denote the number of parameters and the computational complexity for each model size. All models are trained for 10 epochs after initialization.}
    \vspace{-0.09in}
    \resizebox{\textwidth}{!}{
        \begin{tabular}{@{}lcccc|cccc|cccc@{}}
        \toprule[1pt]
        & \multicolumn{4}{c}{$H_3$} 
        & \multicolumn{4}{c}{$H_6$} 
        & \multicolumn{4}{c}{$H_{12}$} \\
        \cmidrule(lr){2-5}
        \cmidrule(lr){6-9}
        \cmidrule(l){10-13}
        \textsc{\textbf{Depth Var.}} 
        & $L_4$ & $L_6$ & $L_8$ & $L_{10}$
        & $L_4$ & $L_6$ & $L_8$ & $L_{10}$
        & $L_4$ & $L_6$ & $L_8$ & $L_{10}$ \\
        \cmidrule(r){1-5}
        \cmidrule(lr){6-9}
        \cmidrule(l){10-13}
        \multirow{2}{*}{\textsc{Param.} / \textsc{FLOPs}} & \cellcolor{gray!15}{2.2M} 
        & \cellcolor{gray!15}{3.1M} 
        & \cellcolor{gray!15}{4.0M} 
        & \cellcolor{gray!15}{4.9M} 
        & \cellcolor{gray!15}{7.9M} 
        & \cellcolor{gray!15}{11.5M} 
        & \cellcolor{gray!15}{15.0M} 
        & \cellcolor{gray!15}{18.6M} 
        & \cellcolor{gray!15}{29.9M}
        & \cellcolor{gray!15}{44.1M} 
        & \cellcolor{gray!15}{58.3M} 
        & \cellcolor{gray!15}{72.5M} \\
        
        & \cellcolor{gray!15}{0.8G} 
        & \cellcolor{gray!15}{1.2G} 
        & \cellcolor{gray!15}{1.6G} 
        & \cellcolor{gray!15}{2.0G} 
        & \cellcolor{gray!15}{3.1G} 
        & \cellcolor{gray!15}{4.6G} 
        & \cellcolor{gray!15}{6.1G} 
        & \cellcolor{gray!15}{7.5G} 
        & \cellcolor{gray!15}{11.8G} 
        & \cellcolor{gray!15}{17.5G} 
        & \cellcolor{gray!15}{23.3G} 
        & \cellcolor{gray!15}{29.0G} \\
        \toprule[1pt]
        He Init.~\cite{chen2021empirical} 
                & 34.73 & 40.60 & 43.67 & 46.84 
                & 42.20 & 49.35 & 52.14 & 53.68 
                & 47.89 & 53.13 & 54.44 & 54.95  \\
        Mimetic Init.~\cite{trockman2023mimetic}
                & 35.07 & 40.18 & 43.18 & 46.29 
                & 43.29 & 49.06 & 53.00 & 54.13 
                & 50.16 & 54.34 & 56.50 & 58.49  \\
        Heur-LG~\cite{wang2022learngene}
                & 41.47 & 47.37 & 50.51 & 53.55 
                & 52.33 & 57.32 & 61.67 & 64.35 
                & 60.46 & 68.69 & 72.20 & 73.60 \\
        Auto-LG~\cite{wang2023learngene}
                & 52.38 & 61.80 & 64.56 & 65.88 
                & 63.19 & 70.50 & 72.19 & 73.29 
                & 60.90 & 70.00 & 72.36 & 73.50  \\
        
        Share Init.~\cite{lan2019albert}
                   & 55.16 & 59.83 & 62.52 & 64.25 
                   & 64.95 & 69.66 & 71.65 & 72.65 
                   & 71.72 & 75.30 & 76.44 & 77.40 \\
        TLEG~\cite{xia2024transformer}
             & 55.00 & 60.50 & 62.88 & 64.40 
             & 65.43 & 70.52 & 72.14 & 73.15 
             & 71.56 & 74.85 & 76.24 & 76.99 \\
        \midrule
        \cellcolor{blue!12}{WeiT\waveicon} 
             & \cellcolor{blue!12}{\textbf{58.64}} & \cellcolor{blue!12}{\textbf{63.16}} & \cellcolor{blue!12}{\textbf{65.38}} & \cellcolor{blue!12}{\textbf{66.59}} 
             & \cellcolor{blue!12}{\textbf{68.87}} & \cellcolor{blue!12}{\textbf{72.69}} & \cellcolor{blue!12}{\textbf{74.06}} & \cellcolor{blue!12}{\textbf{74.88}}
             & \cellcolor{blue!12}{\textbf{74.54}} & \cellcolor{blue!12}{\textbf{77.52}} & \cellcolor{blue!12}{\textbf{78.21}} & \cellcolor{blue!12}{\textbf{78.88}} \\
             & \textcolor{mygreen}{$\uparrow$3.48}
                   & \textcolor{mygreen}{$\uparrow$1.36} 
                   & \textcolor{mygreen}{$\uparrow$0.82}
                   & \textcolor{mygreen}{$\uparrow$0.71} 
                   & \textcolor{mygreen}{$\uparrow$3.44} 
                   & \textcolor{mygreen}{$\uparrow$2.17}
                   & \textcolor{mygreen}{$\uparrow$1.87} 
                   & \textcolor{mygreen}{$\uparrow$1.59} 
                   & \textcolor{mygreen}{$\uparrow$2.82}
                   & \textcolor{mygreen}{$\uparrow$2.22} 
                   & \textcolor{mygreen}{$\uparrow$1.77}
                   & \textcolor{mygreen}{$\uparrow$1.48} \\
        \midrule[1pt]
        \toprule[1pt]
        & \multicolumn{4}{c}{$L_3$} 
        & \multicolumn{4}{c}{$L_6$} 
        & \multicolumn{4}{c}{$L_{12}$} \\
        \cmidrule(lr){2-5}
        \cmidrule(lr){6-9}
        \cmidrule(l){10-13}
        \textsc{\textbf{Width Var.}}  & $H_4$ & $H_6$ & $H_8$ & $H_{10}$ 
        & $H_4$ & $H_6$ & $H_8$ & $H_{10}$ 
        & $H_4$ & $H_6$ & $H_8$ & $H_{10}$   \\
        \cmidrule(r){1-5}
        \cmidrule(lr){6-9}
        \cmidrule(l){10-13}
        \multirow{2}{*}{\textsc{Param.} / \textsc{FLOPs}} & \cellcolor{gray!15}{2.9M} 
        & \cellcolor{gray!15}{6.1M} 
        & \cellcolor{gray!15}{10.5M} 
        & \cellcolor{gray!15}{16.1M} 
        & \cellcolor{gray!15}{5.3M} 
        & \cellcolor{gray!15}{11.5M} 
        & \cellcolor{gray!15}{20.0M} 
        & \cellcolor{gray!15}{30.9M} 
        & \cellcolor{gray!15}{10.1M} 
        & \cellcolor{gray!15}{22.2M} 
        & \cellcolor{gray!15}{39.0M} 
        & \cellcolor{gray!15}{60.5M} \\
        & \cellcolor{gray!15}{1.1G} 
        & \cellcolor{gray!15}{2.3G} 
        & \cellcolor{gray!15}{4.1G} 
        & \cellcolor{gray!15}{6.2G} 
        & \cellcolor{gray!15}{2.1G} 
        & \cellcolor{gray!15}{4.6G} 
        & \cellcolor{gray!15}{8.0G} 
        & \cellcolor{gray!15}{12.3G} 
        & \cellcolor{gray!15}{4.2G} 
        & \cellcolor{gray!15}{9.0G} 
        & \cellcolor{gray!15}{15.8G} 
        & \cellcolor{gray!15}{24.4G} \\
        \toprule[1pt]
        He Init.~\cite{chen2021empirical}
            & 34.13 & 38.64 & 40.69 & 44.35
            & 43.11 & 47.38 & 49.11 & 50.96
            & 51.75 & 53.94 & 54.99 & 55.41 \\
        WT-Select~\cite{xu2023initializing}
            & 37.86 & 43.68 & 49.42 & 53.38
            & 49.26 & 56.26 & 63.18 & 66.95
            & 57.38 & 64.56 & 71.80 & 74.43 \\
        Iso. Pruning~\cite{fang2024isomorphic}
            & 38.59 & 47.51 & 50.80 & 54.97
            & 51.05 & 59.90 & 62.26 & 67.57
            & 56.35 & 67.97 & 72.46 & 75.32 \\
        \cellcolor{blue!12}{WeiT}
               & \cellcolor{blue!12}{\textbf{51.46}} & \cellcolor{blue!12}{\textbf{58.64}} & \cellcolor{blue!12}{\textbf{62.11}} & \cellcolor{blue!12}{\textbf{64.20}} 
               & \cellcolor{blue!12}{\textbf{64.20}} & \cellcolor{blue!12}{\textbf{71.21}} & \cellcolor{blue!12}{\textbf{74.25}} & \cellcolor{blue!12}{\textbf{76.41}} 
               & \cellcolor{blue!12}{\textbf{70.95}} & \cellcolor{blue!12}{\textbf{76.98}} & \cellcolor{blue!12}{\textbf{79.34}} & \cellcolor{blue!12}{\textbf{80.75}} \\
                   & \textcolor{mygreen}{$\uparrow$12.87}
                   & \textcolor{mygreen}{$\uparrow$11.13} 
                   & \textcolor{mygreen}{$\uparrow$11.31}
                   & \textcolor{mygreen}{$\uparrow$9.23} 
                   & \textcolor{mygreen}{$\uparrow$13.15} 
                   & \textcolor{mygreen}{$\uparrow$11.31}
                   & \textcolor{mygreen}{$\uparrow$11.07} 
                   & \textcolor{mygreen}{$\uparrow$8.84} 
                   & \textcolor{mygreen}{$\uparrow$13.57}
                   & \textcolor{mygreen}{$\uparrow$9.01} 
                   & \textcolor{mygreen}{$\uparrow$6.88}
                   & \textcolor{mygreen}{$\uparrow$5.43} \\
        \bottomrule[1pt]
        \end{tabular}   
        }
    \label{tab:init_class}
    \vspace{-0.1in}
\end{table*}

\begin{table*}
    \centering 
    \setlength{\tabcolsep}{1.7 mm}
    \caption{\textbf{(Image Classification)} Performance of Initialized Models on Downstream Datasets, evaluated in terms of Top-1 accuracy.}
    \vspace{-0.09in}
    \resizebox{\textwidth}{!}{
        \begin{tabular}{@{}lccccccc |ccccccc@{}}
        \toprule[1pt]
             & \multicolumn{7}{c}{$L_6H_3$} & \multicolumn{7}{c}{$L_6H_6$}\\
             \cmidrule(r){2-8} 
             \cmidrule(l){9-15} 
             & Flower & CUB & Cars & C\small{10} & C\small{100} & Food & iNat.
             & Flower & CUB & Cars & C\small{10} & C\small{100} & Food & iNat. \\
             \midrule[1pt]
             He Init.~\cite{chen2021empirical} 
             & 53.89 & 26.10 & 19.94 & 92.40 & 68.33 & 68.41 & 52.28
             & 57.23 & 27.27 & 23.77 & 93.96 & 66.50 & 70.63 & 54.03 
             \\
             Mimetic Init.~\cite{trockman2023mimetic} 
             & 52.12 & 34.95 & 20.53 & 88.90 & 63.39 & 66.90 & 49.04 
             & 57.42 & 39.63 & 34.24 & 91.59 & 65.68 & 67.08 & 52.18 \\ 
             Heur-LG~\cite{wang2022learngene} 
             & 64.71 & 44.60 & 37.72 & 93.99 & 71.12 & 74.71 & 57.36 
             & 69.08 & 47.96 & 51.18 & 95.07 & 72.79 & 76.84 & 59.31 \\
             Auto-LG~\cite{wang2023learngene} 
             & 93.53 & 71.40 & 83.50 & 96.39 & 77.06 & 81.73 & 62.51
             & 96.39 & 75.06 & 88.19 & 97.32 & 80.99 & 84.64 & 67.00 \\
            Share Init.~\cite{lan2019albert} 
            & 92.39 & 70.09 & 82.08 & 96.00 & 77.23 & 81.23 & 63.20 
            & 94.10 & 72.44 & 87.15 & 96.49 & 78.52 & 82.95 & 62.97 \\
             TLEG~\cite{xia2024transformer} 
             & 91.04 & 69.50 & 78.16 & 96.06 & 76.93 & 81.92 & 63.40 
             & 93.74 & 72.63 & 87.22 & 97.18 & 80.24 & 84.87 & 66.50  \\
             \cellcolor{blue!12}{WeiT\waveicon} & \cellcolor{blue!12}{\textbf{94.89}} & \cellcolor{blue!12}{\textbf{74.77}} & \cellcolor{blue!12}{\textbf{84.44}} & \cellcolor{blue!12}{\textbf{96.57}} & \cellcolor{blue!12}{\textbf{80.70}} & \cellcolor{blue!12}{\textbf{83.83}} & \cellcolor{blue!12}{\textbf{65.25}} 
                  & \cellcolor{blue!12}{\textbf{96.89}} & \cellcolor{blue!12}{\textbf{78.10}} & \cellcolor{blue!12}{\textbf{89.43}} & \cellcolor{blue!12}{\textbf{97.38}} & \cellcolor{blue!12}{\textbf{83.18}} & \cellcolor{blue!12}{\textbf{85.53}} & \cellcolor{blue!12}{\textbf{67.62}} \\
             & \textcolor{mygreen}{$\uparrow$1.36} & \textcolor{mygreen}{$\uparrow$3.37} & \textcolor{mygreen}{$\uparrow$0.94} & \textcolor{mygreen}{$\uparrow$0.18} & \textcolor{mygreen}{$\uparrow$3.47} & \textcolor{mygreen}{$\uparrow$1.91} & \textcolor{mygreen}{$\uparrow$1.85} 
                  & \textcolor{mygreen}{$\uparrow$0.50} & \textcolor{mygreen}{$\uparrow$3.04} & \textcolor{mygreen}{$\uparrow$1.24} & \textcolor{mygreen}{$\uparrow$0.06} & \textcolor{mygreen}{$\uparrow$2.19} & \textcolor{mygreen}{$\uparrow$0.66} & \textcolor{mygreen}{$\uparrow$0.62} \\
             \midrule
             Full FT 
             & 95.35 & 75.15 & 86.48 & 96.62 & 80.22 & 83.95 & 66.86 
             & 96.39 & 77.03 & 89.43 & 97.52 & 82.82 & 85.58 & 69.31 \\
             \bottomrule[1pt]
        \end{tabular}
    \label{tab:downstream_class}
    }
    \vspace{-0.07in}
\end{table*}

\section{Experiments}
In this section, we establish comprehensive benchmarks on \textsc{Image Classification} (Sec.~\ref{sec:result_classi}), \textsc{Image Generation} (Sec.~\ref{sec:result_gen}), and \textsc{Embodied Control} (Sec.~\ref{sec:result_cont}) to systematically evaluate the initialization capability of weight templates across varying model sizes and diverse downstream tasks.
We further assess the architectural scalability of WeiT by extending it to Convolution-based models (e.g., ConvNeXt~\cite{woo2023convnext}) (Sec.~\ref{sec:cnn}), and analyze the optimization dynamics of WeiT-initialized models under full training (Sec.~\ref{sec:full_train}).
Finally, we conduct a systematic analysis of weight templates (Sec.~\ref{sec:ablation}), examining key design choices and their impact on performance, followed by visualizations of the knowledge encapsulated within these templates (Sec.~\ref{sec:visual}).

\subsection{Main Results on Image Classification}
\label{sec:result_classi}

\subsubsection{Experimental Setup}
\begin{itemize}
\item \textbf{Basic Settings.} 
We adopt DeiT~\cite{touvron2021training} as the base architecture and use DeiT-B (i.e., $L_{12}H_{12}$) for constraint-based pre-training of WeiT on ImageNet-1K~\cite{deng2009imagenet}. 
Pre-training is conducted for 300 epochs with a batch size of 1024 and a learning rate of $5\times10^{-4}$, using the AdamW optimizer with a cosine learning rate scheduler.

\item \textbf{Evaluation.} 
For multi-scale initialization, we vary the model depth from $L_4$ to $L_{24}$ and width from $H_4$ to $H_{24}$, covering a broad range of DeiT configurations across both smaller and larger model scales.
For knowledge transfer, we further evaluate on diverse downstream datasets, including Oxford Flowers~\cite{nilsback2008automated}, CUB-200-2011~\cite{wah2011caltech}, Stanford Cars~\cite{gebru2017fine}, CIFAR-10/100~\cite{krizhevsky09}, Food-101~\cite{bossard2014food}, and iNaturalist-2019~\cite{tan2019herbarium} (see App.~\ref{app:dataset_cls} for details).
\end{itemize}

\subsubsection{Initialization across Model Scales}
Table~\ref{tab:init_class} presents the results of initializing models of varying sizes for \textsc{Image Classification}, highlighting that knowledge-driven scalable initialization methods generally outperform heuristic-driven approaches. Under depth scaling, Share Init.~\cite{lan2019albert} and TLEG~\cite{xia2024transformer} improve over Mimetic Init~\cite{trockman2023mimetic}. by reusing specific layers; however, their rigid layer-wise sharing may limit adaptability across diverse model depths.
In contrast, WeiT\waveicon constructs unified weight templates and employs a small set of trainable weight scalers to flexibly adapt the reconstruction of weight templates for each target depth, thereby preserving high representational fidelity.

Furthermore, WeiT’s Template Scaling Mechanism enhances the adaptability of weight templates, enabling flexible width scaling of models. 
When initializing models with varying widths, parameters are efficiently constructed by concatenating weight templates via the Kronecker product.
This design allows WeiT to consistently outperform existing knowledge transfer methods, such as pruning-based Iso. pruning~\cite{fang2024isomorphic}, while incurring lower initialization overhead (see Table~\ref{tab:init_class}).

\subsubsection{Transferability to Downstream Datasets}
The knowledge encapsulated in weight templates is sufficiently general to transfer across diverse downstream datasets (Table~\ref{tab:downstream_class}), with WeiT\waveicon consistently delivering substantial improvements compared to Share Init.~\cite{lan2019albert} and TLEG~\cite{xia2024transformer}. 
By contrast, Mimetic Init.~\cite{trockman2023mimetic} may underperform relative to He Init.~\cite{chen2021empirical} on certain datasets (e.g., Food-101 and iNaturalist-2019), highlighting the limited generality of its heuristic-based initialization.

Moreover, small-scale datasets (e.g., Oxford Flowers and Stanford Cars) offer limited training data for large models, as observed with He Init.~\cite{chen2021empirical} and Mimetic Init.~\cite{trockman2023mimetic}, highlighting the critical role of effective knowledge transfer.
WeiT leverages the structured knowledge encapsulated in weight templates to enable adaptive initialization, thereby enhancing data efficiency under scarce data conditions.

\begin{table*}
    \centering
    \setlength{\tabcolsep}{0.9 mm} 
        \caption{\textbf{(Image Generation)} Performance of Variable-sized Model Initialization. Models of different sizes are constructed by varying depth and width, and evaluated using $\text{FID}_\text{(IS)}$ on ImageNet-1K. ``\textsc{Param.} (M)'' and ``\textsc{FLOPs} (G)'' denote the number of parameters and the computational complexity for each model size. All models are trained 100K steps after initialization.}
        \vspace{-0.05in}
        \resizebox{\textwidth}{!}{
        \begin{tabular}{@{}lcccc|cccc@{}}
        \toprule[1pt]
        & \multicolumn{4}{c}{$H_{12}$} 
        & \multicolumn{4}{c}{$H_{16}$} \\
        \cmidrule(r){2-5}
        \cmidrule(l){6-9}
        \textsc{\textbf{Depth Var.}} 
        & $L_4$ & $L_6$ & $L_8$ & $L_{10}$ 
        & $L_4$ & $L_6$ & $L_8$ & $L_{10}$  \\

        \cmidrule(r){1-5}
        \cmidrule(l){6-9}
        \multirow{2}{*}{\textsc{Param.} / \textsc{FLOPs}} 
        & \cellcolor{gray!15}{45.3M}
        & \cellcolor{gray!15}{66.6M} 
        & \cellcolor{gray!15}{87.8M}
        & \cellcolor{gray!15}{109.1M}
        & \cellcolor{gray!15}{80.1M}
        & \cellcolor{gray!15}{117.8M}
        & \cellcolor{gray!15}{155.6M}
        & \cellcolor{gray!15}{193.4M} \\
        
        & \cellcolor{gray!15}{14.6G}
        & \cellcolor{gray!15}{21.8G}
        & \cellcolor{gray!15}{29.1G}
        & \cellcolor{gray!15}{36.4G}
        
        & \cellcolor{gray!15}{25.9G}
        & \cellcolor{gray!15}{38.8G}
        & \cellcolor{gray!15}{51.7G}
        & \cellcolor{gray!15}{64.6G} \\
        
        \toprule[1pt]
        He Init.~\cite{chen2021empirical} 
                    & $\text{87.23}_\text{ (16.09)}$
                    & $\text{80.37}_\text{ (17.20)}$ 
                    & $\text{71.39}_\text{ (19.41)}$
                    & $\text{70.73}_\text{ (19.04)}$ 
                    & $\text{78.46}_\text{ (17.98)}$ 
                    & $\text{72.57}_\text{ (19.19)}$ 
                    & $\text{64.91}_\text{ (20.84)}$
                    & $\text{59.64}_\text{ (22.61)}$ \\
                    
        Mimetic Init.~\cite{trockman2023mimetic} 
                    & $\text{81.76}_\text{ (16.69)}$ 
                    & $\text{79.87}_\text{ (18.58)}$ 
                    & $\text{72.04}_\text{ (19.70)}$
                    & $\text{66.98}_\text{ (21.00)}$ 
                    & $\text{77.55}_\text{ (18.15)}$ 
                    & $\text{69.79}_\text{ (19.31)}$ 
                    & $\text{64.45}_\text{ (22.13)}$ 
                    & $\text{64.55}_\text{ (21.55)}$  \\
                    
        Heur-LG~\cite{wang2022learngene} 
                    & $\text{84.14}_\text{ (16.21)}$ 
                    & $\text{70.84}_\text{ (19.94)}$ 
                    & $\text{62.57}_\text{ (23.52)}$
                    & $\text{60.88}_\text{ (23.40)}$
                    & $\text{81.37}_\text{ (17.73)}$
                    & $\text{65.49}_\text{ (22.96)}$
                    & $\text{58.41}_\text{ (25.12)}$ 
                    & $\text{55.34}_\text{ (26.58)}$ \\
                    
        Auto-LG~\cite{wang2023learngene} 
                    & $\text{81.63}_\text{ (18.18)}$ 
                    & $\text{66.70}_\text{ (22.59)}$ 
                    & $\text{64.07}_\text{ (24.09)}$
                    & $\text{59.80}_\text{ (25.32)}$
                    & $\text{77.66}_\text{ (18.73)}$ 
                    & $\text{68.03}_\text{ (22.46)}$ 
                    & $\text{60.42}_\text{ (25.92)}$ 
                    & $\text{59.98}_\text{ (25.98)}$ \\
                
        Share Init.~\cite{lan2019albert} 
                    & $\text{66.87}_\text{ (22.35)}$ 
                    & $\text{59.03}_\text{ (24.61)}$ 
                    & $\text{53.43}_\text{ (26.80)}$
                    & $\text{51.06}_\text{ (28.00)}$ 
                    & $\text{58.54}_\text{ (24.89)}$ 
                    & $\text{46.26}_\text{ (30.88)}$ 
                    & $\text{43.76}_\text{ (32.25)}$ 
                    & $\text{41.78}_\text{ (33.27)}$ \\
        Laptop-Diff~\cite{zhang2024laptop}  
                    & $\text{105.9}_\text{ (12.99)}$ 
                    & $\text{68.71}_\text{ (21.00)}$ 
                    & $\text{52.73}_\text{ (26.97)}$
                    & $\text{52.57}_\text{ (27.46)}$
                    & $\text{107.9}_\text{ (12.92)}$ 
                    & $\text{63.02}_\text{ (23.33)}$ 
                    & $\text{47.84}_\text{ (30.52)}$ 
                    & $\text{47.50}_\text{ (31.09)}$ \\
        TLEG~\cite{xia2024transformer} 
                    & $\text{62.88}_\text{ (22.78)}$ 
                    & $\text{54.97}_\text{ (26.76)}$ 
                    & $\text{49.04}_\text{ (28.76)}$
                    & $\text{47.22}_\text{ (30.23)}$ 
                    & $\text{53.00}_\text{ (27.99)}$ 
                    & $\text{46.69}_\text{ (30.85)}$ 
                    & $\text{44.32}_\text{ (32.23)}$ 
                    & $\text{41.15}_\text{ (34.80)}$  \\    
        FINE~\cite{xie2024fine} 
               & $\text{57.47}_\text{ (24.52)}$
               & $\text{51.58}_\text{ (27.52)}$
               & $\text{45.34}_\text{ (30.46)}$
               & $\text{42.33}_\text{ (32.34)}$ 
               & $\text{48.72}_\text{ (28.27)}$
               & $\text{44.38}_\text{ (31.41)}$
               & $\text{41.24}_\text{ (34.25)}$
               & $\text{36.53}_\text{ (36.79)}$  \\
        \midrule
        \cellcolor{blue!12}{WeiT}~\cite{feng2024wave} 
                    & \cellcolor{blue!12}{$\textbf{55.41}_\text{  (\textbf{26.03})}$} 
                    & \cellcolor{blue!12}{$\textbf{48.40}_\text{  (\textbf{29.72})}$} 
                    & \cellcolor{blue!12}{$\textbf{44.54}_\text{  (\textbf{31.98})}$} 
                    & \cellcolor{blue!12}{$\textbf{41.67}_\text{  (\textbf{33.94})}$} 
                    & \cellcolor{blue!12}{$\textbf{47.90}_\text{  (\textbf{30.97})}$} 
                    & \cellcolor{blue!12}{$\textbf{44.31}_\text{  (\textbf{33.04})}$} 
                    & \cellcolor{blue!12}{$\textbf{37.90}_\text{  (\textbf{36.99})}$}  
                    & \cellcolor{blue!12}{$\textbf{36.29}_\text{  (\textbf{39.33})}$} \\
        & \textcolor{mygreen}{$\downarrow\text{2.06}_{\uparrow\text{1.51}}$} 
        & \textcolor{mygreen}{$\downarrow \text{3.18}_{\uparrow\text{2.20}}$} 
        & \textcolor{mygreen}{$\downarrow \text{0.80}_{\uparrow\text{1.52}}$} 
        & \textcolor{mygreen}{$\downarrow \text{0.66}_{\uparrow\text{1.60}}$} 
        & \textcolor{mygreen}{$\downarrow \text{0.82}_{\uparrow\text{2.70}}$} 
        & \textcolor{mygreen}{$\downarrow \text{0.07}_{\uparrow\text{1.63}}$} 
        & \textcolor{mygreen}{$\downarrow \text{3.34}_{\uparrow\text{2.74}}$} 
        & \textcolor{mygreen}{$\downarrow \text{0.24}_{\uparrow\text{2.54}}$} \\
        \bottomrule[1pt]
        \toprule[1pt]
        & \multicolumn{4}{c}{$L_{6}$} 
        & \multicolumn{4}{c}{$L_{12}$} \\
        \cmidrule(r){2-5}
        \cmidrule(l){6-9}
        \textsc{\textbf{Width Var.}} 
        & $H_9$ & $H_{11}$ & $H_{13}$ & $H_{15}$ 
        & $H_9$ & $H_{11}$ & $H_{13}$ & $H_{15}$   \\

        \cmidrule(r){1-5}
        \cmidrule(l){6-9}
        \multirow{2}{*}{\textsc{Param.} / \textsc{FLOPs}}
        & \cellcolor{gray!15}{37.6M} 
        & \cellcolor{gray!15}{56.0M} 
        & \cellcolor{gray!15}{78.0M} 
        & \cellcolor{gray!15}{103.6M} 
        & \cellcolor{gray!15}{73.5M} 
        & \cellcolor{gray!15}{109.6M} 
        & \cellcolor{gray!15}{152.8M} 
        & \cellcolor{gray!15}{203.3M} \\
        
        & \cellcolor{gray!15}{12.3G} 
        & \cellcolor{gray!15}{18.3G} 
        & \cellcolor{gray!15}{25.6G} 
        & \cellcolor{gray!15}{34.1G} 
        & \cellcolor{gray!15}{24.5G} 
        & \cellcolor{gray!15}{36.7G} 
        & \cellcolor{gray!15}{51.2G} 
        & \cellcolor{gray!15}{68.1G} \\
        
       \toprule[1pt]
       He Init.~\cite{chen2021empirical}
                    & $\text{80.96}_\text{ (16.46)}$
                    & $\text{80.48}_\text{ (16.85)}$
                    & $\text{74.50}_\text{ (18.21)}$
                    & $\text{69.32}_\text{ (19.54)}$
                    & $\text{70.87}_\text{ (18.74)}$
                    & $\text{78.24}_\text{ (17.73)}$
                    & $\text{71.92}_\text{ (19.36)}$
                    & $\text{64.61}_\text{ (21.50)}$ \\
       
       BK-SDM~\cite{kim2024bk}
                    & $\text{83.17}_\text{ (17.18)}$
                    & $\text{72.53}_\text{ (19.70)}$
                    & $\text{66.58}_\text{ (21.60)}$
                    & $\text{67.11}_\text{ (21.43)}$
                    & $\text{64.50}_\text{ (20.93)}$
                    & $\text{61.91}_\text{ (23.37)}$
                    & $\text{58.34}_\text{ (24.88)}$
                    & $\text{53.14}_\text{ (26.70)}$\\
        
        WT-Select~\cite{xu2023initializing} 
                    & $\text{79.64}_\text{ (17.43)}$
                    & $\text{82.11}_\text{ (17.64)}$
                    & $\text{80.13}_\text{ (18.83)}$
                    & $\text{75.13}_\text{ (19.47)}$
                    & $\text{55.00}_\text{ (26.60)}$
                    & $\text{45.48}_\text{ (34.27)}$
                    & $\text{46.22}_\text{ (35.51)}$
                    & $\text{37.14}_\text{ (42.31)}$\\
       \cellcolor{blue!12}{WeiT} 
               & \cellcolor{blue!12}{$\textbf{48.94}_\text{ (\textbf{30.68})}$}
               & \cellcolor{blue!12}{$\textbf{45.40}_\text{ (\textbf{34.83})}$}
               & \cellcolor{blue!12}{$\textbf{38.14}_\text{ (\textbf{41.08})}$}
               & \cellcolor{blue!12}{$\textbf{36.45}_\text{ (\textbf{43.07})}$}
               & \cellcolor{blue!12}{$\textbf{45.35}_\text{ (\textbf{33.25})}$}
               & \cellcolor{blue!12}{$\textbf{35.77}_\text{ (\textbf{40.93})}$}
               & \cellcolor{blue!12}{$\textbf{30.10}_\text{ (\textbf{47.59})}$}
               & \cellcolor{blue!12}{$\textbf{27.11}_\text{ (\textbf{52.78})}$} \\
        & \textcolor{mygreen}{$\downarrow \text{30.70}_{\uparrow\text{13.25}}$} 
        & \textcolor{mygreen}{$\downarrow \text{27.13}_{\uparrow\text{15.13}}$} 
        & \textcolor{mygreen}{$\downarrow \text{28.44}_{\uparrow\text{19.48}}$} 
        & \textcolor{mygreen}{$\downarrow \text{30.66}_{\uparrow\text{21.64}}$} 
        & \textcolor{mygreen}{$\downarrow \text{9.65}_{\uparrow\text{6.65}}$} 
        & \textcolor{mygreen}{$\downarrow \text{9.71}_{\uparrow\text{6.66}}$} 
        & \textcolor{mygreen}{$\downarrow \text{16.12}_{\uparrow\text{12.08}}$} 
        & \textcolor{mygreen}{$\downarrow \text{10.03}_{\uparrow\text{10.47}}$} \\

        \bottomrule[1pt]
        \end{tabular}
        }
    \label{tab:init_gene}
\vspace{-0.05in}
\end{table*}

\begin{table*}[t]
    \centering
    \setlength{\tabcolsep}{1 mm}
    \caption{\textbf{(Image Generation)} Performance of Initialized Models on Downstream Datasets, evaluated using FID for natural image datasets (i.e., CelebA, Bedroom, and Church) and FDD for non-natural image ones (i.e., Hubble, MRI, and Pokemon).}
    \vspace{-0.05in}
    \resizebox{\textwidth}{!}{
        \begin{tabular}{@{}lccc ccc | ccc ccc@{}}
        \toprule[1pt]
             & \multicolumn{6}{c}{$L_6H_{12}$}
             & \multicolumn{6}{c}{$L_6H_{16}$} \\
             \cmidrule(r){2-7}
             \cmidrule(l){8-13}
             & CelebA & Bedroom & Church 
             & Hubble & MRI & Pokemon 
              & CelebA & Bedroom & Church 
             & Hubble & MRI & Pokemon \\
             \midrule[1pt]
             He Init.~\cite{chen2021empirical} 
             & 18.57 & 42.90 & 41.01 
             & 0.320 & 0.170 & 0.897 
             & 14.55 & 32.88 & 24.27
             & 0.235 & 0.119 & 0.925 \\
             Mimetic Init.~\cite{trockman2023mimetic}
             & 16.87 & 30.48 & 33.65 
             & 0.281 & 0.180 & 0.902 
             & 11.66 & 29.66 & 25.00 
             & 0.271 & 0.111 & 0.920 \\
             Heur-LG~\cite{wang2022learngene}
             & 13.23 & 36.98 & 29.13
             & 0.293 & 0.127 & 0.865 
             & 10.84 & 24.42 & 17.09 
             & 0.314 & 0.099 & 0.919 \\
             Auto-LG~\cite{wang2023learngene}
             & 15.02 & 46.56 & 44.15 
             & 0.302 & 0.110 & 0.705 
             & 16.54 & 38.98 & 31.58 
             & 0.270 & 0.148 & 0.764 \\
             Share Init.~\cite{lan2019albert}  
             & 9.11 & 25.47 & 22.49 
             & 0.190 & 0.057 & 0.463 
             & 9.40 & 17.90 & 19.88 
             & 0.119 & 0.047 & 0.421 \\
             Laptop-Diff~\cite{zhang2024laptop}
             & 12.62 & 22.85 & 24.73 
             & 0.153 & 0.063 & 0.466 
             & 10.52 & 27.01 & 25.19 
             & 0.141 & 0.051 & 0.478 \\
             TLEG~\cite{xia2024transformer}
             & 8.27 & 20.43 & 19.30 
             & 0.226 & 0.057 & 0.428 
             & 10.91 & 19.43 & 18.29 
             & 0.124 & 0.052 & 0.412 \\
             FINE~\cite{xie2024fine}
             & 7.99 & 17.83 & 17.29 
             & 0.119 & 0.049 & 0.407 
             & 8.41 & 14.90 & 15.80 
             & 0.101 & 0.041 & 0.380 \\
             \cellcolor{blue!12}{WeiT}~\cite{feng2024wave} 
             & \cellcolor{blue!12}{\textbf{7.36}} & \cellcolor{blue!12}{\textbf{17.62}} & \cellcolor{blue!12}{\textbf{16.52}}
             & \cellcolor{blue!12}{\textbf{0.114}} & \cellcolor{blue!12}{\textbf{0.043}} & \cellcolor{blue!12}{\textbf{0.400}}
             & \cellcolor{blue!12}{\textbf{5.68}} & \cellcolor{blue!12}{\textbf{14.63}} & \cellcolor{blue!12}{\textbf{15.31}} 
             & \cellcolor{blue!12}{\textbf{0.100}} & \cellcolor{blue!12}{\textbf{0.038}} & \cellcolor{blue!12}{\textbf{0.378}} \\
             & \textcolor{mygreen}{$\downarrow$0.63} 
               & \textcolor{mygreen}{$\downarrow$0.21} 
               & \textcolor{mygreen}{$\downarrow$0.77} 
               & \textcolor{mygreen}{$\downarrow$0.005} 
               & \textcolor{mygreen}{$\downarrow$0.006}
               & \textcolor{mygreen}{$\downarrow$0.007}
            & \textcolor{mygreen}{$\downarrow$2.73} 
               & \textcolor{mygreen}{$\downarrow$0.27} 
               & \textcolor{mygreen}{$\downarrow$0.49} 
               & \textcolor{mygreen}{$\downarrow$0.001} 
               & \textcolor{mygreen}{$\downarrow$0.003}
               & \textcolor{mygreen}{$\downarrow$0.002}\\
           \midrule
           Full FT 
           & 9.97 & 24.43 & 20.65 
           & 0.148 & 0.060 & 0.418 
           & 8.65 & 19.58 & 19.10 
           & 0.124 & 0.048 & 0.421 \\
           \bottomrule[1pt]
        \end{tabular}
        }
    \label{tab:downstream_gen}
    \vspace{-0.1in}
\end{table*}

\subsection{Main Results on Image Generation}
\label{sec:result_gen}

\subsubsection{Experimental Setup}
\begin{itemize}
\item \textbf{Basic Settings.}
We focus on class-conditional generation and adopt Diffusion Transformers (DiTs)~\cite{peebles2023scalable}  as the backbone, using DiT-L (i.e., $L_{12}H_{16}$) for constraint-based pre-training of WeiT on ImageNet-1K~\cite{deng2009imagenet}. Models employ a latent patch size of $p=2$ and process images at $256 \times 256$ resolution. 
Pre-training is conducted for 600K steps with a batch size of 64, a fixed learning rate of $1 \times 10^{-4}$, and optimized using AdamW.

\item \textbf{Evaluation.}
For multi-scale initialization, we vary depth from $L_4$ to $L_{24}$ and width from $H_9$ to $H_{32}$.
Performance is measured using Fr\'{e}chet Inception Distance (FID)~\cite{heusel2017gans} and Inception Score (IS)~\cite{salimans2016improved}.
To assess transferability, we conduct experiments on diverse domains—including CelebA-HQ~\cite{huang2018introvae}, LSUN-Bedroom, LSUN-Church~\cite{wang2017knowledge}, Hubble, MRI, and Pokemon—that differ substantially from the pre-training dataset (see App.~\ref{app:dataset_gen} for details).
\end{itemize}

\subsubsection{Initialization across Model Scales}
Table~\ref{tab:init_gene} presents the results of initializing models of varying sizes for \textsc{Image Generation}, with WeiT consistently outperforming existing methods across all scales, achieving notably lower FID scores, particularly for models with varying widths.
Distillation-based methods (e.g., Laptop-Diff~\cite{zhang2024laptop}) and pruning-based approaches (e.g., BK-SDM~\cite{kim2024bk}) support flexible knowledge transfer but incur considerable overhead for each target size, constraining efficiency. 
In contrast, WeiT achieves efficient adaptation with only a few hundred optimization steps.

Moreover, when the target model (e.g., $L_4H_{12}$ or $L_6H_{9}$) deviates substantially from the pre-trained configuration (i.e., $L_{12}H_{16}$), existing methods suffer from disrupted layer-wise denoising hierarchy and temporal coherence, resulting in suboptimal generative alignment—particularly under width scaling.
In contrast, WeiT provides flexible and stable initialization across both depths and widths, enabling reliable initialization even for very small models, a property essential for diffusion models that are highly sensitive to initialization quality.

\subsubsection{Transferability to Downstream Datasets}
As shown in Table~\ref{tab:downstream_gen}, WeiT facilitates flexible model initialization across diverse downstream image generation datasets and consistently outperforms existing methods, demonstrating that weight templates obtained via our constraint-based pre-training encapsulate knowledge that is both size-agnostic and, to a considerable extent, domain-agnostic.

Remarkably, WeiT-initialized models even surpass direct fine-tuning from pre-trained models (i.e., Full FT), further underscoring WeiT’s superior data efficiency. 
This indicates that transferring more parameters does not necessarily improve performance~\cite{feng2024transferring}, particularly under substantial domain gaps (e.g., Hubble and MRI), where redundant knowledge may hinder adaptability.

\begin{figure*}[!t]
    \centering
    \includegraphics[width=\linewidth]{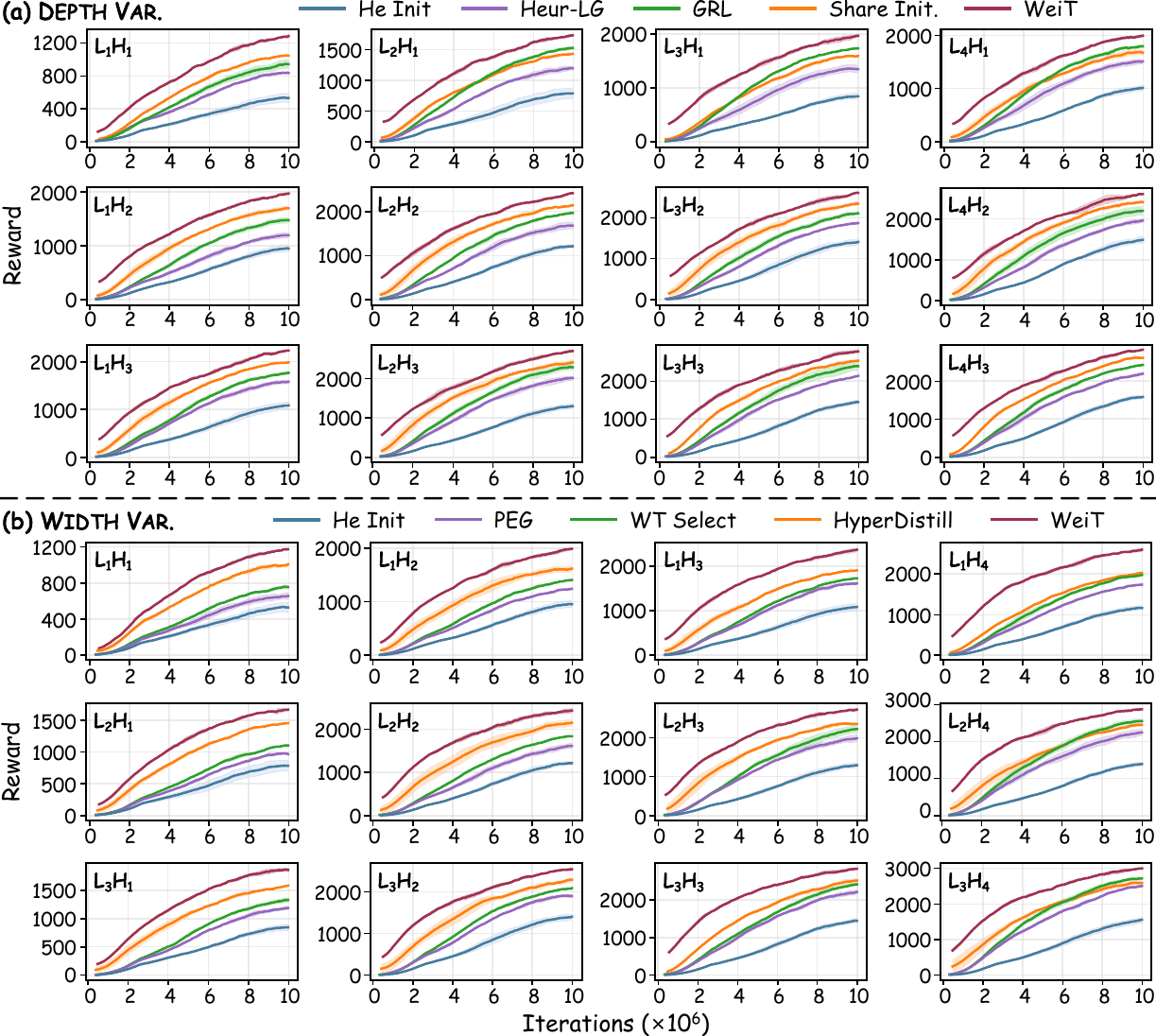}
    \vspace{-0.25in}
    \caption{\textbf{\textsc{(Embodied Control)}} Performance of Variable-sized Model Initialization. 
    Models are scaled by varying depth and width and evaluated on Flat Terrain with novel morphologies using cumulative reward. All models are trained for $1\times 10^7$ iterations after initialization.}
    \label{fig:control_scale}
    \vspace{-0.15in}
\end{figure*}

\begin{figure*}[!t]
    \centering
    \includegraphics[width=\linewidth]{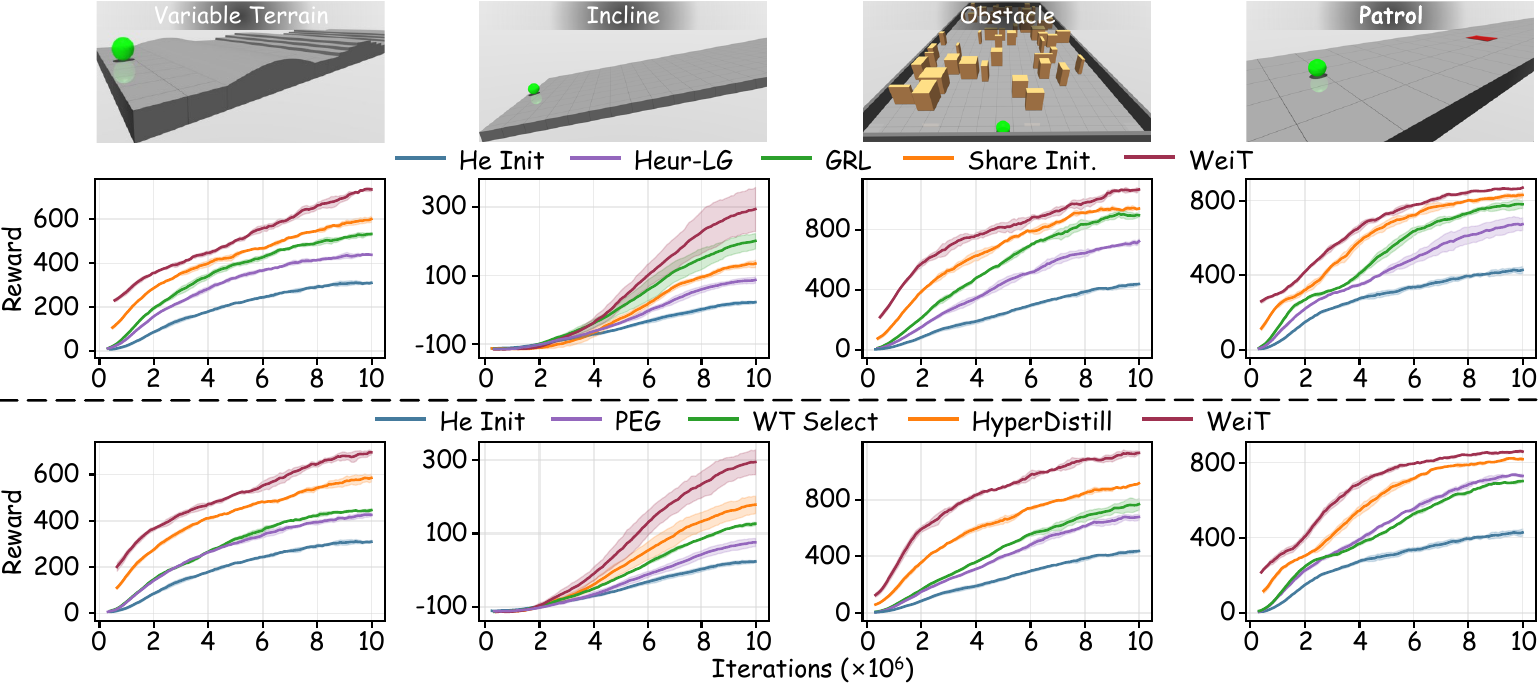}
    \vspace{-0.3in}
    \caption{\textbf{\textsc{(Embodied Control)}} 
    Performance of initialized models on downstream datasets with training morphologies using an $L_2H_2$ policy model. We further provide a visualization of the novel task environments, illustrating their variability relative to the training tasks.}
    \label{fig:control_downstream}
\end{figure*}

\subsection{Main Results on Embodied Control}
\label{sec:result_cont}

\subsubsection{Experimental Setup}
\begin{itemize}
\item \textbf{Basic Settings.}
We study the universal morphology control task within the UNIMAL design space~\cite{gupta2021embodied}, following MetaMorph~\cite{guptametamorph} with 100 training and 100 novel morphologies. 
The Morphology-Aware Transformer~\cite{guptametamorph} is adopted as the backbone, using $L_6H_4$ for constraint-based pre-training of WeiT on Flat Terrain (FT). 
Pre-training is conducted for 30 epochs with a batch size of 5120 and a learning rate of $1 \times 10^{-3}$, using Adam optimizer with a cosine learning rate scheduler.

\item \textbf{Evaluation.}
For multi-scale initialization, model depth is varied from $L_1$ to $L_{10}$ and width from $H_1$ to $H_8$. Performance is measured by the accumulated reward per episode. 
Transferability is further assessed on diverse novel tasks, including Variable Terrain (VT), Incline, Obstacle, and Patrol (see App.~\ref{app:rl_reward} for details).
\end{itemize}

\subsubsection{Initialization across Model Scales}
Unlike supervised vision tasks, embodied control requires learning dynamics-sensitive policies, where initialization critically affects exploration, stability, and convergence (see Fig.~\ref{fig:control_scale}). 
WeiT’s weight templates encode structured, reusable knowledge that is largely size-agnostic, enabling consistent performance across both small and large policy networks. 
This flexibility allows smaller models to achieve competitive rewards despite limited capacity (e.g., $L_1H_1$), while enabling larger models to converge faster and attain higher cumulative rewards. 
In contrast, comparative initialization methods, such as GRL~\cite{feng2023genes} and HyperDistill~\cite{xiong2024distilling}, often struggle to balance capacity and stability, resulting in suboptimal exploration and slower convergence.

Moreover, WeiT’s structured initialization reduces sensitivity to morphology variations, enabling robust transfer across both training and novel agent morphologies. 
These properties indicate that WeiT not only scales effectively across model sizes but also facilitates data-efficient and stable policy learning in complex reinforcement learning environments.

\subsubsection{Transferability to Downstream Datasets}
In \textsc{Embodied Control}, downstream tasks often exhibit substantial variation in environment dynamics and terrain complexity (see Fig.~\ref{fig:control_downstream}), making effective knowledge transfer crucial for data-efficient learning. 
WeiT addresses this by leveraging size-agnostic weight templates to encode reusable priors over control dynamics and morphology-aware behaviors, thereby enabling rapid adaptation to unseen tasks with minimal additional training. 
Notably, WeiT-initialized policies achieve stronger initial performance, indicating that the transferred priors facilitate early-stage learning and consequently lead to higher cumulative rewards and faster convergence.

\begin{table}
    \centering
    \setlength{\tabcolsep}{0.5 mm} 
    \caption{Performance on Initializing Convolution-based Models. We extend WeiT to ConvNeXt-v2 and evaluate its effectiveness on Image Classification.}
    \vspace{-0.07in}
    \resizebox{\linewidth}{!}{
        \begin{tabular}{@{}lccccc@{}}
        \toprule[1pt]
        & atto-$L_4$ & femto-$L_6$ & pico-$L_9$ & nano-$L_{12}$ & tiny-$L_{15}$    \\
        \cmidrule{2-6}
        \multirow{2}{*}{\textsc{Param.} / \textsc{FLOPs}} 
        & \cellcolor{gray!15}{1.7M} 
        & \cellcolor{gray!15}{3.0M} 
        & \cellcolor{gray!15}{7.4M} 
        & \cellcolor{gray!15}{13.9M} 
        & \cellcolor{gray!15}{25.0M}  \\
        & \cellcolor{gray!15}{0.4G} 
        & \cellcolor{gray!15}{0.8G} 
        & \cellcolor{gray!15}{2.1G} 
        & \cellcolor{gray!15}{4.2G} 
        & \cellcolor{gray!15}{7.5G}  \\
        \toprule[1pt]
        WT-Select~\cite{xu2023initializing}
            & 55.9 & 63.1 & 69.2 & 72.2 & 73.9 \\
        Iso. Pruning~\cite{fang2024isomorphic}
            & 50.6 & 57.6 & 65.3 & 70.0 & 72.9 \\
        \cellcolor{blue!12}{WeiT}
               & \cellcolor{blue!12}{\textbf{57.2}} 
               & \cellcolor{blue!12}{\textbf{65.4}} 
               & \cellcolor{blue!12}{\textbf{71.7}} 
               & \cellcolor{blue!12}{\textbf{74.9}} 
               & \cellcolor{blue!12}{\textbf{76.6}}  \\
                   & \textcolor{mygreen}{$\uparrow$1.3}
                   & \textcolor{mygreen}{$\uparrow$2.3} 
                   & \textcolor{mygreen}{$\uparrow$2.6}
                   & \textcolor{mygreen}{$\uparrow$2.7} 
                   & \textcolor{mygreen}{$\uparrow$2.7}  \\
        \bottomrule[1pt]
        \end{tabular}   
        }
\label{tab:cnn}
\vspace{-0.1in}
\end{table}

\begin{figure*}[!t]
    \centering
    \includegraphics[width=\linewidth]{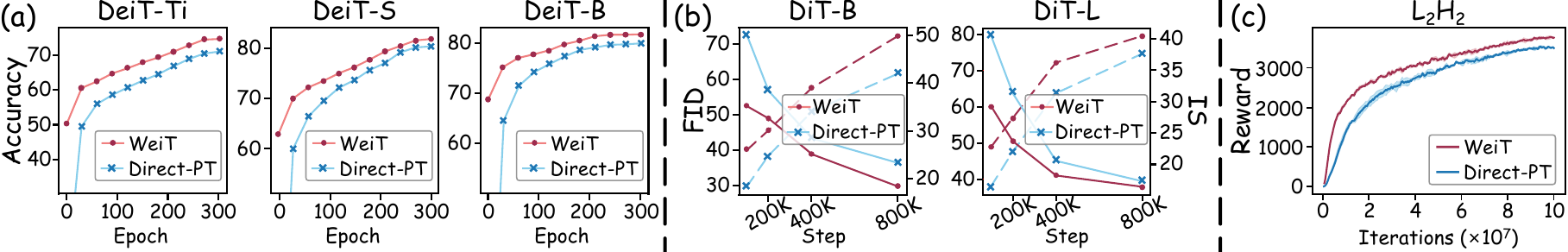}
    \vspace{-0.25in}
    \caption{Performance under Extended Training after Initialization. Full training is conducted for both directly pre-trained models (i.e., Direct PT) and WeiT-initialized models across \textsc{Image Classification}, \textsc{Image Generation}, and \textsc{Embodied Control}, with the number of training updates extended to 300 epochs, 800K steps, and $1\times10^8$ iterations, respectively.}
    \label{fig:full_train}
    \vspace{-0.15in}
\end{figure*}

\subsection{Performance on Convolution-based Architectures}
\label{sec:cnn}
To assess the generality of \textit{Constraint-based Pre-training} beyond Transformer-based architectures, we extend WeiT to Convolution-based models, focusing on ConvNeXt-v2~\cite{woo2023convnext}—a modern hierarchical convolutional backbone. 
By aggregating parameters of Convolution Kernels analogously to Transformers (Eq.~\eqref{eq:agg}), the proposed constraint is directly applicable (Eq.~\eqref{equ:kro}), with weight templates pre-trained on ImageNet-1K using a similar constraint-based procedure (see App.~\ref{app:cnn}).

Experimental results on \textsc{Image Classification} demonstrate that WeiT-initialized ConvNeXt consistently outperforms standard rule- and pruning-based initialization across variable model sizes (see Table~\ref{tab:cnn}), indicating that the proposed constraint-based weight templates capture transferable structural priors that remain effective under convolutional parameterizations and can be preserved and reused to support scalable initialization beyond Transformer-based architectures.

\subsection{Performance over Extended Training}
\label{sec:full_train}
To analyze the persistence of structural priors encapsulated in the weight templates, we examine training trajectories over extended horizons.
As shown in Fig.~\ref{fig:full_train}, WeiT-initialized models consistently converge faster and achieve higher final performance than training from scratch, indicating that the benefits of constraint-based pre-training persist beyond the early optimization stage and continue to influence the overall training process.
The sustained performance gap further suggests that the encapsulated priors act as enduring inductive structures rather than transient initialization effects, thereby making early-stage metrics (Table~\ref{tab:init_class}, Table~\ref{tab:init_gene}, and Fig.~\ref{fig:control_scale}) reliable indicators of long-term performance.

\subsection{Ablation and Analysis}
\label{sec:ablation}
\begin{table*}
    \centering
    \setlength{\tabcolsep}{1.2 mm} 
    \caption{Performance of Scale-up Initialization for Larger Models. Here, ``---'' indicates that the method does not support extension to the corresponding task, while ``NaN'' denotes cases where post-initialization training is unstable, resulting in undefined values.}
    \vspace{-0.1in}
    \resizebox{\linewidth}{!}{
        \begin{tabular}{@{}lcccc|cc||ccc|cc||cc|cc@{}}
        \toprule[1pt]
        & & \multicolumn{5}{c}{\textsc{Image Classification}}
        & \multicolumn{5}{c}{\textsc{Image Generation}}
        & \multicolumn{4}{c}{\textsc{Embodied Control}} \\
        \cmidrule(lr){3-7}
        \cmidrule(lr){8-12}
        \cmidrule(l){13-16}
        
        \multirow{2}{*}{\textsc{\textbf{\makecell{Depth / Width \\Var.}}}} & \multirow{2}{*}{\textbf{\makecell{w/ Train \\ Scaler}}} & \multicolumn{3}{c}{$H_6$} 
        & \multicolumn{2}{c||}{$L_6$} 
        & \multicolumn{3}{c}{$H_{12}$} 
        & \multicolumn{2}{c||}{$L_{12}$} 
        & \multicolumn{2}{c}{$H_{2}$} 
        & \multicolumn{2}{c}{$L_{2}$} \\
        \cmidrule(lr){3-5}
        \cmidrule(lr){6-7}
        \cmidrule(lr){8-10}
        \cmidrule(lr){11-12}
        \cmidrule(lr){13-14}
        \cmidrule(l){15-16}
        & & $L_{16}$ & $L_{20}$ & $L_{24}$ & $H_{18}$ & $H_{24}$
        & $L_{18}$ & $L_{21}$ & $L_{24}$ & $H_{24}$ & $H_{32}$  & $L_{8}$ & $L_{10}$ & $H_{6}$ & $H_{8}$ \\
        \cmidrule(r){1-5}
        \cmidrule(lr){6-7}
        \cmidrule(lr){8-10}
        \cmidrule(lr){11-12}
        \cmidrule(lr){13-14}
        \cmidrule(l){15-16}
        \multirow{2}{*}{\textsc{Param.} / \textsc{FLOPs}} & & \cellcolor{gray!15}{29.3M} 
        & \cellcolor{gray!15}{36.4M} 
        & \cellcolor{gray!15}{43.6M} 
        & \cellcolor{gray!15}{98.0M} 
        & \cellcolor{gray!15}{173.1M}
        & \cellcolor{gray!15}{194.1M} 
        & \cellcolor{gray!15}{226.0M} 
        & \cellcolor{gray!15}{257.9M} 
        & \cellcolor{gray!15}{519.0M} 
        & \cellcolor{gray!15}{921.6M}
        & \cellcolor{gray!15}{5.3M}
        & \cellcolor{gray!15}{6.6M}
        & \cellcolor{gray!15}{11.9M}
        & \cellcolor{gray!15}{21.1M}\\
        
        & & \cellcolor{gray!15}{12.0G} 
        & \cellcolor{gray!15}{15.0G} 
        & \cellcolor{gray!15}{18.0G} 
        & \cellcolor{gray!15}{38.8G} 
        & \cellcolor{gray!15}{68.5G} 
        & \cellcolor{gray!15}{65.4G} 
        & \cellcolor{gray!15}{76.3G} 
        & \cellcolor{gray!15}{87.2G} 
        & \cellcolor{gray!15}{174.4G} 
        & \cellcolor{gray!15}{310.0G}
        & \cellcolor{gray!15}{0.1G}
        & \cellcolor{gray!15}{0.1G}
        & \cellcolor{gray!15}{0.2G}
        & \cellcolor{gray!15}{0.3G}\\
        \toprule[1pt]
        He Init.~\cite{chen2021empirical} & 
                & 56.12 & 57.62 & 57.97
                & 45.62 & 30.16 
                & 67.73 & 63.88 & 59.66
                & 57.07 & 55.53 
                & 1568 & 1646
                & 1464 & 1341 \\
        LiGO~\cite{wang2023learning} & 
                & 75.00 & 76.22 & 76.47
                & 69.56 & 71.56 
                & 41.87 & 40.69 & 38.35
                & 43.82 & 46.18
                & --- & --- & --- & --- \\
        BoT~\cite{shen2026unified} & 
                & 75.86 & 76.29 & 76.57
                & 66.81 & 75.52 
                & 44.32 & 41.06 & 40.10
                & 63.98 & NaN
                & 2277 & 2344
                & 2131 & 1858 \\
        \midrule
        WeiT\waveicon & \ding{56}
             & 75.29 & 75.56 & 75.64
             & 72.31 & 74.99
             & 38.69 & 36.94 & 36.57
             & 49.32 & 46.70
             & 2649 & 2690
             & 2344 & 2319 \\
        WeiT\waveicon & \ding{51}
             & 75.47 & 75.62 & 75.74
             & 74.54 & 78.29
             & 37.19 & 36.25 & 35.12
             & 45.49 & 41.40
             & 2729 & 2785
             & 2643 & 2743 \\
        \cellcolor{blue!12}{WeiT} & \cellcolor{blue!12}{\ding{51}}
             & \cellcolor{blue!12}{\textbf{77.37}} & \cellcolor{blue!12}{\textbf{77.59}} & \cellcolor{blue!12}{\textbf{77.80}} & \cellcolor{blue!12}{\textbf{78.63}} & \cellcolor{blue!12}{\textbf{79.42}} 
             & \cellcolor{blue!12}{\textbf{35.77}} & \cellcolor{blue!12}{\textbf{35.21}} & \cellcolor{blue!12}{\textbf{32.74}} & \cellcolor{blue!12}{\textbf{40.12}} & \cellcolor{blue!12}{\textbf{37.11}} & \cellcolor{blue!12}{\textbf{2822}}
             & \cellcolor{blue!12}{\textbf{2866}}
             & \cellcolor{blue!12}{\textbf{2785}}
             & \cellcolor{blue!12}{\textbf{2852}}\\
            & & \textcolor{mygreen}{$\uparrow$1.51} 
            & \textcolor{mygreen}{$\uparrow$1.30}
            & \textcolor{mygreen}{$\uparrow$1.23} 
            & \textcolor{mygreen}{$\uparrow$4.09}
            & \textcolor{mygreen}{$\uparrow$1.13} 
            & \textcolor{mygreen}{$\downarrow$1.42} 
            & \textcolor{mygreen}{$\downarrow$1.04}
            & \textcolor{mygreen}{$\downarrow$2.38} 
            & \textcolor{mygreen}{$\downarrow$5.37}
            & \textcolor{mygreen}{$\downarrow$4.29}
            & \textcolor{mygreen}{$\uparrow$93}
            & \textcolor{mygreen}{$\uparrow$81}
            & \textcolor{mygreen}{$\uparrow$142}
            & \textcolor{mygreen}{$\uparrow$109}\\
        \bottomrule[1pt]
        
        \end{tabular}   
        }
    \label{tab:larger}
    \vspace{-0.15in}
\end{table*}

\begin{table}
    \centering
    \setlength{\tabcolsep}{0.8 mm} 
    \caption{Ablation Study on Constraint Types in Constraint-based Pre-training for Variable-sized Model Initialization.}
    \vspace{-0.1in}
    \resizebox{\linewidth}{!}{
        \begin{tabular}{@{}lcc||cc||cc@{}}
        \toprule[1pt]
        & \multicolumn{2}{c}{\textsc{\makecell{Image\\Classification}}} & \multicolumn{2}{c}{\textsc{\makecell{Image\\Generation}}} & \multicolumn{2}{c}{\textsc{\makecell{Embodied\\Control}}} \\
        \cmidrule(lr){2-3}
        \cmidrule(lr){4-5}
        \cmidrule(l){6-7}
        & $L_4 H_3$ & $L_8 H_6$ & $L_6 H_{12}$ & $L_8 H_{16}$ & $L_1 H_2$ & $L_2 H_3$ \\
        \cmidrule(r){1-3}
        \cmidrule(lr){4-5}
        \cmidrule(l){6-7}
        w/o Constraits
                & 36.33 & 60.15 & 74.97 & 69.57 & 1400 & 2210  \\
        \midrule
        Identity
                & 55.16 & 71.65 & 59.03 & 43.76 & 1691 & 2411  \\
        Linear
                & 55.00 & 72.14 & 54.97 & 44.32 & 1846 & 2593  \\
        SVD
                 & 57.88 & 73.76 & 51.58 & 41.24 & 1813 & 2475  \\
        \midrule
        \cellcolor{blue!12}{Kronecker}
                & \cellcolor{blue!12}{\textbf{58.64}} & \cellcolor{blue!12}{\textbf{74.06}} & \cellcolor{blue!12}{\textbf{48.40}} & \cellcolor{blue!12}{\textbf{29.72}} & \cellcolor{blue!12}{\textbf{1963}} & \cellcolor{blue!12}{\textbf{2699}} \\
        \bottomrule[1pt]
        \end{tabular}   
        }
    \label{tab:constait}
    \vspace{-0.13in}
\end{table}

\begin{table}
    \centering
    \setlength{\tabcolsep}{0.8 mm} 
    \caption{Ablation Study on Template Scaling Mechanism for Variable-sized Model Initialization.}
    \vspace{-0.1in}
    \resizebox{\linewidth}{!}{
        \begin{tabular}{@{}lccc||cc||cc@{}}
        \toprule[1pt]
        & \multirow{3}{*}{\textbf{\makecell{w/ Template\\Scaling}}} & \multicolumn{2}{c}{\textsc{\makecell{Image\\Classification}}} & \multicolumn{2}{c}{\textsc{\makecell{Image\\Generation}}} & \multicolumn{2}{c}{\textsc{\makecell{Embodied\\Control}}} \\
        \cmidrule(lr){3-4}
        \cmidrule(lr){5-6}
        \cmidrule(l){7-8}
        & & $L_6 H_4$ & $L_{12} H_8$ & $L_6 H_9$ & $L_{12} H_{15}$ & $L_1 H_2$ & $L_2 H_3$ \\
        \cmidrule(r){1-4}
        \cmidrule(lr){5-6}
        \cmidrule(l){7-8}
        WT-Select & 
                & 49.26 & 71.80 & 79.64 & 37.14 & 1400 & 2210  \\
        \midrule
        WeiT & \ding{56}
                & 59.48 & 77.75 & 58.96 & 30.86 & 1583 & 2388  \\
        \cellcolor{blue!12}{WeiT} & \cellcolor{blue!12}{\ding{51}}
                & \cellcolor{blue!12}{\textbf{64.20}} & \cellcolor{blue!12}{\textbf{79.34}}
                & \cellcolor{blue!12}{\textbf{48.94}} & \cellcolor{blue!12}{\textbf{27.11}}& \cellcolor{blue!12}{\textbf{1980}} & \cellcolor{blue!12}{\textbf{2714}} \\
        \bottomrule[1pt]
        \end{tabular}   
        }
    \label{tab:abl}
    \vspace{-0.1in}
\end{table}

\begin{figure}[tb]
  \centering
  \includegraphics[width=\linewidth]{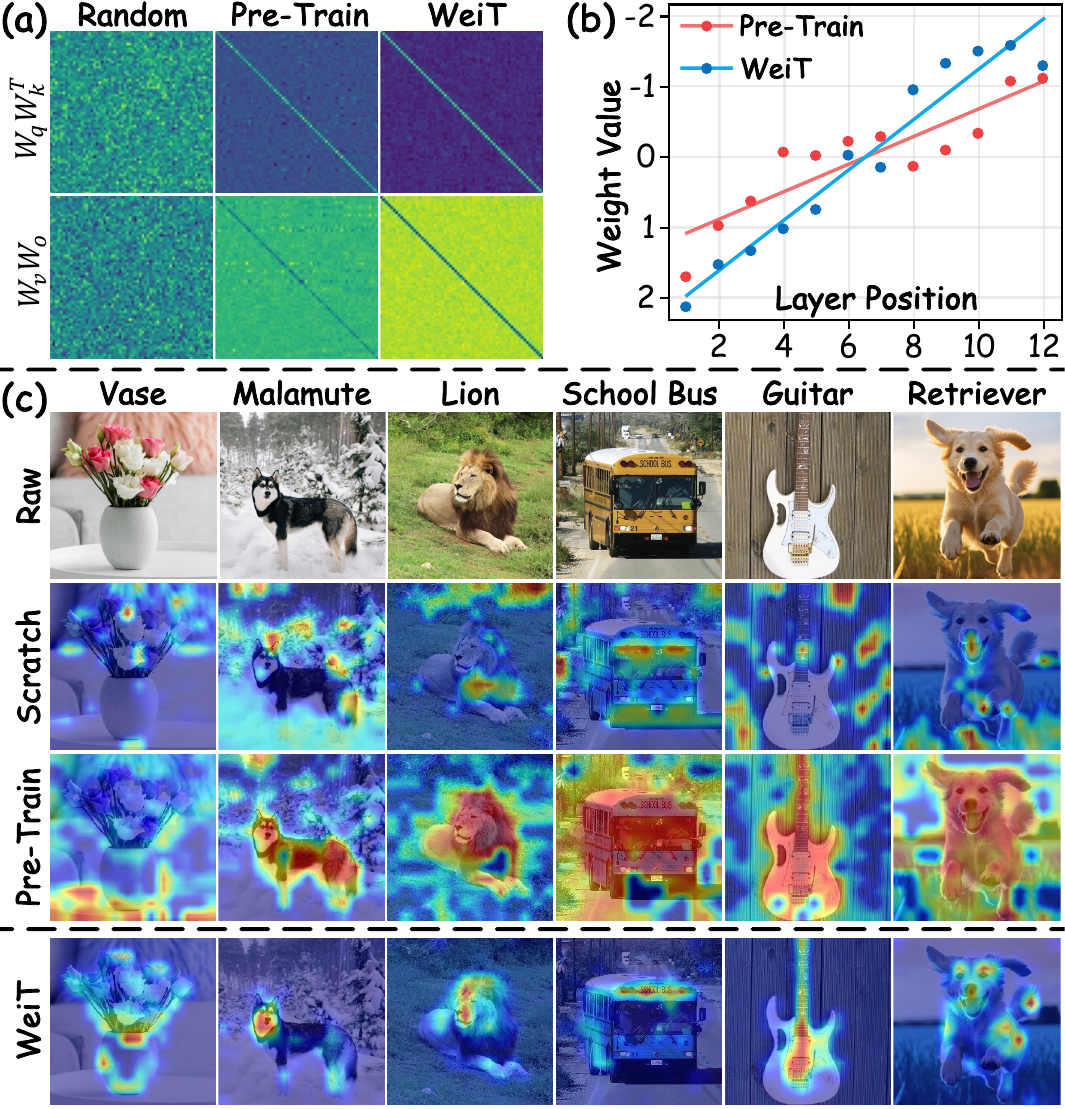}
  \vspace{-0.3in}
  \caption{Visualization of Knowledge Encapsulated in Weight Templates.
  (a)~Knowledge captured in self-attention layers, with weight templates inheriting the diagonal property characteristic of pre-trained ViTs~\cite{trockman2023mimetic}.
  (b)~Relationship between layer position and corresponding parameter values after PCA~\cite{greenacre2022principal}, with models initialized by WeiT reflecting the approximately linear patterns observed in pre-trained models~\cite{xia2024transformer}.
  (c)~Attention visualized using CAM~\cite{selvaraju2017grad}, with models initialized by WeiT capturing the core attention patterns directly after initialization~\cite{feng2024transferring}.}
  \label{fig:structure}
  \vspace{-0.2in}
\end{figure}

\subsubsection{Analysis on Constraint Types}
\label{sec:constr_type}
Constraint-based pre-training introduces structural constraints to regularize optimization and facilitate the extraction of size-agnostic knowledge.
Table~\ref{tab:constait} presents WeiT’s performance under different constraints, showing that all structured constraints contribute positively, whereas unconstrained pre-training produces tightly coupled weights that generalize poorly to unseen scales.

Moreover, Identity- and Linear-based constraints restrict weight templates to simple layer-wise reuse, thereby restricting both expressiveness and inter-layer specificity. 
SVD-based constraints partially address this by enabling depth-wise decomposition, enhancing inter-layer specialization, yet they remain relatively inflexible in accommodating width variations.
In contrast, Kronecker-based constraints disentangle knowledge along both depth and width, preserving layer-specific patterns while enabling flexible width-wise feature reuse, thereby yielding expressive, size-agnostic representations that generalize robustly across models of varying sizes.

\subsubsection{Analysis on Low-Rank Bottleneck}
\label{sec:low_rank}
During constraint-based pre-training, a low-rank bottleneck is imposed on the weight templates to maintain a substantially lower parameter count than the corresponding pre-trained model.
This design encourages template reuse, enabling scalable initialization beyond the original model size while preserving expressive capacity.
As shown in Table~\ref{tab:larger}, WeiT-initialized models not only outperform direct pre-training when scaling to larger sizes but also surpass methods specifically designed for model expansion, such as LiGO~\cite{wang2023learning} and BoT~\cite{shen2026unified}.

\subsubsection{Ablation on the Training of Weight Scalers}
To evaluate the effectiveness of the training of lightweight weight scalers, we conduct an ablation study on their role in adapting weight templates to target model sizes. 
Unlike distillation-based methods that require retraining full model parameters, WeiT optimizes only a small set of weight scalers to reconstruct size-specific weights from shared templates. 
As shown in Table~\ref{tab:larger}, this lightweight adaptation incurs negligible overhead (a minute-level of wall-clock time) while achieving superior performance, indicating that weight scalers effectively capture concatenation and weighted aggregation rules for adapting templates across varying depths and widths.

\subsubsection{Ablation on the Template Scaling Mechanism}
The Template Scaling Mechanism applies structured dropout to the weight templates during constraint-based pre-training, encouraging the reorganization of size-agnostic knowledge in the weight templates along the width dimension.
As shown in Table~\ref{tab:abl}, this mechanism facilitates effective generalization and stable initialization across width-variant models. 
Without it, templates may lose critical information when adapting to reduced widths, resulting in degraded performance.

\subsection{Visualization of Knowledge in Weight Templates}
\label{sec:visual}
\subsubsection{Visualization of Structured Knowledge}
Prior works, such as Mimetic Init.~\cite{trockman2023mimetic} and TLEG~\cite{xia2024transformer}, reveal diagonal patterns in self-attention layers and linear correlations across layers, respectively. However, these findings are specific to pre-trained ViTs and require manual preservation of structured knowledge during initialization. 
Remarkably, as shown in Fig.~\ref{fig:structure}a,b, WeiT autonomously captures such structural patterns within its weight templates, without any manual intervention. 
As a result, models initialized from these templates inherently preserve such characteristic structures in their parameter matrices.

\subsubsection{Visualization of Common Knowledge}
We further demonstrate in \textsc{Image Classification} that weight templates guide models to focus more on common local features after initialization.
As shown in Fig.~\ref{fig:structure}c, random initialization shows scattered and widespread attention, whereas pre-trained models transfer a full set of previously learned knowledge, leading to broader attention that often includes irrelevant regions, such as image background.
In contrast, WeiT focus more on local features (i.e., smaller red attention regions), demonstrating superior localization and a cleaner focus (i.e., removing attention on image background), thereby enhancing classification performance.

\section{Conclusion}
We introduce constraint-based pre-training, a novel paradigm for pre-training models that can flexibly initialize variable-sized downstream models.
Within this paradigm, we propose WeiT, which leverages Kronecker-based constraints to encapsulate size-agnostic knowledge into weight templates, complemented by lightweight, size-specific weight scalers for efficient initialization across diverse model scales.
WeiT achieves superior performance in both depth and width scaling across multiple tasks, including \textsc{Image Classification}, \textsc{Image Generation}, and \textsc{Embodied Control}. 
Its effectiveness further generalizes across both Transformer-based and Convolution-based architectures, while consistently exhibiting faster convergence and superior performance under full training.

\section*{Acknowledgments}
This research was supported by the Jiangsu Science Foundation (BG2024036, BK20243012), the National Natural Science Foundation of China (625B2045, 62125602, U24A20324, 92464301, 62306073), the New Cornerstone Science Foundation through the XPLORER PRIZE, the Fundamental Research Funds for the Central Universities (2242025K30024), and SEU Innovation Capability Enhancement Plan for Doctoral Students (CXJH\_SEU 26023).

\bibliographystyle{IEEEtran}
\bibliography{reference.bib}

\section{Biography Section}
\begin{IEEEbiography}[{\includegraphics[width=1in,height=1.25in,clip,keepaspectratio]{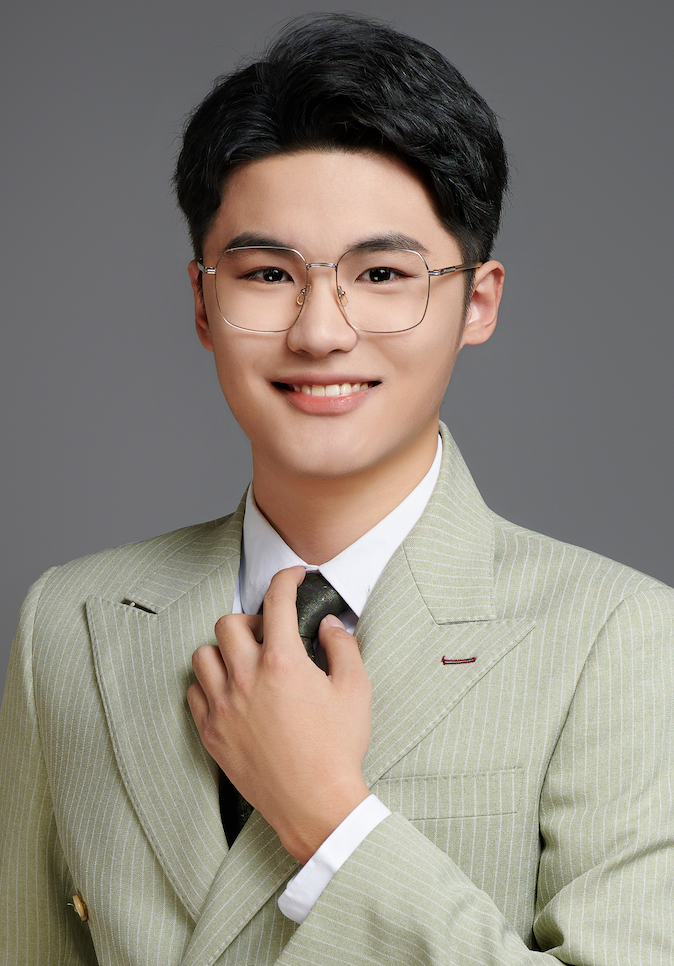}}]
{Fu Feng}
received the B.Sc. (Hons.) degree in artificial intelligence from Chien-Shiung Wu College, Southeast University, Nanjing, China, in 2023. 
He is currently pursuing the Ph.D. degree in the School of Computer Science and Engineering, Southeast University. 
His research interests include machine learning, intelligent agents and creative generation.
\end{IEEEbiography}

\begin{IEEEbiography}[{\includegraphics[width=1in,height=1.25in,clip,keepaspectratio]{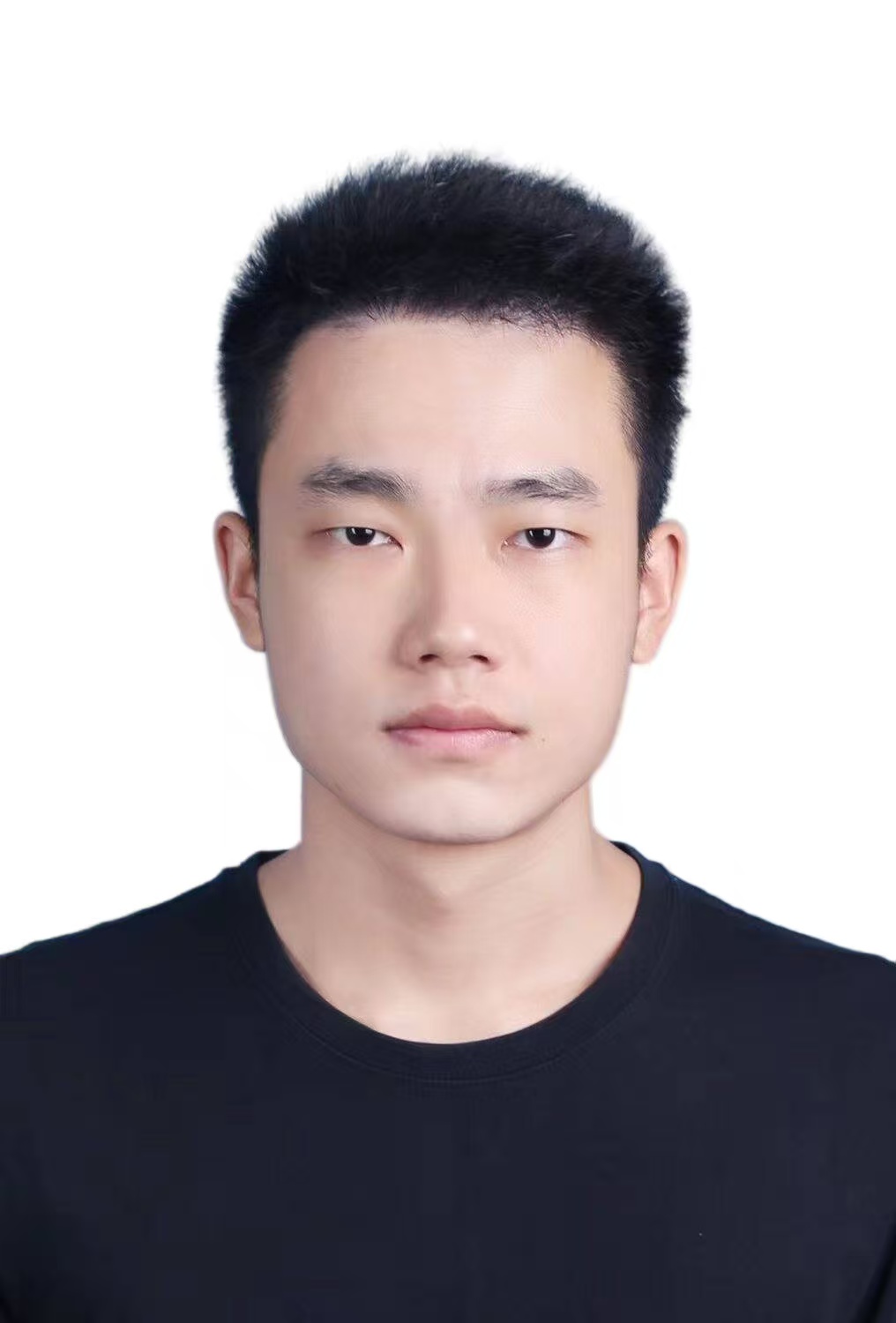}}]
{Yucheng Xie}
received the B.Sc. degree in computer science from Ocean University of China, Qingdao, China, in 2023. 
He is currently pursuing the Ph.D. degree in the School of Computer Science and Engineering, Southeast University. 
His research interests include machine learning and computer vision.
\end{IEEEbiography}

\begin{IEEEbiography}[{\includegraphics[width=1in,height=1.25in,clip,keepaspectratio]{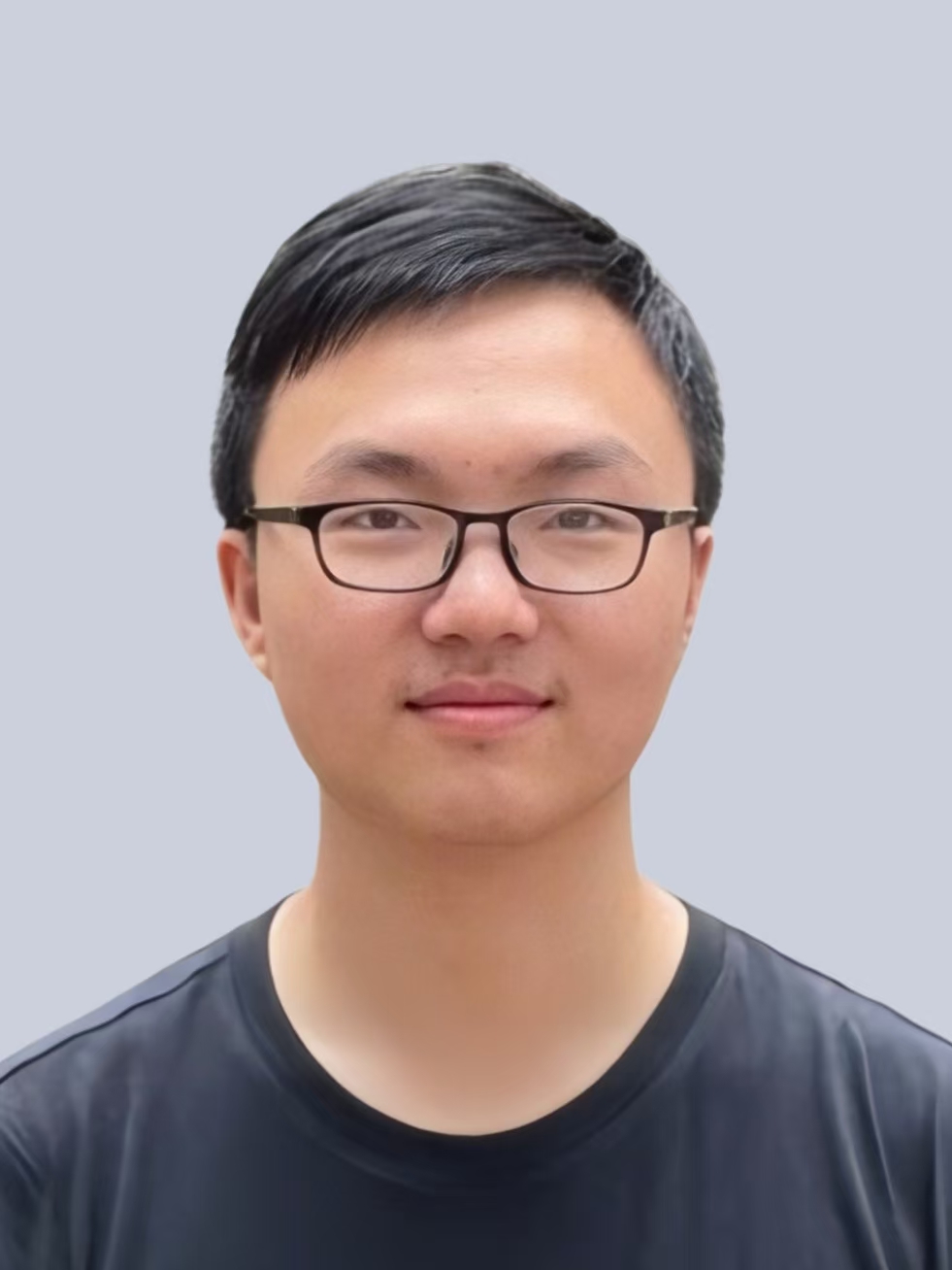}}]
{Ruixiao Shi}
received the B.Sc. degree in artificial intelligence from Southeast University, Nanjing, China, in 2025. 
He is currently pursuing the M.Sc degree in the School of Computer Science and Engineering, Southeast University. 
His research interests include machine learning and intelligent agents.
\end{IEEEbiography}

\begin{IEEEbiography}[{\includegraphics[width=1in,height=1.25in,clip,keepaspectratio]{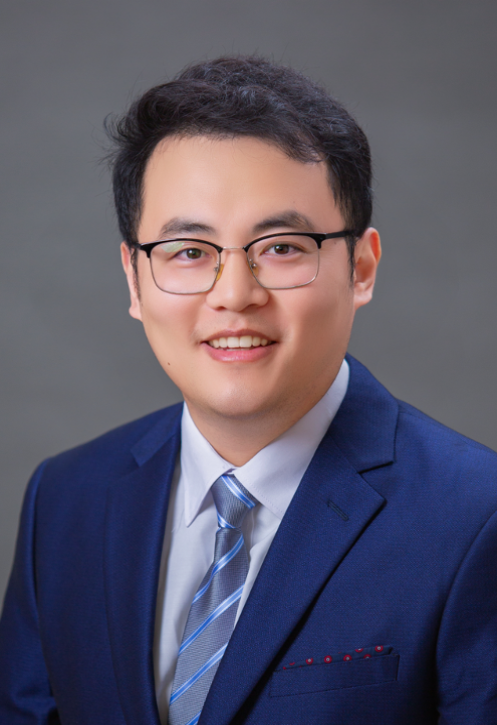}}]
{Jing Wang}
received the B.Sc. degree in computer science from Suzhou University of Science and Technology, Suzhou, China, in 2013, and the M.Sc. degree in computer science from Northeastern University, Shenyang, China, in 2015, and the Ph.D. degree in software engineering from Southeast University, Nanjing, China, in 2021. 
He is currently an assistant professor  of the School of Computer Science and Engineering, Southeast University, Nanjing. His research interests include pattern recognition and machine learning.
\end{IEEEbiography}

\begin{IEEEbiography}[{\includegraphics[width=1in,height=1.25in,clip,keepaspectratio]{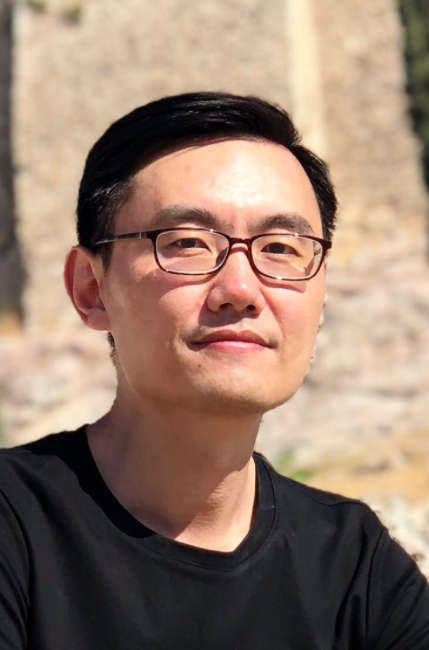}}]
{Xin Geng (Senior Member, IEEE)}
received the B.Sc. and M.Sc. degrees in computer science from Nanjing University, Nanjing, China, in 2001 and and 2004, respectively, and the Ph.D. degree in computer science from Deakin University, Geelong, VIC, Australia, in 2008. 

He is currently a chair professor of the School of Computer Science and Engineering, Southeast University, Nanjing. His research interests include machine learning, pattern recognition, and computer vision. He has published over 100 refereed articles in these areas, including those published in prestigious journals and top international conferences. 

Dr. Geng has been an Associate Editor of IEEE TRANSACTIONS ON MULTIMEDIA, Frontiers of Computer Science, and Mathematical Foundations of Computing, a Steering Committee Member of Pacific Rim International Conferences on Artificial Intelligence (PRICAI), a Program Committee Chair for conferences, such as PRICAI 2018 and  Vision And Learning SEminar (VALSE) 2013, the Area Chair for conferences, such as Computer Vision and Pattern Recognition (CVPR), ACM Multimedia, and Chinese Conference on Pattern Recognition (CCPR), and a Senior Program Committee Member for conferences, such as International Joint Conference on Artificial Intelligence (IJCAI), AAAI Conference on Artificial Intelligence (AAAI), and European Conference on Artificial Intelligence (ECAI). He is a Distinguished Fellow of International Engineering and Technology Institute. 
\end{IEEEbiography}

\clearpage
\appendices
\onecolumn
\renewcommand{\arraystretch}{1}

\section{Theoretical Insights of WeiT}
\label{sec:proof}
\subsection{Theoretical Guarantee of Expressivity under Kronecker-based Constraints}
\label{sec:theoretical_guarantee}

To address potential concerns regarding whether the imposed Kronecker-based constraints and the low-rank bottleneck restrict the hypothesis space too severely, we provide a theoretical guarantee. By interpreting the weight templates as a shared ``knowledge dictionary'' and the scalers as ``local combination coefficients'', we demonstrate that our constraint-based formulation retains the universal approximation capabilities of the unconstrained network, provided that the number of templates $N$ is appropriately bounded by the singular value decay of the target weights.

\begin{definition}[Parameter Rearrangement Operator $\mathcal{R}$]
Let the original unconstrained unified weight matrix be $\mathcal{W} \in \mathbb{R}^{L \times P}$. Given the weight templates $\mathcal{T}_i \in \mathbb{R}^{1 \times (r_1 \cdot r_2)}$ and scalers $\mathcal{S}_{\star,i} \in \mathbb{R}^{L \times B}$ (where $B = \frac{P}{r_1 \cdot r_2}$), we define the feature dimension span as $A = r_1 \cdot r_2$. We can conceptually partition $\mathcal{W}$ into $L \cdot B$ sub-blocks, each of size $1 \times A$. We define the rearrangement operator $\mathcal{R}: \mathbb{R}^{L \times P} \to \mathbb{R}^{(L \cdot B) \times A}$ such that the $k$-th row of the rearranged matrix $\tilde{\mathcal{W}} = \mathcal{R}(\mathcal{W})$ corresponds to the $k$-th $(1 \times A)$ sub-block of the original matrix $\mathcal{W}$.
\end{definition}

\begin{lemma}[Truncated Singular Value Bound of Kronecker Decomposition]
\label{lemma:svd_bound}
For any given unconstrained weight matrix $\mathcal{W} \in \mathbb{R}^{L \times P}$, the constrained weight matrix $\mathcal{W}_{\star} = \sum_{i=1}^{N} \mathcal{T}_{i} \otimes \mathcal{S}_{\star,i}$ is mathematically equivalent to a rank-$N$ matrix factorization in the rearranged space $\mathbb{R}^{(L \cdot B) \times A}$. Furthermore, the Frobenius norm of its minimum reconstruction error is strictly bounded by the truncated singular values of $\tilde{\mathcal{W}}$:
\begin{equation*}
    \min_{\mathcal{T}, \mathcal{S}} \|\mathcal{W} - \mathcal{W}_{\star}\|_F^2 = \sum_{j=N+1}^{\min(L \cdot B, A)} \sigma_j^2(\tilde{\mathcal{W}}),
\end{equation*}
where $\sigma_j(\tilde{\mathcal{W}})$ denotes the $j$-th largest singular value of the rearranged matrix $\tilde{\mathcal{W}}$.
\end{lemma}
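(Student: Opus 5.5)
The plan is the classical Van Loan--Pitsianis rearrangement argument for nearest Kronecker-product approximation, followed by the Eckart--Young--Mirsky theorem. The key point is that $\mathcal{R}$ is an entry-wise permutation, and hence an isometry for $\|\cdot\|_F$. Under $\mathcal{R}$, a single Kronecker product $\mathcal{T}_i\otimes\mathcal{S}_{\star,i}$ becomes a rank-one outer product. The constrained family is therefore exactly the set of matrices whose rearrangement has rank at most $N$.

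\textbf{Step 1 (index bookkeeping).} For $\mathcal{T}_i\in\mathbb{R}^{1\times A}$ and $\mathcal{S}_{\star,i}\in\mathbb{R}^{L\times B}$, the product $\mathcal{T}_i\otimes\mathcal{S}_{\star,i}\in\mathbb{R}^{L\times AB}=\mathbb{R}^{L\times P}$ has entry $(\mathcal{T}_i)_a(\mathcal{S}_{\star,i})_{l,b}$ in row $l$ and column $(a-1)B+b$. I will read the ``$k$-th $(1\times A)$ sub-block'' in the definition of $\mathcal{R}$ as the block compatible with this ordering. That block, for the pair $k\leftrightarrow(l,b)$, collects the entries $\{\mathcal{W}_{l,(a-1)B+b}\}_{a=1}^{A}$; for the opposite Kronecker ordering one uses the contiguous blocks instead, and the argument is identical. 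With this convention,
\[
\mathcal{R}(\mathcal{T}_i\otimes\mathcal{S}_{\star,i})=\mathrm{vec}(\mathcal{S}_{\star,i})\,\mathcal{T}_i\in\mathbb{R}^{(LB)\times A}.
\]
Since $\mathcal{R}$ is linear, $\mathcal{R}(\mathcal{W}_\star)=\sum_{i=1}^N \mathrm{vec}(\mathcal{S}_{\star,i})\mathcal{T}_i=UV^\top$, where $U=[\mathrm{vec}(\mathcal{S}_{\star,1}),\dots,\mathrm{vec}(\mathcal{S}_{\star,N})]$ and $V^\top$ stacks the $\mathcal{T}_i$. Conversely, any $X\in\mathbb{R}^{(LB)\times A}$ with $\mathrm{rank}\,X\le N$ factors as $UV^\top$ with $N$ columns (padding with zeros if needed). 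Reshaping the columns of $U$ back into $L\times B$ scalers then realizes $\mathcal{R}^{-1}(X)$ in the constrained form. This gives the claimed equivalence with rank-$N$ factorization.

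\textbf{Step 2 (isometry).} $\mathcal{R}$ is a bijection between the index sets of $\mathbb{R}^{L\times P}$ and $\mathbb{R}^{(LB)\times A}$, since both have $LBA=LP$ entries. It therefore preserves the Frobenius norm:
\[
\|\mathcal{W}-\mathcal{W}_\star\|_F=\|\tilde{\mathcal{W}}-\mathcal{R}(\mathcal{W}_\star)\|_F.
\]
Combining with Step 1, the minimization over $(\mathcal{T},\mathcal{S})$ equals $\min_{\mathrm{rank}X\le N}\|\tilde{\mathcal{W}}-X\|_F^2$.

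\textbf{Step 3 (Eckart--Young--Mirsky).} The minimum in Step 2 is attained by the truncated SVD $X=\sum_{j\le N}\sigma_j u_jv_j^\top$, and its value is $\sum_{j=N+1}^{\min(LB,A)}\sigma_j^2(\tilde{\mathcal{W}})$. An explicit minimizer is $\mathcal{T}_j=\sigma_j v_j^\top$ and $\mathrm{vec}(\mathcal{S}_{\star,j})=u_j$, so the minimum is attained and the equality holds; when $N\ge\mathrm{rank}\,\tilde{\mathcal{W}}$ the sum is empty and the error is zero.

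The main obstacle is only Step 1: making the block convention in $\mathcal{R}$ consistent with the ordering of the Kronecker product, so that each Kronecker term maps exactly to a rank-one outer product rather than to a permuted version of one. Once that is fixed, Steps 2 and 3 are standard.
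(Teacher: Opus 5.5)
Your proposal is correct and follows the same route as the paper. $\mathcal{R}$ turns each $\mathcal{T}_i\otimes\mathcal{S}_{\star,i}$ into the rank-one outer product $\mathrm{vec}(\mathcal{S}_{\star,i})\,\mathcal{T}_i$, $\mathcal{R}$ preserves the Frobenius norm, and Eckart--Young--Mirsky then yields the truncated singular value sum. Your version is somewhat more careful than the paper's in two places: you state the converse explicitly (every matrix of rank at most $N$ in the rearranged space is realized by some $(\mathcal{T},\mathcal{S})$, which is what gives equality rather than only a lower bound on the error), and you observe that for the ordering $\mathcal{T}_i\otimes\mathcal{S}_{\star,i}$ the $1\times A$ sub-blocks must be taken strided rather than contiguous.
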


\begin{proof}
Based on the properties of the Kronecker product, the rearranged form of the block matrix $\mathcal{T}_i \otimes \mathcal{S}_{\star,i}$ under the operator $\mathcal{R}$ maps the elements of the scaler $\mathcal{S}_{\star,i}$ to the corresponding rows, multiplying the shared template $\mathcal{T}_i$. This is exactly equivalent to the outer product of two vectors: $\mathbf{s}_i \mathbf{t}_i^\top$, where $\mathbf{s}_i = \text{vec}(\mathcal{S}_{\star,i}) \in \mathbb{R}^{(L \cdot B) \times 1}$ represents the local combination coefficients, and $\mathbf{t}_i^\top = \mathcal{T}_i \in \mathbb{R}^{1 \times A}$ represents the shared dictionary basis. Consequently, the rearranged constrained matrix can be formulated as:
\begin{equation*}
    \tilde{\mathcal{W}}_{\star} = \mathcal{R}(\mathcal{W}_{\star}) = \sum_{i=1}^{N} \mathbf{s}_i \mathbf{t}_i^\top.
\end{equation*}
This establishes $\tilde{\mathcal{W}}_{\star}$ as a matrix factorization of $\tilde{\mathcal{W}}$ with a maximum rank of $N$. 
According to the Eckart-Young-Mirsky theorem, the optimal rank-$N$ approximation error for $\tilde{\mathcal{W}}$ in terms of the Frobenius norm is given by the sum of squared singular values from index $N+1$ onwards. Since the Frobenius norm is invariant under the spatial rearrangement operator $\mathcal{R}$ (i.e., $\|\mathcal{W} - \mathcal{W}_{\star}\|_F = \|\tilde{\mathcal{W}} - \tilde{\mathcal{W}}_{\star}\|_F$), the lemma is proven.
\end{proof}

\begin{lemma}[Lipschitz Continuity of Network Output]
\label{lemma:lipschitz}
Assuming the base neural network $f(\mathbf{x}; \mathcal{W})$ employs Lipschitz continuous activation functions (e.g., GELU) and the input domain of $\mathbf{x}$ is bounded, there exists a constant $K > 0$ such that for any two weight matrices $\mathcal{W}_1$ and $\mathcal{W}_2$, the network output satisfies:
\begin{equation*}
    \|f(\mathbf{x}; \mathcal{W}_1) - f(\mathbf{x}; \mathcal{W}_2)\| \le K \|\mathcal{W}_1 - \mathcal{W}_2\|_F.
\end{equation*}
\end{lemma}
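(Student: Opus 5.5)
The plan is to prove the bound layer by layer and then chain the per-layer estimates through a telescoping (hybrid) argument. One caveat shapes everything. A Transformer is multilinear in its weights: attention contains products such as $W_q^i (W_k^i)^\top$, and these are composed across layers. So no single constant $K$ can work for \emph{all} pairs $\mathcal{W}_1,\mathcal{W}_2$. I will therefore read the statement, as is implicit in the setting of Lemma~\ref{lemma:svd_bound}, as holding on a bounded parameter set $\mathcal{B}_\rho=\{\mathcal{W}:\|\mathcal{W}\|_F\le\rho\}$, which contains the pre-trained weights and their Kronecker reconstructions. The constant $K=K(\rho,\mathcal{X},L)$ is allowed to depend on $\rho$, on the bounded input domain $\mathcal{X}$, and on the depth. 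I will also assume LayerNorm has a variance floor $\epsilon>0$, as in practice, so that it is globally Lipschitz and bounded.

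\textbf{Step 1: activations stay bounded.} I will show by induction over the $L$ blocks that the hidden state $h^{(\ell)}(\mathbf{x};\mathcal{W})$ satisfies $\|h^{(\ell)}\|\le R_\ell$ for every $\mathbf{x}\in\mathcal{X}$ and $\mathcal{W}\in\mathcal{B}_\rho$. The ingredients are:
\begin{itemize}
\item linear maps scale norms by at most $\rho$;
\item GELU satisfies $|\mathrm{GELU}(z)|\le|z|$;
\item softmax rows are probability vectors, so $A_i$ is a convex combination of rows of $V_i$;
\item LayerNorm outputs have norm $O(\sqrt{D})$ once the affine parameters are bounded.
\end{itemize}

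\textbf{Step 2: each block is jointly Lipschitz.} Next I show that each block map $\phi_\ell(h;\theta_\ell)$ is Lipschitz in $h$ and in its own weights $\theta_\ell$, on $\{\|h\|\le R_{\ell-1}\}\times\mathcal{B}_\rho$. I will expand the product differences, for example
\[
XW_1-XW_2=X(W_1-W_2),
\]
and similarly for attention logits of the form $XW_qW_k^\top X^\top$. I then combine the bounded-factor estimates from Step 1 with three standard facts:
\begin{itemize}
\item softmax is $1$-Lipschitz;
\item GELU is Lipschitz, as the lemma assumes;
\item LayerNorm with the $\epsilon$-floor is Lipschitz.
\end{itemize}
The residual connections only add an identity term to each Lipschitz constant.

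\textbf{Step 3: telescoping over layers.} Define hybrid networks that use the weights of $\mathcal{W}_1$ in the first $\ell$ layers and those of $\mathcal{W}_2$ in the rest. The output difference is then the sum of $L$ consecutive hybrid differences. Each hybrid difference changes one layer's weights and is amplified by the product of the downstream input-Lipschitz constants. This gives
\[
\|f(\mathbf{x};\mathcal{W}_1)-f(\mathbf{x};\mathcal{W}_2)\|\le\sum_\ell C_\ell\|\theta_\ell^{1}-\theta_\ell^{2}\|_F\le K\|\mathcal{W}_1-\mathcal{W}_2\|_F,
\]
where the last inequality uses Cauchy--Schwarz together with the fact that $\mathcal{W}$ is the row-wise concatenation of the $\theta_\ell$ (Eq.~\eqref{eq:agg}).

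\textbf{Main obstacle.} The hardest step is Step 2 for the attention block. Both the logits and the values depend on the weights and on the input, so the estimate must simultaneously control the quadratic dependence of $Q_iK_i^\top$ on the weights and the $1/\sqrt{d}$-scaled softmax Jacobian. My first approach is to bound the logit difference by $\frac{2}{\sqrt{d}}R^2\rho\,\|W-W'\|$, then use the Lipschitz property of softmax with respect to the $\ell_\infty\to\ell_1$ norm on each row, and multiply by $\|V\|\le R\rho$. Boundedness of the parameter set is essential here; without it the lemma fails.
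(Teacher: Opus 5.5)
The paper gives no proof of this lemma. It is stated and then used, in effect as an assumption, inside the proof of Theorem~\ref{theorem:universal_approx}, so there is no authors' route to compare against, and your most important point is the caveat.

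Read literally, with one $K$ for \emph{all} pairs $\mathcal{W}_1,\mathcal{W}_2$, the lemma is false. Restricting to $\mathcal{B}_\rho$ with $K=K(\rho,\mathcal{X},L)$ is therefore a necessary correction, not a weakening. Your multilinearity remark already refutes the generic claim for a plain MLP. For the Transformers here, a final LayerNorm before the head caps the output norm, and a clean witness is softmax saturation. With $\|W_q\|_F\sim a$, a perturbation of $W_k$ of Frobenius norm $1/a$ shifts the attention logits by $O(1)$. This generically changes the output by $O(1)$ while $\|\mathcal{W}_1-\mathcal{W}_2\|_F=1/a$.

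Your Steps~1--3 are sound, with one bookkeeping fix. The hybrids in Step~3 mix layers of $\mathcal{W}_1$ and $\mathcal{W}_2$, so they lie in $\mathcal{B}_{\sqrt{2}\rho}$, not necessarily in $\mathcal{B}_\rho$. This does no harm, because Step~1 only uses the per-matrix bound $\|W\|_{\mathrm{op}}\le\|W\|_F\le\rho$, which every hybrid layer inherits. Still, you should state Steps~1--2 for the product of per-layer balls, or enlarge the radius.

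If you only need existence of $K$, there is a shorter route. With GELU, softmax and a variance-floored LayerNorm, $f$ is $C^1$ jointly in $(\mathbf{x},\mathcal{W})$. Hence $\|\nabla_{\mathcal{W}}f\|$ is bounded on the compact set $\overline{\mathcal{X}}\times\mathcal{B}_\rho$. The mean-value inequality along segments, which stay in the convex set $\mathcal{B}_\rho$, then gives the bound uniformly in $\mathbf{x}$. For the ReLU policies in the control experiments, replace $C^1$ by joint local Lipschitzness plus a finite covering. Your telescoping argument is longer but yields an explicit constant, typically exponential in $L$ through the products of downstream input-Lipschitz constants. An explicit constant is what would make the condition of Theorem~\ref{theorem:universal_approx} quantitatively checkable.

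Finally, your bounded version is exactly what that theorem needs. The Eckart--Young truncation satisfies $\|\mathcal{W}_\star\|_F=\|\mathcal{R}(\mathcal{W}_\star)\|_F\le\|\mathcal{R}(\mathcal{W})\|_F=\|\mathcal{W}\|_F$. Taking $\rho=\|\mathcal{W}\|_F$ therefore places both matrices in $\mathcal{B}_\rho$, and Theorem~\ref{theorem:universal_approx} holds with $K=K(\|\mathcal{W}\|_F)$.
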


\begin{theorem}[Universal Approximation under WeiT Constraints]
\label{theorem:universal_approx}
Suppose there exists an ideal continuous function $f(\mathbf{x}) = f(\mathbf{x}; \mathcal{W})$ parameterized by a standard unconstrained Transformer for a target task. For any given approximation error tolerance $\epsilon > 0$, if the number of weight templates $N$ (the Kronecker rank) satisfies:
\begin{equation*}
    \sum_{j=N+1}^{\min(L \cdot B, A)} \sigma_j^2(\mathcal{R}(\mathcal{W})) \le \frac{\epsilon^2}{K^2},
\end{equation*}
then there guarantees to exist a set of weight templates $\mathcal{T}$ and scalers $\mathcal{S}$ such that the resulting WeiT-constrained network $f(\mathbf{x}; \mathcal{W}_{\star})$ satisfies:
\begin{equation*}
    \sup_{\mathbf{x}} \|f(\mathbf{x}; \mathcal{W}_{\star}) - f(\mathbf{x}; \mathcal{W})\| \le \epsilon.
\end{equation*}
\end{theorem}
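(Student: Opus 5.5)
The plan is to combine Lemma~\ref{lemma:svd_bound} with Lemma~\ref{lemma:lipschitz} in a direct two-step argument. First, construct the templates and scalers explicitly from a truncated SVD of the rearranged matrix. Let $\tilde{\mathcal{W}} = \mathcal{R}(\mathcal{W}) = \sum_{j} \sigma_j \mathbf{u}_j \mathbf{v}_j^\top$ be its SVD in $\mathbb{R}^{(L\cdot B)\times A}$. For $i=1,\dots,N$, set $\mathcal{T}_i = \mathbf{v}_i^\top \in \mathbb{R}^{1\times A}$ and let $\mathcal{S}_{\star,i}\in\mathbb{R}^{L\times B}$ be the reshaping of $\sigma_i \mathbf{u}_i$, i.e.\ $\mathrm{vec}(\mathcal{S}_{\star,i}) = \sigma_i\mathbf{u}_i$, with the same block ordering used to define $\mathcal{R}$. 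If $N$ exceeds the rank, pad with zero templates. By the correspondence established in the proof of Lemma~\ref{lemma:svd_bound}, $\mathcal{R}(\sum_i \mathcal{T}_i\otimes\mathcal{S}_{\star,i}) = \sum_{i\le N}\sigma_i\mathbf{u}_i\mathbf{v}_i^\top$, which is the Eckart--Young optimal rank-$N$ approximant.

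Second, bound the parameter error. Since $\mathcal{R}$ is a permutation of entries and hence Frobenius-isometric, Lemma~\ref{lemma:svd_bound} gives
\begin{equation*}
\|\mathcal{W}-\mathcal{W}_\star\|_F^2 = \sum_{j=N+1}^{\min(L\cdot B,A)}\sigma_j^2(\mathcal{R}(\mathcal{W})) \le \frac{\epsilon^2}{K^2}
\end{equation*}
by the hypothesis on $N$. Therefore $\|\mathcal{W}-\mathcal{W}_\star\|_F\le \epsilon/K$.

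Third, transfer this bound to the function space. By Lemma~\ref{lemma:lipschitz}, for every $\mathbf{x}$ in the bounded input domain, $\|f(\mathbf{x};\mathcal{W}_\star)-f(\mathbf{x};\mathcal{W})\|\le K\|\mathcal{W}_\star-\mathcal{W}\|_F\le\epsilon$. Taking the supremum over $\mathbf{x}$ gives the claim, because $K$ is uniform in $\mathbf{x}$.

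The main obstacle is the use of Lemma~\ref{lemma:lipschitz}. As stated, it asserts a global constant $K$ valid for \emph{all} pairs of weights, but a Transformer is only locally Lipschitz in its weights: products of weight matrices and softmax attention make the constant grow with $\|\mathcal{W}\|$. The remedy I would try first is to restrict to the ball $\{\mathcal{W}': \|\mathcal{W}'-\mathcal{W}\|_F\le 1\}$. On this ball, boundedness of the inputs and of the weights gives a uniform local constant $K$ (and assuming $\epsilon/K\le 1$ is harmless). The construction above already places $\mathcal{W}_\star$ inside this ball, since the truncated SVD never increases the norm and the error is at most $\epsilon/K$. I would also check that the Template Scaling mask plays no role here, taking $r_1'=r_1$ and $r_2'=r_2$ (no masking) for this expressivity statement.
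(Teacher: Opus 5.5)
Your proposal is correct and follows the same route as the paper. You take $\mathcal{W}_\star$ to be the Eckart--Young rank-$N$ truncation of $\mathcal{R}(\mathcal{W})$ via Lemma~\ref{lemma:svd_bound}, so that $\|\mathcal{W}-\mathcal{W}_\star\|_F\le \epsilon/K$, and then pass this through the Lipschitz bound of Lemma~\ref{lemma:lipschitz} uniformly in $\mathbf{x}$. Your explicit choice $\mathcal{T}_i=\mathbf{v}_i^\top$, $\mathrm{vec}(\mathcal{S}_{\star,i})=\sigma_i\mathbf{u}_i$ just spells out what the paper leaves implicit.

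Your caveat that Lemma~\ref{lemma:lipschitz} cannot hold globally goes beyond the paper and is well taken. However, localize on the origin-centred ball $\{\mathcal{W}':\|\mathcal{W}'\|_F\le\|\mathcal{W}\|_F\}$ rather than the unit ball around $\mathcal{W}$. That ball is convex and contains $\mathcal{W}_\star$, because truncation does not increase the Frobenius norm. It therefore yields a uniform constant $K$ without the assumption $\epsilon\le K$, which would otherwise genuinely restrict the statement.
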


\begin{proof}
Combining Lemma \ref{lemma:svd_bound} and Lemma \ref{lemma:lipschitz}, let the target ideal parameters be $\mathcal{W}$. By treating the constrained network matrix $\mathcal{W}_{\star}$ as the optimal rank-$N$ approximation of $\mathcal{W}$ in the rearranged space, we obtain the inequality bounded by the Lipschitz constant: $\|f(\mathbf{x}; \mathcal{W}) - f(\mathbf{x}; \mathcal{W}_{\star})\| \le K \|\mathcal{W} - \mathcal{W}_{\star}\|_F$. Substituting the minimum reconstruction error from Lemma \ref{lemma:svd_bound} into this inequality yields:
\begin{equation*}
    \sup_{\mathbf{x}} \|f(\mathbf{x}; \mathcal{W}_{\star}) - f(\mathbf{x}; \mathcal{W})\| \le K \sqrt{\sum_{j=N+1}^{\min(L \cdot B, A)} \sigma_j^2(\mathcal{R}(\mathcal{W}))}.
\end{equation*}
Given the condition that the sum of the squared truncated singular values is bounded by $\frac{\epsilon^2}{K^2}$, the overall output discrepancy of the constrained network is strictly bounded within $\epsilon$. This completes the proof.
\end{proof}

\subsection{Generalization Bounds for Parameter-Efficient Initialization}
\label{sec:generalization_bounds}

To formalize the efficiency of our parameter-efficient initialization, we provide a theoretical analysis based on statistical learning theory. We aim to demonstrate that by freezing the weight templates $\mathcal{T}$ and optimizing only the lightweight scalers $\mathcal{S}_t$, WeiT significantly restricts the hypothesis space, thereby yielding a tighter generalization bound compared to full fine-tuning, especially in data-scarce scenarios.

\begin{definition}[Hypothesis Spaces]
\label{def:hypothesis_space}
For a target downstream model $\mathcal{M}_t$, let the loss function be $\ell(f(\mathbf{x}; \mathcal{W}_t), y)$. Under standard full fine-tuning, the hypothesis space spans the entire unconstrained weight matrix $\mathcal{W}_t \in \mathbb{R}^{L_t \times P_t}$:
\begin{equation*}
    \mathcal{H}_{\text{Full-FT}} = \{ f(\mathbf{x}; \mathcal{W}_t) \mid \|\mathcal{W}_t\|_F \le C_W \},
\end{equation*}
where $C_W > 0$ bounds the norm of the weights. 

In contrast, under the proposed constraint-based initialization (i.e., WeiT), the size-agnostic weight templates $\mathcal{T}$ are fixed after pre-training. The trainable parameters are strictly confined to the lightweight weight scalers $\mathcal{S}_t = \{\mathcal{S}_{t, i}\}_{i=1}^{N}$, where each $\mathcal{S}_{t, i} \in \mathbb{R}^{L_t\times B_t }$ (where $B_t = \frac{P_t}{r_1 \cdot r_2}$). The hypothesis space is thus restricted to:
\begin{equation*}
    \mathcal{H}_{\text{WeiT}} = \left\{ f(\mathbf{x}; \mathcal{T} \otimes \mathcal{S}_t) \mid \|\mathcal{S}_t\|_F \le C_S \right\}.
\end{equation*}
Given the structural property of the Kronecker factorization, the large parameter volume is fundamentally absorbed by the spatial dimensions of the weight templates ($r_1 \cdot r_2$). Consequently, the trainable parameter dimension of the scalers is reduced to $N \cdot L_t \cdot B_t = L_t \cdot P_t \cdot \left(\frac{N}{r_1 \cdot r_2}\right)$. By configuring $N \ll r_1 \cdot r_2$, the trainable parameter space in $\mathcal{H}_{\text{WeiT}}$ becomes vastly smaller than that in $\mathcal{H}_{\text{Full-FT}}$, allowing for a substantially tighter constraint $C_S \ll C_W$.
\end{definition}

We utilize empirical Rademacher complexity to measure the capacity of the hypothesis space and bound the generalization gap on downstream datasets.

\begin{lemma}[Rademacher Complexity of WeiT]
\label{lemma:rademacher}
Assume the downstream task loss function $\ell$ is $L_\ell$-Lipschitz continuous with respect to the network output, and the input features are bounded by $\|\mathbf{x}\| \le R$. Given a downstream dataset of size $m$, the empirical Rademacher complexity of the WeiT hypothesis space is bounded by:
\begin{equation*}
    \hat{\mathfrak{R}}_m(\mathcal{H}_{\text{WeiT}}) \le \mathcal{O}\left( \frac{L_\ell R C_S \|\mathcal{T}\|_F}{\sqrt{m}} \right).
\end{equation*}
\end{lemma}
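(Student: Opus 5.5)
The plan is to reduce the claim to the classical Rademacher bound for norm-bounded linear predictors. We exploit that $\mathcal{W}_t=\mathcal{T}\otimes\mathcal{S}_t$ is \emph{linear} in the trainable scalers once the templates $\mathcal{T}$ are frozen. By the rearrangement argument of Lemma~\ref{lemma:svd_bound}, $\mathcal{R}(\mathcal{T}\otimes\mathcal{S}_t)=\sum_{i=1}^N \mathbf{s}_i\mathbf{t}_i^\top = \mathbf{S}\mathbf{T}$. Here $\mathbf{S}\in\mathbb{R}^{(L_t B_t)\times N}$ stacks the vectorized scalers and $\mathbf{T}\in\mathbb{R}^{N\times A}$ stacks the templates. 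Since $\mathcal{R}$ preserves the Frobenius norm, submultiplicativity gives $\|\mathcal{W}_t\|_F\le \|\mathbf{S}\|_F\|\mathbf{T}\|_F\le C_S\|\mathcal{T}\|_F$. Thus every hypothesis in $\mathcal{H}_{\text{WeiT}}$ is realized by a weight matrix whose norm is controlled by the product $C_S\|\mathcal{T}\|_F$ appearing in the bound.

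First, I will strip the loss with Talagrand's contraction lemma. Since $\ell$ is $L_\ell$-Lipschitz in the network output, the empirical Rademacher complexity of $\ell\circ\mathcal{H}_{\text{WeiT}}$ is at most $L_\ell$ times that of the output class. For vector outputs I will use the vector-contraction inequality of Maurer, absorbing the dimension factor into $\mathcal{O}(\cdot)$.

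Second, I will bound the complexity of the output class. I will treat $f(\mathbf{x};\mathcal{W}_t)$ through its dependence on $\mathcal{W}_t$ in the linearized (first-order) sense of Lemma~\ref{lemma:lipschitz}: the relevant quantities are inner products $\langle \mathcal{W}_t,\Phi(\mathbf{x})\rangle_F$, with a feature map satisfying $\|\Phi(\mathbf{x})\|_F\lesssim R$ for $\|\mathbf{x}\|\le R$. Then I apply Cauchy--Schwarz and Jensen:
\begin{equation*}
\mathbb{E}_\sigma\sup_{\|\mathbf{S}\|_F\le C_S}\frac1m\sum_{j=1}^m\sigma_j\langle \mathbf{S}\mathbf{T},\Phi(\mathbf{x}_j)\rangle \le \frac{C_S\|\mathbf{T}\|_F}{m}\,\mathbb{E}_\sigma\Big\|\sum_j\sigma_j\Phi(\mathbf{x}_j)\Big\|_F\le \frac{C_S\|\mathcal{T}\|_F R}{\sqrt m}.
\end{equation*}
Combining this with the contraction step yields the stated $\mathcal{O}\big(L_\ell R C_S\|\mathcal{T}\|_F/\sqrt m\big)$.

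The main obstacle is the second step for a genuinely deep nonlinear Transformer, where the output is not linear in $\mathcal{W}_t$. I would handle it in one of two ways. The first option is to absorb the remaining layer-wise Lipschitz factors (the constant $K$ of Lemma~\ref{lemma:lipschitz}, under bounded inputs) into the $\mathcal{O}(\cdot)$ constant, in the spirit of norm-based bounds for deep networks such as Bartlett et al.\ or Golowich et al. The second option is to state the bound for the final linear readout, or the linearized model around the pre-trained point, with the frozen features treated as $\Phi$. In either version, the key scaling is that the only free quantity is $\mathbf{S}$, whose norm ball is $C_S$, and the frozen template factor $\|\mathcal{T}\|_F$ enters multiplicatively. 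I would make clear that the hidden constants depend on depth and the Lipschitz constants but not on $m$.
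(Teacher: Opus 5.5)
\paragraph{Comparison with the paper.} Your plan follows the paper's proof. That proof consists of the bound $\|\mathcal{W}_t\|_F=\|\mathcal{T}\otimes\mathcal{S}_t\|_F\le\|\mathcal{T}\|_F\|\mathcal{S}_t\|_F$, followed by an appeal to ``standard'' norm-based Rademacher bounds that are said to scale linearly in the norm of the trainable matrices. You make that appeal concrete: contraction first, which also shows that the $L_\ell$ factor refers to $\ell\circ\mathcal{H}_{\text{WeiT}}$, as Theorem~\ref{theorem:generalization} requires, and then Cauchy--Schwarz and Jensen. For a class that is linear in $\mathcal{W}_t$, your computation is correct.

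\paragraph{The gap.} The problem, which the paper's proof shares, is the passage to the actual Transformer, and your first option does not close it. The bounds of Bartlett et al.\ and Golowich et al.\ scale with the \emph{product} of per-layer norms. Since every layer is generated from the same $\mathcal{S}_t$, your only control is $\sum_l\|W^{(l)}\|_F^2\le(C_S\|\mathcal{T}\|_F)^2$. That yields a bound polynomial in $C_S\|\mathcal{T}\|_F$ with degree growing with depth, not a linear factor. Moreover, a single block is already nonlinear in $\mathcal{S}_t$, through the product $W_q^{(l)}W_k^{(l)\top}$ and the composition of $W_{\text{in}}^{(l)}$ with $W_{\text{out}}^{(l)}$. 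Softmax and LayerNorm are also not positively homogeneous, so those theorems do not apply as stated. The constant $K$ of Lemma~\ref{lemma:lipschitz} has a similar problem: it exists only on a bounded weight set and grows with its radius, so $K=K(C_S\|\mathcal{T}\|_F,R,\text{depth})$. Absorbing $K$ into $\mathcal{O}(\cdot)$ therefore hides exactly the $C_S$-dependence the lemma is meant to exhibit. Finally, Cauchy--Schwarz against a fixed $\Phi(\mathbf{x})$ is unavailable once $f$ is nonlinear in $\mathcal{W}_t$, so option one has no mechanism that produces the bound.

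Your second option bounds a different hypothesis class. WeiT has no frozen feature map, because every matrix aggregated in Eq.~\eqref{eq:agg} is a function of $\mathcal{S}_t$. For the linearized model, $\Phi(\mathbf{x})=\nabla_{\mathcal{W}}f(\mathbf{x};\mathcal{W}_0)$ is bounded by the parameter-Lipschitz constant, not by $R$ alone.

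\paragraph{A repair.} You can keep your first step and close the gap as follows.
\begin{itemize}
\item The map $\mathcal{S}_t\mapsto\mathcal{T}\otimes\mathcal{S}_t$ is linear with Lipschitz constant $\|\mathcal{T}\|_F$.
\item Applying Lemma~\ref{lemma:lipschitz} on the ball of radius $C_S\|\mathcal{T}\|_F$, the loss is $L_\ell K\|\mathcal{T}\|_F$-Lipschitz in $\mathcal{S}_t$.
\item Cover the $p$-dimensional ball $\{\|\mathcal{S}_t\|_F\le C_S\}$, with $p=NL_tB_t$, and apply Dudley's entropy integral. This gives $\hat{\mathfrak{R}}_m\lesssim L_\ell K\|\mathcal{T}\|_F\,C_S\sqrt{p/m}$.
\end{itemize}
This bound has the $1/\sqrt{m}$ rate and an explicit factor $C_S\|\mathcal{T}\|_F$. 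However, it carries $\sqrt{p}$ and a $K$ that itself depends on $C_S\|\mathcal{T}\|_F$, $R$ and depth. So the lemma, and the paper's proof of it, hold only in this weaker, constant-hiding sense.
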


\begin{proof}
Based on the definition of the Kronecker product, the Frobenius norm of the reconstructed target weight matrix satisfies the sub-multiplicative property:
\begin{equation*}
    \|\mathcal{W}_t\|_F = \|\mathcal{T} \otimes \mathcal{S}_t\|_F \le \|\mathcal{T}\|_F \|\mathcal{S}_t\|_F.
\end{equation*}
Since $\mathcal{T}$ is pre-trained and frozen during initialization, $\|\mathcal{T}\|_F$ acts as a constant scaling factor. The complexity of the network is thus dictated exclusively by the capacity of $\mathcal{S}_t$, which is bounded by $C_S$. According to standard Rademacher complexity bounds for neural networks parameterized by weight matrices with bounded Frobenius norms, the complexity scales linearly with the norm bound of the trainable matrices, yielding the upper bound presented above.
\end{proof}

\begin{theorem}[Generalization Bound for Scalable Initialization]
\label{theorem:generalization}
For any downstream task with an empirical risk $\hat{R}(h)$ evaluated on $m$ training samples, let $R(h)$ denote the expected true risk. With probability at least $1-\delta$, for all hypotheses $h \in \mathcal{H}_{\text{WeiT}}$, the following generalization bound holds:
\begin{equation*}
    R(h) \le \hat{R}(h) + 2 \hat{\mathfrak{R}}_m(\mathcal{H}_{\text{WeiT}}) + 3 \sqrt{\frac{\log(2/\delta)}{2m}}.
\end{equation*}
Substituting the complexity bound from Lemma \ref{lemma:rademacher}, we obtain:
\begin{equation*}
    R(h) \le \hat{R}(h) + \mathcal{O}\left( \frac{L_\ell R C_S \|\mathcal{T}\|_F}{\sqrt{m}} \right) + 3 \sqrt{\frac{\log(2/\delta)}{2m}}.
\end{equation*}
\end{theorem}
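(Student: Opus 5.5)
The plan is to invoke the standard uniform-convergence theorem based on empirical Rademacher complexity, and then insert the bound from Lemma~\ref{lemma:rademacher}.

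\textbf{Step 1: reduce to a bounded loss class.} Assume the loss $\ell$ takes values in $[0,1]$. If it is only bounded by some $M$, rescale it; this multiplies the confidence term by $M$ and nothing else changes. Form the loss class
\begin{equation*}
\mathcal{G} = \{(\mathbf{x},y)\mapsto \ell(h(\mathbf{x}),y) : h\in\mathcal{H}_{\text{WeiT}}\}.
\end{equation*}
The templates $\mathcal{T}$ are frozen before the downstream sample is drawn. Hence $\mathcal{H}_{\text{WeiT}}$ is a fixed, data-independent class indexed only by $\mathcal{S}_t$ with $\|\mathcal{S}_t\|_F\le C_S$. This independence is what lets the symmetrization argument go through.

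\textbf{Step 2: apply McDiarmid's inequality twice.} First apply it to $\Phi(S)=\sup_{g\in\mathcal{G}}(\mathbb{E}g-\hat{\mathbb{E}}_S g)$. Changing one sample moves $\Phi$ by at most $1/m$, so with probability $1-\delta/2$ we get $\Phi\le\mathbb{E}\Phi+\sqrt{\log(2/\delta)/(2m)}$. Standard symmetrization with ghost samples and Rademacher signs gives $\mathbb{E}\Phi\le 2\mathfrak{R}_m(\mathcal{G})$. Next apply McDiarmid to the empirical Rademacher complexity itself. Changing one sample moves it by at most $1/m$, so with probability $1-\delta/2$ we have $\mathfrak{R}_m\le\hat{\mathfrak{R}}_m+\sqrt{\log(2/\delta)/(2m)}$. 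A union bound combines the two events. The resulting slack is $\sqrt{\cdot}+2\sqrt{\cdot}=3\sqrt{\log(2/\delta)/(2m)}$, which is exactly the first displayed inequality.

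\textbf{Step 3: pass from the loss class to the hypothesis class.} Talagrand's contraction lemma gives $\hat{\mathfrak{R}}_m(\mathcal{G})\le L_\ell\,\hat{\mathfrak{R}}_m(\mathcal{H}_{\text{WeiT}})$, or a vector-valued version if the outputs are multi-dimensional. The theorem as stated writes $\hat{\mathfrak{R}}_m(\mathcal{H}_{\text{WeiT}})$ directly. I will therefore either read that term as the complexity of the loss class or absorb $L_\ell$ there. This is harmless for the second display, since Lemma~\ref{lemma:rademacher} already carries the factor $L_\ell$. Substituting that lemma's bound $\mathcal{O}(L_\ell R C_S\|\mathcal{T}\|_F/\sqrt{m})$ then yields the second inequality.

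\textbf{Main obstacle.} The probabilistic part is textbook. The delicate step is the bookkeeping in Step~3, and more substantively the validity of Lemma~\ref{lemma:rademacher} itself. Its linear-in-norm bound is really justified only for a single linear layer. For a deep Transformer, the norm-based bounds of Golowich et al.\ or Bartlett et al.\ produce products over layers, and these would be hidden inside the $\mathcal{O}(\cdot)$ together with a depth-dependent constant. I would take the lemma as given, as the paper permits, and state these conventions explicitly rather than reprove it.
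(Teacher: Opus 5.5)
Your proposal is correct and follows the same route as the paper. The paper gives no separate proof: it invokes the standard empirical-Rademacher uniform-convergence bound, which you rederive via two applications of McDiarmid's inequality, symmetrization, and a union bound to get the $3\sqrt{\log(2/\delta)/(2m)}$ slack, and then it substitutes Lemma~\ref{lemma:rademacher}. The conventions you make explicit are exactly the ones the paper leaves implicit: a loss bounded in $[0,1]$, templates $\mathcal{T}$ fixed independently of the downstream sample, and reading $\hat{\mathfrak{R}}_m(\mathcal{H}_{\text{WeiT}})$ as the complexity of the loss class so that $L_\ell$ enters only through the lemma.
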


Theorem \ref{theorem:generalization} theoretically supports the data-efficiency of WeiT. Although the templates $\mathcal{T}$ encapsulate massive pre-trained knowledge, freezing them prevents them from inflating the variance of the hypothesis space. Consequently, the generalization gap is strictly bounded by the small capacity $C_S$ of the scalers. This mathematical formulation explains why WeiT can achieve stable and high-performance initialization with merely a few hundred optimization steps, preventing the severe overfitting typically observed when directly fine-tuning large unconstrained models on scarce data.

\section{Additional Methodological Details}
\subsection{Details of the Constraint-based Pre-training Paradigm}
Algorithm~\ref{alg:algorithm} presents the pseudocode of the proposed constraint-based pre-training paradigm, which aims to encapsulate size-agnostic knowledge into weight templates.

\begin{algorithm}[h]
    \caption{Constraint-based Pre-training Paradigm}
    \label{alg:algorithm}
    \textbf{Input}: Training dataset $\mathcal{D}=\{(x_i, y_i)\}_{i=1}^{|\mathcal{D}|}$, Number of training epochs $N_{\text{ep}}$, Batch size $B$, Learning rate $\eta$, and Model to be pre-trained $\mathcal{M}_\star$ parameterized by $\Theta_\star$ \\
    \textbf{Output}: Size-agnostic Weight Templates $\mathcal{T}$ and Size-specific Weight Scalers $\mathcal{S}_\star$
    \begin{algorithmic}[1]
        \STATE Randomly initialize Weight Templates $\mathcal{T}$ and Weight Scalers $\mathcal{S}_\star$
        \FOR{$\text{ep} = 1$ to $N_{\text{ep}}$}
            \FOR{each mini-batch $\mathcal{B} = \{(x_i, y_i)\}_{i=1}^B$}
                \STATE Randomly generate a structured mask $M_{\mathcal{T}}$ and apply it to the weight templates $\mathcal{T}$ to obtain the masked templates $\tilde{\mathcal{T}}$ following Eq.~\eqref{equ:drop}:
                \[
                    \tilde{\mathcal{T}} = M_\mathcal{T} \odot \mathcal{T}
                \]
                \vspace{-0.13in}
                \STATE Construct the unified weight matrix $\mathcal{W}_\star$ from the weight templates $\mathcal{T}$ and scalers $\mathcal{S}_\star$ via the Kronecker product, following Eq.~\eqref{equ:kro_mask}:
                \[
                    \mathcal{W}_\star \;=\; \tilde{\mathcal{T}} \otimes \mathcal{S}_\star
                \]
                \vspace{-0.13in}
                \STATE Decompose $\mathcal{W}_\star$ and map it back to the original parameter space to reconstruct $\Theta_\star$ via parameter replacement, corresponding to the inverse operation of Eq.~\eqref{eq:agg}.
                \STATE For each input $x_i$, obtain the model prediction $\hat{y}_i = \mathcal{M}_\star(x_i)$
                \STATE Compute the batch loss $\mathcal{L}_{\text{batch}} = \frac{1}{B} \sum_{i=1}^{B} \mathcal{L}(\hat{y}_i, y_i)$, where $\mathcal{L} \in \{\mathcal{L}_{\textsc{Cls}}, \mathcal{L}_{\textsc{Gen}}, \mathcal{L}_{\textsc{Ctl}}\}$ corresponds to \textsc{Image Classification} (Eq.~\eqref{equ:loss_cls}), \textsc{Image Generation} (Eq.~\eqref{equ:loss_gen}), and \textsc{Embodied Control} (Eq.~\eqref{equ:loss_ctl}), respectively.
                \STATE Backpropagate the batch loss $\mathcal{L}_{\text{batch}}$ to compute gradients with respect to the weight templates $\mathcal{T}$ and scalers $\mathcal{S}_\star$:
                \[
                    \nabla_{\mathcal{T}} \mathcal{L}_{\text{batch}}, \quad \nabla_{\mathcal{S}_\star} \mathcal{L}_{\text{batch}}.
                \]
                \vspace{-0.13in}
                \STATE Update the weight templates $\mathcal{T}$ and scalers $\mathcal{S}_\star$ via gradient descent: 
                \[
                    \mathcal{T} \leftarrow \mathcal{T} - \eta \nabla_{\mathcal{T}} \mathcal{L}_{\text{batch}}, \quad
                    \mathcal{S}_\star \leftarrow \mathcal{S}_\star - \eta \nabla_{\mathcal{S}_\star} \mathcal{L}_{\text{batch}}.
                \]
            \ENDFOR
        \ENDFOR
    \end{algorithmic}
\end{algorithm}

\subsection{Extension to Convolution-based Architectures}
\label{app:cnn}
\subsubsection{Preliminaries on ConvNeXt-v2}
ConvNeXt-v2~\cite{woo2023convnext} is a modern hierarchical Convolution-based architecture with a stage-wise design, where feature maps are progressively downsampled. Given an input image $x \in \mathbb{R}^{H \times W \times 3}$, the network extracts multi-scale representations across $S$ stages.

Formally, ConvNeXt-v2 can be viewed as a composition of stage-wise transformations parameterized by the full set of network parameters $\Theta = \{\Theta_s\}_{s=1}^{S}$:
\begin{equation*}
    Y = \mathcal{F}_{\Theta}(x),
\end{equation*}
where $\mathcal{F}_{\Theta}(\cdot)$ denotes the overall forward mapping of the network.

Specifically, each stage transformation $\Theta_s$ is instantiated by a sequence of learnable weight operators, including depthwise convolution and channel-mixing MLP components, with the full parameter set defined as $\Theta = \{W^{\text{dw}}_s, W_{1,s}, W_{2,s}\}_{s=1}^{S}$.

At stage $s$, the spatial resolution is reduced by a factor of $2$ relative to the previous stage, yielding
\begin{equation*}
    (H_s, W_s) = \left(\frac{H}{2^s}, \frac{W}{2^s}\right), \quad s = 1, \dots, S,
\end{equation*}
while the channel dimension increases correspondingly to enhance representational capacity. Specifically, each stage $s$ contains $L_s$ residual blocks with channel dimension $C_s$, where $C_1$ is defined as the base channel dimension and the subsequent stages follow a fixed scaling rule:
\begin{equation*}
    C_s = 2^{s-1} C_1, \quad s = 1, \dots, S.
\end{equation*}
For example, in the standard Tiny configuration, the channel widths are $C_s \in \{96, 192, 384, 768\}$.

Within each stage, given the input feature map $X_s \in \mathbb{R}^{H_s \times W_s \times C_s}$, the depthwise convolution is represented by a spatial weight kernel $W^{\text{dw}}_s \in \mathbb{R}^{C_s \times 1 \times k \times k}$, yielding
\begin{equation}
    Z_{1,s} = X_s \ast W^{\text{dw}}_s,
\end{equation}
where $k = 7$ corresponds to the large kernel setting used in ConvNeXt-v2 (denoted as \textit{d7}), and $\ast$ denotes the convolution operator applied independently over each channel (i.e., depthwise convolution).

This is followed by Layer Normalization and a pair of $1 \times 1$ pointwise convolutions parameterized by linear weight matrices $W_{1,s}$ and $W_{2,s}$:
\begin{equation}
    Z_{2,s} = \mathrm{GELU}\big(\mathrm{LN}(Z_{1,s}) W_{1,s} + b_{1,s}\big) W_{2,s} + b_{2,s},
\end{equation}
where $b_{1,s}$ and $b_{2,s}$ are bias.
The MLP serves as a channel-mixing module with an expansion ratio of 4, i.e.,
\begin{equation*}
    W_{1,s} \in \mathbb{R}^{C_s \times 4C_s}, \quad 
    W_{2,s} \in \mathbb{R}^{4C_s \times C_s}.
\end{equation*}

Compared to conventional Convolution-based architectures, ConvNeXt-v2 incorporates several design principles inspired by modern architectural designs, including \textbf{\textit{Large-kernel Depthwise Convolutions}} and \textbf{\textit{Inverted Bottleneck}} structures.
These design choices enhance both optimization stability and scalability, making ConvNeXt-v2 a suitable backbone for extending constraint-based pre-training beyond Transformer-based models.

\subsubsection{Constraint-based Pre-training Paradigm on Convolution-based Architectures}
To extend constraint-based pre-training to ConvNeXt-v2, we first reparameterize all depthwise convolution kernels across different stages into a unified representation. Specifically, for each stage $s$, the depthwise convolution weight is denoted as $W^{\text{dw}}_s \in \mathbb{R}^{C_s \times 1 \times k \times k}$. We then aggregate all such kernels along the stage and channel dimensions to construct a unified weight matrix:
\begin{equation}
    \mathcal{W}_\star = \mathrm{concat}\big(W^{\text{dw}}_1, W^{\text{dw}}_2, \dots, W^{\text{dw}}_S\big),
\end{equation}
where $\mathcal{W}_\star \in \mathbb{R}^{P \times (C_1 \times k \times k)}$, and $P = \sum_{s=1}^{S} C_s L_s$ denotes the total number of depthwise convolutional filters across all stages, with $L_s$ being the number of blocks in stage $s$.

The resulting unified weight matrix $\mathcal{W}_\star$ is constrained by the Kronecker-based formulation adopted in WeiT following Eq.~\eqref{equ:kro}, enabling its reconstruction from shared weight templates and size-specific weight scalers via concatenation and weighted aggregation:
\begin{equation*}
    \mathcal{W}_\star \;=\; \mathcal{T} \otimes \mathcal{S}_\star \;=\; \sum_{i=1}^{N} \mathcal{T}_i \otimes \mathcal{S}_{\star, i},
\end{equation*}
where $\mathcal{T} = \{\mathcal{T}_i\}_{i=1}^{N}$ denotes size-agnostic weight templates capturing shared convolutional patterns across stages, with each $\mathcal{T}_i \in \mathbb{R}^{1 \times (r_1\cdot r_2)}$. $\mathcal{S}_\star = \{\mathcal{S}_{\star, i}\}_{i=1}^{N}$ represents lightweight weight scalers that adapt the templates to different stage-wise and channel-wise configurations, with each $\mathcal{S}_{\star, i} \in \mathbb{R}^{P \times \frac{k^2 C_1}{r_1\cdot r_2}}$. 

To preserve the structural integrity of convolutional kernels while maintaining full spatial and channel-wise expressivity, we set $r_1 = C_1$ and $r_2 = k^2$, resulting in a representation that retains the complete degrees of freedom of convolutional filters.
Similarly, the $1 \times 1$ pointwise convolutions, which are functionally equivalent to the linear feed-forward layers in Transformer-based architectures, are modeled under the same template-based formulation. 

This unified parameterization enables a seamless extension of the Constraint-based Pre-training Paradigm to Convolution-based architectures, such as ConvNeXt-v2.

\section{Additional Training Details}
\subsection{Details of Downstream Datasets}
\label{app:dataset}
\subsubsection{Downstream Datasets in Image Classification}
\label{app:dataset_cls}
Additional downstream datasets for \textsc{Image Classification} include Oxford Flowers~\cite{nilsback2008automated}, CUB-200-2011~\cite{wah2011caltech}, Stanford Cars~\cite{gebru2017fine}, CIFAR-10~\cite{krizhevsky09}, CIFAR-100~\cite{krizhevsky09}, Food-101~\cite{bossard2014food}, and iNaturalist-2019~\cite{tan2019herbarium}. 
Table~\ref{tab:datasets_cls} summarizes the details of these seven datasets, organized in ascending order of dataset size.
\begin{table}[h]
    \centering
    \setlength{\tabcolsep}{4.5 mm}
    \caption{Details of Downstream Image Classification Datasets.}
    \vspace{-0.1in}
    \resizebox{0.7\linewidth}{!}{
        \begin{tabular}{@{}lcccc@{}}
        \toprule[0.8pt]
        \textbf{Dataset} & \textbf{Classes} & \textbf{Total} & \textbf{Training} & \textbf{Testing} \\
        \midrule[0.8pt]
        Oxford Flowers~\cite{nilsback2008automated} & 102 & 8,189 & 2,040  & 6,149\\
        CUB-200-2011~\cite{wah2011caltech} & 200 & 11,788 & 5,994 & 5,794 \\
        Stanford Cars~\cite{gebru2017fine} & 196 & 16,185 & 8,144 & 8,041\\
        CIFAR10~\cite{krizhevsky09} & 10 & 60,000 & 50,000 & 10,000 \\
        CIFAR100~\cite{krizhevsky09} & 100 & 60,000 & 50,000 & 10,000 \\
        Food101~\cite{bossard2014food} & 101 & 101,000 & 75,750 & 25,250\\
        iNaturalist-2019~\cite{tan2019herbarium} & 1010 & 268,243 & 265,213 & 3,030\\
        \bottomrule[0.8pt]
        \end{tabular}
        }
    \label{tab:datasets_cls}
\end{table}

\subsubsection{Downstream Datasets in Image Generation}
\label{app:dataset_gen}
Additional downstream datasets for \textsc{Image Generation} include CelebA-HQ~\cite{huang2018introvae}, LSUN-Bedroom~\cite{wang2017knowledge}, LSUN-Church~\cite{wang2017knowledge}, Hubble, MRI, and Pokemon. 
LSUN-Bedroom and LSUN-Church are subsets of the Large-Scale Scene Understanding (LSUN) dataset~\cite{wang2017knowledge}, consisting of scene images of bedrooms and churches, respectively, at a resolution of $256 \times 256$ pixels. 
CelebA-HQ is a high-quality variant of the CelebA dataset~\cite{liu2018large}, containing high-resolution facial images of celebrities, also resized to $256 \times 256$ pixels. Table~\ref{tab:datasets_gen} provides an overview of these six downstream datasets.

\begin{table}[h]
    \centering
    \setlength{\tabcolsep}{4.5 mm}
    \caption{Details of Downstream Image Generation Datasets.}
    \vspace{-0.1in}
    \resizebox{0.5\linewidth}{!}{
        \begin{tabular}{@{}lcc@{}}
        \toprule[0.8pt]
        \textbf{Dataset} & \textbf{Total} & \textbf{Resolution} \\
        \cmidrule[0.8pt]{1-3}
        CelebA~\cite{huang2018introvae} & 30,000 & 256$\times$256  \\
        LSUN-Bedroom~\cite{wang2017knowledge} & 3,033,042 & 256$\times$256 \\
        LSUN-Church~\cite{wang2017knowledge} & 126,227 & 256$\times$256 \\
        Hubble & 2706 & 256$\times$256  \\
        MRI & 3753 & 256$\times$256 \\
        Pokemon & 833 & 256$\times$256 \\
        \bottomrule[0.8pt]
        \end{tabular}
        }
    \label{tab:datasets_gen}
\end{table}

\subsubsection{Downstream Datasets in Embodied Control}
\label{app:rl_reward}
We conduct the main experiments for \textsc{Embodied Control} on the Flat Terrain, while additional downstream tasks include Variable Terrain, Incline, Obstacle, and Patrol. 
All tasks are constructed in the MuJoCo physics simulator~\cite{todorov2012mujoco} and are designed to comprehensively evaluate the agent's capabilities across multiple dimensions.

These tasks challenge the agent's agility, stability, and manipulation skills, by varying observations, objectives, and environmental interactions.
Collectively, these tasks constitute a diverse and challenging benchmark that systematically evaluates both the generalization and adaptability of the learned policies, following the experimental setup and task descriptions introduced by \cite{gupta2021embodied, guptametamorph, feng2025knowledge}.

\paragraph{Flat Terrain.} 
The agent is initialized on one end of a $150 \times 150$ m$^2$ flat arena.
The task requires the agent to learn stable locomotion and progress consistently along a predefined forward direction throughout an episode.
At each timestep, the agent receives a reward:
\[
    r_t = \mu \, v_x, \quad \mu = 1
\]
where $v_x$ denotes the velocity component along the $+x$-axis, corresponding to the target direction of movement.

\paragraph{Variable Terrain.} 
Similar to Flat Terrain, the agent aims to maximize forward displacement over an episode. At the start of each episode, the agent is initialized at one end of a $100 \times 100$ m$^2$ arena. 
A new terrain is generated in each episode by randomly sampling a sequence of obstacles and interleaving them with flat segments. The flat segments have lengths $l \in [1, 3]$ m along the desired direction of motion, while obstacle segments have lengths $l \in [4, 8]$ m. Three types of obstacles are considered: 
\begin{itemize}
    \item \textbf{Hills}: Parameterized by the amplitude $a$ of a $\mathrm{sin}$ wave, where $a \in [0.6, 1.2]$ m.
    \item \textbf{Steps}: A sequence of 8 steps of height $0.2$ m. Each step has equal length, with 4 steps ascending followed by 4 steps descending.
    \item \textbf{Rubble}: Random bumps generated by clipping a repeating triangular sawtooth wave at the top, with bump heights $h \in [0.2, 0.3]$ m.
\end{itemize}
The reward function is similar to Flat Terrain.

\paragraph{Incline.} The agent is tasked with maximizing forward displacement on a rectangular arena of size $150 \times 40$ m$^2$, inclined at $10^\circ$. The reward function is similar to Flat Terrain.

\paragraph{Obstacle.} The agent need to traverse a dense area of static obstacles and reach the end of a rectangular flat arena of size $150 \times 60$ m$^2$. Each obstacle has a base and height ranging from 0.5 m to 3 m, with 50 obstacles randomly initialized at the start of each episode. 
The obstacle information is provided as a terrain height map. The reward function is similar to Flat Terrain.

\paragraph{Patrol.} The agent is tasked with running back and forth between two goal locations separated by 10 m along the $x$-axis. Success in this task requires the ability to move quickly over short distances and to change direction repeatedly. At each time step, the agent receives a reward:
\[
r_t = \mu \, \Delta d_{\text{goal}}, \quad \mu = 100
\]
where $\Delta d_{\text{goal}}$ is the change in geodesic distance to the current goal between consecutive time steps, and $a$ is the action taken by the agent. Additionally, if the agent reaches within 0.5 m of a goal, the goal location is flipped and the agent receives a sparse reward of $10$.

\subsection{Details of Training Hyperparameters}
Table~\ref{tab:hyper_main} summarizes the key hyperparameters used for constraint-based pre-training across the three tasks, including \textsc{Image Classification}, \textsc{Image Generation}, and \textsc{Embodied Control}.

\begin{table}[t]
    \centering
    \setlength{\tabcolsep}{3 mm}
    \caption{Hyperparameters for WeiT under constraint-based pre-training on Image Classification, Image Generation and Embodied Control.}
    \resizebox{\linewidth}{!}{
        \begin{tabular}{@{}lrrr@{}}
        \toprule[0.8pt]
        \textbf{Hyperparameter} & \textbf{\textsc{Image Classification}} & \textbf{\textsc{Image Generation}} & \textbf{\textsc{Embodied Control}} \\
        \midrule[0.8pt]
        Number of Weight Template & 72 & 108 & 80 \\
        Shape of Weight Template & 768 $\times$ 768 & 1024 $\times$ 1024 & 256 $\times$ 256 \\
        Number of layers & 12 & 12 & 6 \\
        Number of attention heads & 12 & 16 & 4 \\
        Embedding dimension & 768 & 1024 & 256 \\
        Feedforward dimension & 3072 & 4096 & 2048 \\
        Non linearity function & GELU & GELU & ReLU \\
        \midrule
        Optimizer & AdamW & AdamW & Adam \\
        Base learning rate & 5e-4 & 1e-4 & 1e-3\\
        Warmup learning rate & 1e-6 & --- & ---\\
        Weight decay & 0.05 & 0 & 0 \\
        Optimizer momentum & 0.9 & 0.9 & --- \\
        Batch size & 1024 & 64 & 5120 \\
        Training epoch / step & 300 & 600K & 30\\
        Scheduler & Cosine Decay & --- & Cosine Decay \\
        Warmup epoch & 5 & --- & --- \\
        Drop path & 0.5 & 0.5 & 0.9 \\
        Vae & --- & stabilityai / sd-vae-ft-ema & ---\\
        Class dropout & --- & 0.1 & --- \\
        \bottomrule[0.8pt]
        \end{tabular}
    }
    \label{tab:hyper_main}
    \vspace{-0.05in}
\end{table}

\section{Additional Experimental Results}
\subsection{Zero-shot Performance after Initialization}
We evaluate the zero-shot performance of models immediately after initialization, without any task-specific fine-tuning. 
As shown in Fig.~\ref{fig:app_zero}, in \textsc{Embodied Control}, the WeiT-initialized agent with training morphologies achieves substantially higher rewards. Notably, it consistently outperforms distillation-based methods such as HyperDistill~\cite{xiong2024distilling}, suggesting that knowledge transfer via weight templates induces more structured and transferable representations than distillation-based approaches, thereby yielding improved generalization and stronger zero-shot performance.

We further assess initialization quality in \textsc{Image Classification}. As shown in Fig.~\ref{fig:app_wave_imagenet}, WeiT consistently achieves superior initial classification performance compared to other methods, such as TLEG~\cite{xia2024transformer}, which imposes simple structural priors via layer-wise sharing. This finding is consistent with the observations above, further suggesting that the learned weight templates effectively capture transferable knowledge across diverse model scales.

\begin{figure*}[h]
    \centering
    \includegraphics[width=\linewidth]{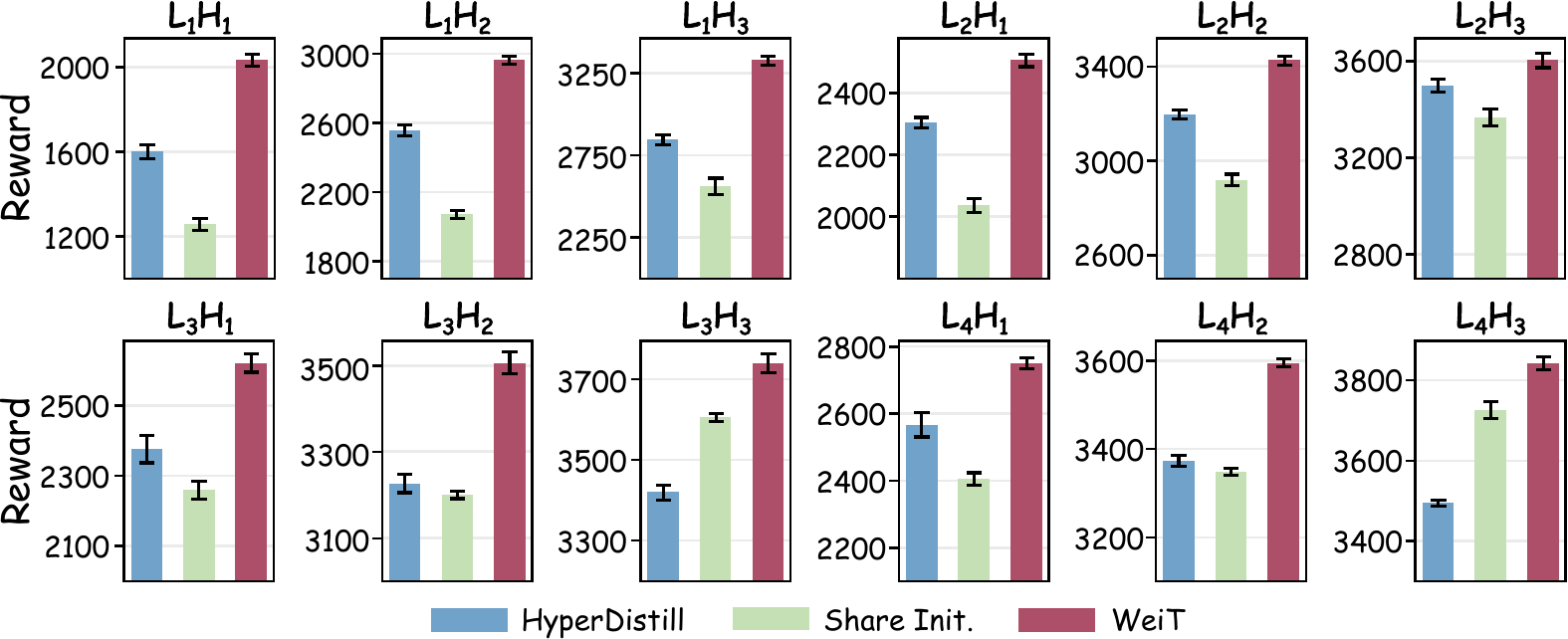}
    \vspace{-0.25in}
    \caption{Zero-shot Initialization Performance across Training Morphologies.}
    \label{fig:app_zero}
\end{figure*}

\begin{figure*}[t]
  \centering
  \includegraphics[width=\linewidth]{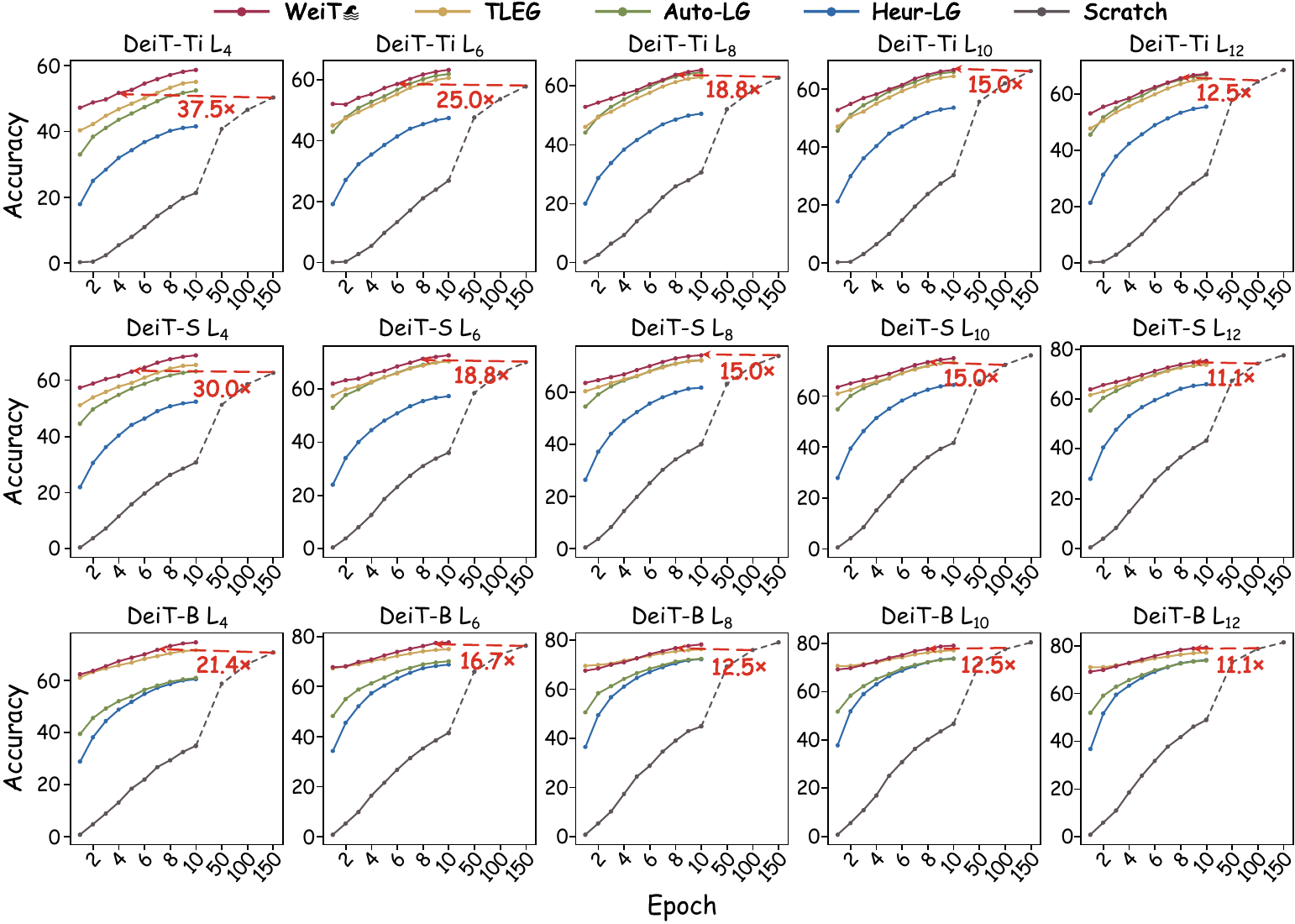}
  \vspace{-0.25in}
  \caption{Training dynamics of \textsc{Image Classification} on ImageNet-1K. We report detailed optimization trajectories, where scalable initialization methods (e.g., WeiT) are trained for 10 epochs (corresponding to Table~\ref{tab:init_class}) and compared with models trained from scratch for 150 epochs. 
  Note: This figure is directly adapted from the Appendix of WeiT\waveicon~\cite{feng2024wave}.
  }
  \label{fig:app_wave_imagenet}
\end{figure*}

\subsection{Empirical Analysis of Learning Efficiency}
To more intuitively demonstrate the effectiveness of the proposed method, we visualize the training dynamics on \textsc{Image Classification}. 
Fig.~\ref{fig:app_wave_imagenet} reports the classification accuracy curves of scalable initialization methods (trained for 10 epochs) and models trained from scratch (trained for 150 epochs).

Overall, WeiT\waveicon consistently outperforms other comparable methods, including Heur-LG~\cite{wang2022learngene}, Auto-LG~\cite{wang2023learngene}, and TLEG~\cite{xia2024transformer}, while also significantly improving training efficiency. 
In particular, compared to models trained from scratch, WeiT\waveicon-initialized models achieve competitive performance within only a few epochs of training, and in certain settings can match the accuracy of 150-epoch training from scratch after just one epoch.
Taking DeiT-B with 12 layers (i.e., DeiT-B $L_{12}$) as an example, WeiT\waveicon reduces the training cost by approximately $11.1\times$ compared to training from scratch. 
Notably, this efficiency gain becomes even more pronounced for smaller models, reaching up to $37.5\times$ in DeiT-Ti ($L_4$). These results further demonstrate the strong learning capability and superior training efficiency induced by the proposed initialization.

\begin{figure*}[t]
  \centering
  \includegraphics[width=\linewidth]{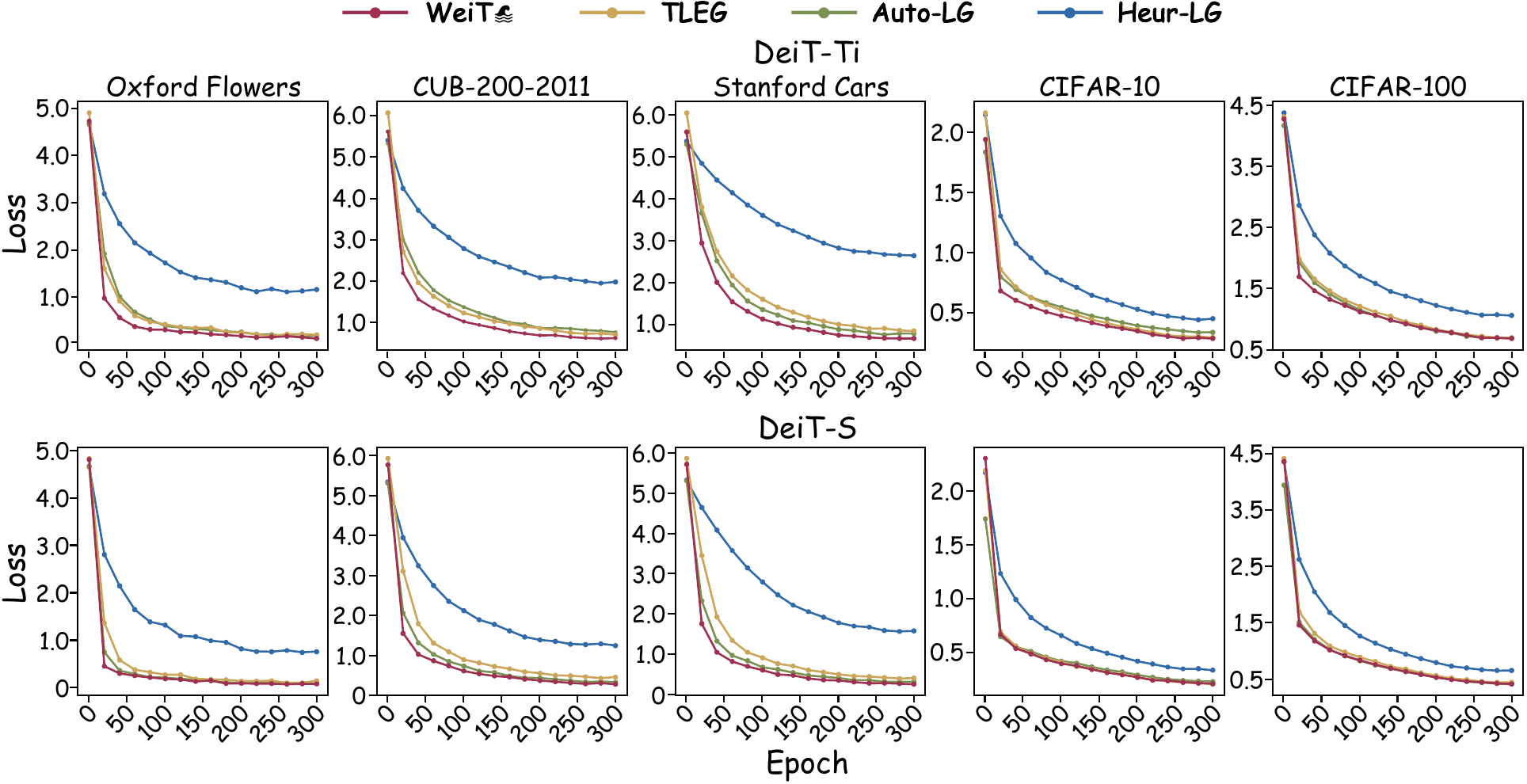}
  \caption{Training dynamics of \textsc{Image Classification} on small and medium-scale downstream datasets, where we report detailed loss trajectories corresponding to Table~\ref{tab:downstream_class}. Note: This figure is directly adapted from the Appendix of WeiT\waveicon~\cite{feng2024wave}.}
  \label{fig:app_wave_downstream}
\end{figure*}

Such strong learning ability is also reflected in WeiT\waveicon-initialized models on downstream datasets. We further visualize the training loss trajectories on small- and medium-scale datasets, including Oxford Flowers~\cite{nilsback2008automated}, CUB-200-2011~\cite{wah2011caltech}, Stanford Cars~\cite{gebru2017fine}, CIFAR-10~\cite{krizhevsky09}, and CIFAR-100~\cite{krizhevsky09}. 
As shown in Fig.~\ref{fig:app_wave_downstream}, WeiT\waveicon-initialized models exhibit consistently faster loss convergence, indicating improved optimization efficiency and enhanced learning capability in downstream tasks.

\subsection{Effect of Weight Templates on Different Components}
We further analyze the impact of weight templates on the initialization of different model components in \textsc{Image Classification}, as detailed in Table~\ref{tab:ablation}.
The results show that most components can be effectively initialized using the structured knowledge encapsulated in the weight templates. 

In contrast, parameters associated with normalization layers and bias terms are more data-dependent and typically involve fewer parameters, making them easier to learn directly from data; thus, applying weight templates to these components is less critical. 
Notably, the attention mechanism—comprising the Query, Key, and Value projections—exhibits a stronger dependence on structured knowledge, underscoring its critical role in effective initialization.

\begin{table}[h]
    \centering
    \setlength{\tabcolsep}{2.5 mm}
    \caption{Ablation study on weight templates initializing different components of DeiT, with models consisting of 6 layers.}
    \vspace{-0.05in}
    \resizebox{0.68\linewidth}{!}{
        \begin{tabular}{@{}lccccccc@{}}
        \toprule[0.8pt]
        Methods & MSA-QKV & MSA-Proj. & FFN & Norm & Ti & S & B\\
        \midrule[0.8pt]
        He Init. & & & & & 40.6 & 49.4 & 53.1 \\
        \midrule
        \multirow{5}{*}{WeiT\waveicon} & & $\checkmark$ & &  & 50.2 & 57.9 & 60.4 \\
                                & $\checkmark$ & & & & 52.3 & 60.3 & 64.6 \\    
                                & & &$\checkmark$&  & 58.7 & 66.8 & 69.0 \\
                                & $\checkmark$ & $\checkmark$ & $\checkmark$ &  & 63.1 & 72.6 & 77.4 \\
                                & \cellcolor{blue!12}{$\checkmark$} & \cellcolor{blue!12}{$\checkmark$} & \cellcolor{blue!12}{$\checkmark$} & \cellcolor{blue!12}{$\checkmark$} & \cellcolor{blue!12}{\textbf{63.2}} & \cellcolor{blue!12}{\textbf{72.7}} & \cellcolor{blue!12}{\textbf{77.5}} \\
        \bottomrule[0.8pt]
        \end{tabular}
        }
    \label{tab:ablation}
    \vspace{-0.05in}
\end{table}

\subsection{Effect of Number and Shape of Weight Templates}
We analyze the influence of the number and size of weight templates on initializing downstream models. 
As shown in Table~\ref{tab:analysis}, reducing the number of templates leads to a significant performance degradation, suggesting that an insufficient number of parameters limits the capacity to capture comprehensive size-agnostic knowledge.

Furthermore, we investigate reducing the size of weight templates to increase flexibility for width expansion. 
However, such flexibility sacrifices the performance for the insufficient structured knowledge in weight templates.
In contrast, by aligning the base dimensionality of weight templates with the \textit{embedding dimensions} of the pre-trained model, WeiT~\waveicon facilitates the effective capture of structured knowledge, as illustrated in Fig.~\ref{fig:structure}a.

\begin{table}[h]
    \centering
    \setlength{\tabcolsep}{2 mm}
    \caption{Analysis of the Number and Shape of Weight Templates.}
    \vspace{-0.1in}
    \resizebox{0.7\linewidth}{!}{
        \begin{tabular}{@{}lccc||ccc||ccc@{}}
        \toprule[0.8pt]
        & \multicolumn{3}{c}{$L_6H_3$} & \multicolumn{3}{c}{$L_6H_6$} & \multicolumn{3}{c}{$L_6H_{12}$}\\
        \cmidrule(lr){2-4}
        \cmidrule(lr){5-7}
        \cmidrule(l){8-10}
        & Param. & Shape & Acc. & Param. & Shape & Acc. & Param. & Shape & Acc. \\
        \cmidrule[0.8pt]{1-10}
        $\downarrow$ Num. & 0.9 & 192$^2$ & 57.5 & 
        2.6 & 384$^2$ & 68.4 & 
        8.6 & 768$^2$ & 75.7 \\        
        $\downarrow$ Shape & 1.3 & 96$^2$ & 60.3 & 
        4.4 & 192$^2$ & 71.6 & 
        15.8 & 384$^2$ & 75.8 \\
        \midrule
        \cellcolor{blue!12}{WeiT\waveicon} & \cellcolor{blue!12}{1.3} & \cellcolor{blue!12}{192$^2$} & \cellcolor{blue!12}{\textbf{63.2}} & 
        \cellcolor{blue!12}{4.4} & \cellcolor{blue!12}{384$^2$} & \cellcolor{blue!12}{\textbf{72.7}} & 
        \cellcolor{blue!12}{15.8} & \cellcolor{blue!12}{768$^2$} & \cellcolor{blue!12}{\textbf{77.5}} \\
        \bottomrule[0.8pt]
        \end{tabular}
        }
    \label{tab:analysis}
    \vspace{-0.1in}    
\end{table}

\subsection{Supplementary Analysis on Embodied Control}
Due to space constraints in the main paper, Table~\ref{tab:larger}, Table~\ref{tab:constait}, and Table~\ref{tab:abl} report only the final rewards for \textsc{Embodied Control}. 
In this section, we provide the full training dynamics in Fig.~\ref{fig:app_large}, Fig.~\ref{fig:app_rule}, and Fig.~\ref{fig:app_wodrop}, offering a more comprehensive view of the learning behavior and performance evolution across different settings.

\begin{figure*}[t]
    \centering
    \includegraphics[width=\linewidth]{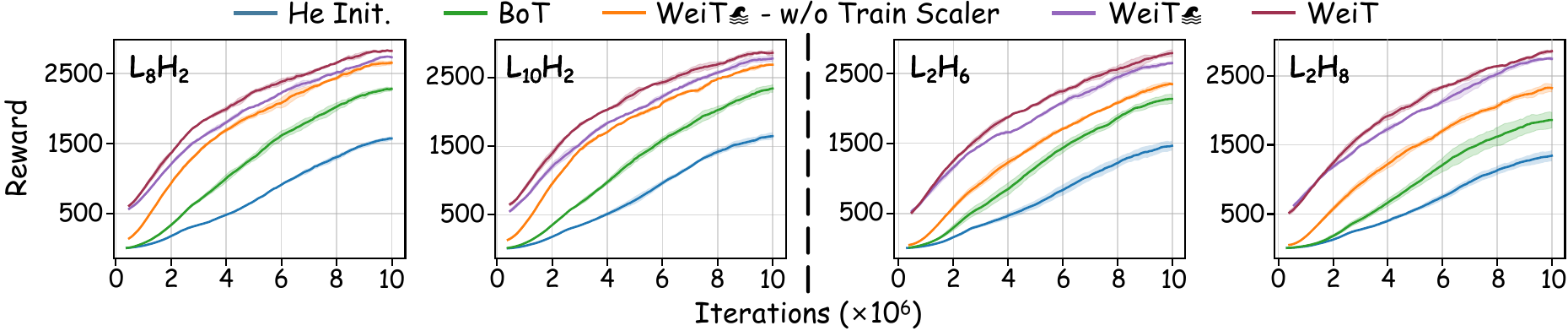}
    \vspace{-0.25in}
    \caption{Supplementary results on the performance of scale-up initialization for larger models, providing detailed evaluation on \textsc{Embodied Control} to complement Table~\ref{tab:larger}.}
    \label{fig:app_large}
\end{figure*}

\begin{figure*}[h]
    \centering
    \includegraphics[width=\linewidth]{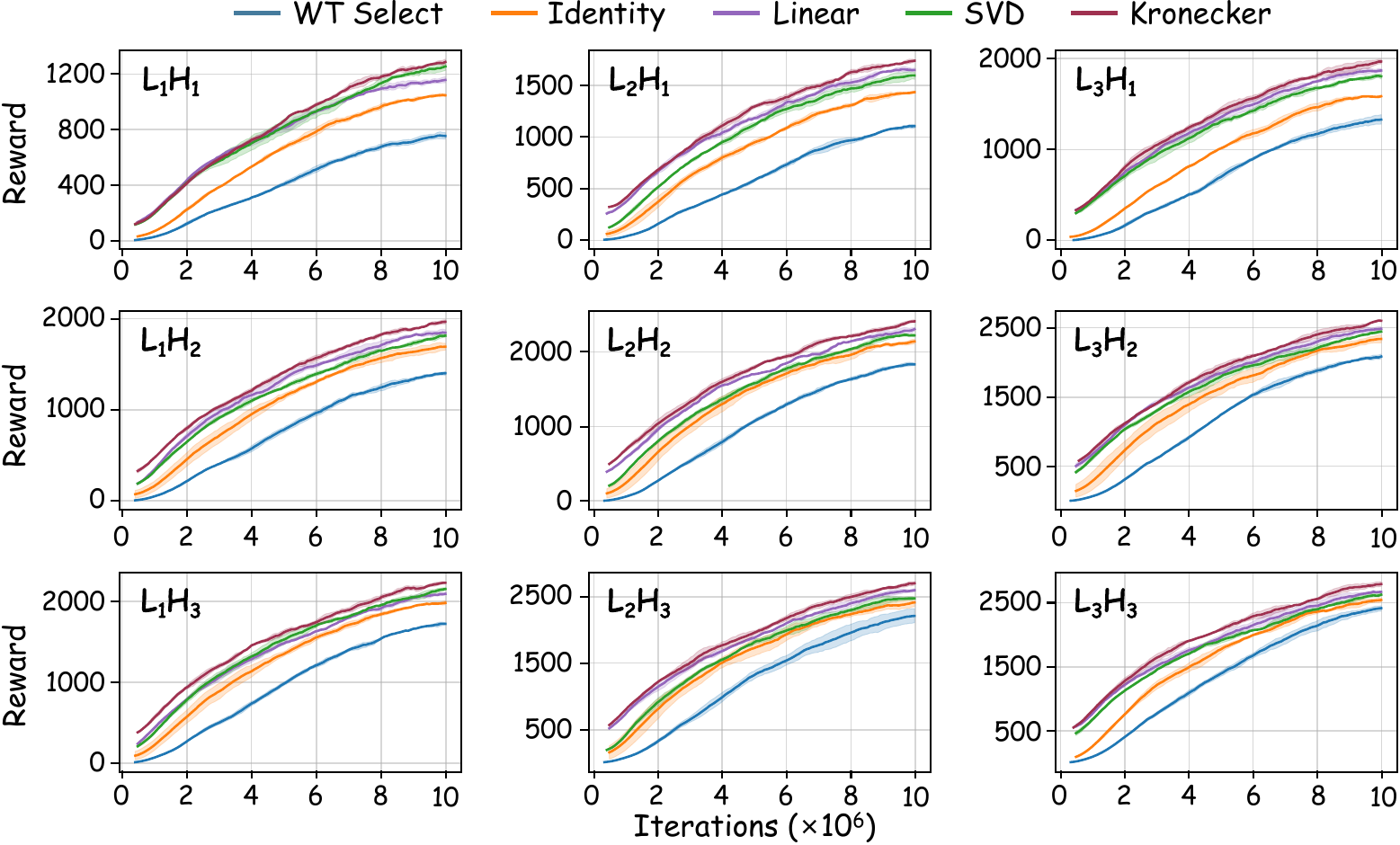}
    \vspace{-0.25in}
    \caption{Supplementary ablation results on constraint types for variable-sized model initialization, providing detailed evaluation on \textsc{Embodied Control} to complement Table~\ref{tab:constait}.}
    \label{fig:app_rule}
\end{figure*}

\begin{figure*}[h]
    \centering
    \includegraphics[width=\linewidth]{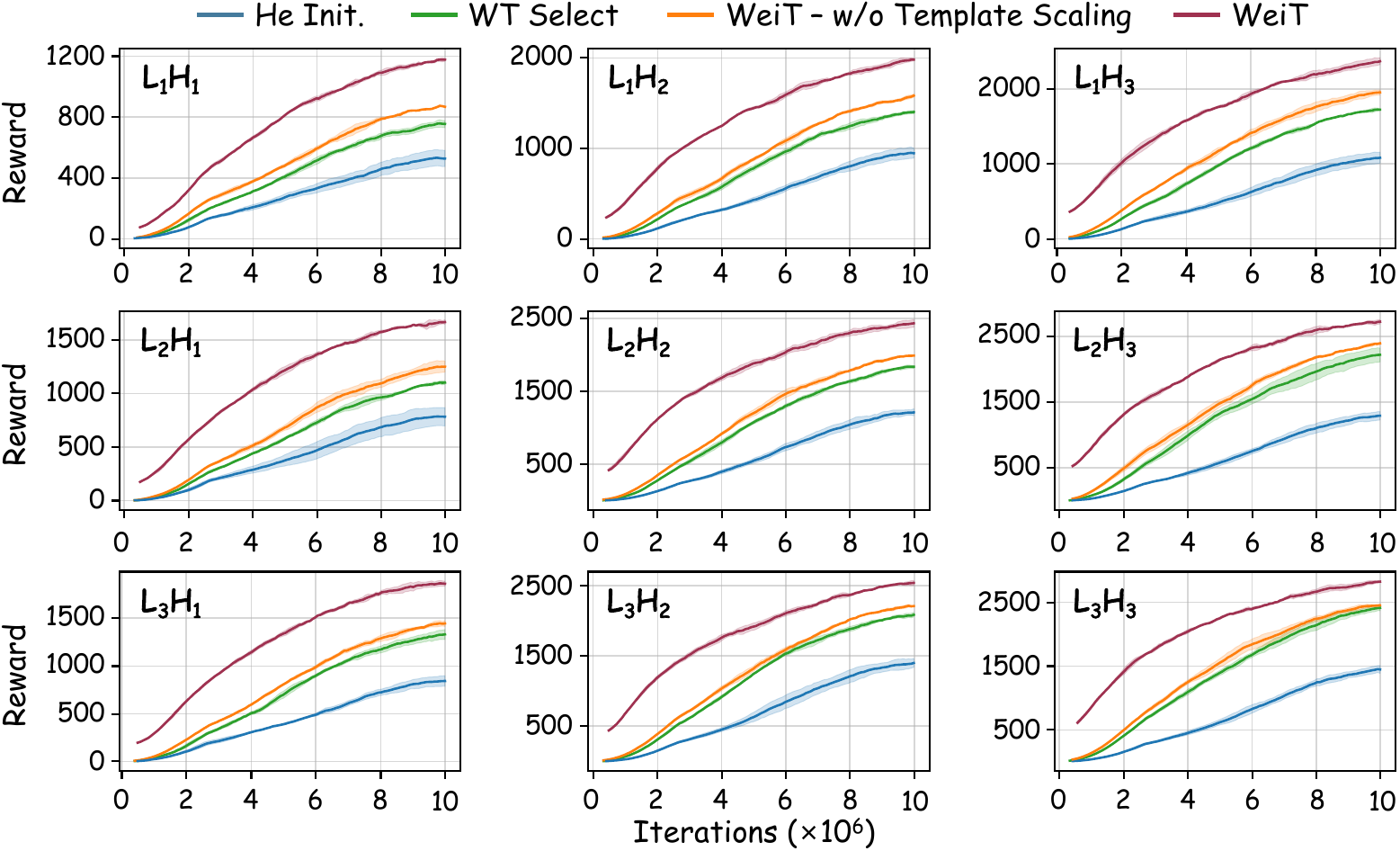}
    \vspace{-0.25in}
    \caption{Supplementary results on the performance of scale-up initialization for larger models, providing detailed evaluation on \textsc{Embodied Control} to complement Table~\ref{tab:larger}.}
    \label{fig:app_wodrop}
\end{figure*}

\section{Limitations and Future Work}
\label{sec:limitations_and_future_work}
While the proposed Constraint-based Pre-training paradigm, instantiated by WeiT, provides a scalable and effective initialization framework, it still has several limitations that suggest promising directions for future work.

\subsubsection{Limitations} 
First, the current Kronecker-based constraints in WeiT are primarily designed for standard dense operators, such as linear projections and regular convolutions. Extending them to more heterogeneous or structured operations, e.g., depth-wise convolutions, requires careful redesign to preserve their specific inductive biases.

Second, the parameter-efficient initialization relies on frozen weight templates $\mathcal{T}$, which improves stability and regularization but assumes sufficient cross-domain transferability. Under large domain shifts, e.g., from natural images to substantially different modalities such as medical images, frozen weight templates may limit the adaptability of the learned scalers $\mathcal{S}$. In such cases, relaxing the freezing of $\mathcal{T}$ may be beneficial to improve flexibility.

\subsubsection{Future Work}
Based on these observations, we highlight two directions for future research:
\begin{itemize}
    \item \textit{Cross-Modal Templates.} Extending WeiT to learn shared templates across modalities, e.g., vision and text, is a promising direction. In this setting, only lightweight modality-specific scalers need to be adapted, leading to more general and parameter-efficient models.

    \item \textit{Adaptive Structure Learning.} Instead of fixing the Kronecker-based constraints and template configurations, future work could explore learning these components in a data-driven manner, e.g., via differentiable structure search, to better adapt the factorization to different tasks and architectures.
\end{itemize}

\newpage

\vfill

\end{document}